\tikzstyle{vertex}=[circle, draw, fill=gray!80!white,thick,scale=1.2]
\tikzstyle{edge}=[draw=black, thick,-]
\definecolor{lgreen}{HTML}{4DA84D}
\definecolor{lred}{HTML}{007B82}
\definecolor{lrose}{HTML}{FFC300}
\definecolor{lblue}{HTML}{4169E1}
\definecolor{lorange}{HTML}{FF6F61}
\definecolor{lviolet}{HTML}{7A66CC}
\newcommand{\CC}[1][]{$\text{C\hspace{-.25ex}}^{_{_{_{++}}}}
\ifthenelse{\equal{#1}{}}{}{\text{\hspace{-.625ex}#1}}$}
\let\originalleft\left
\let\originalright\right
\renewcommand{\left}{\mathopen{}\mathclose\bgroup\originalleft}
\renewcommand{\right}{\aftergroup\egroup\originalright}
\newcommand{\xxmark}{
  \tikz[scale=0.3, line width=0.6pt, line cap=round] {
    \draw[thick] (0.2, 0.1) to[bend left= 20] (0.8, 0.9);
    \draw[thick] (0.3, 0.9) -- (0.55, 0.2);
  }%
}
\setlist[enumerate]{itemsep=0.2ex, topsep=0.5\topsep}
\setlist[description]{itemsep=0.2ex, topsep=0.5\topsep}
\setlist[itemize]{itemsep=0.2ex, topsep=0.5\topsep}
\def\thmt@refnamewithcomma #1#2#3,#4,#5\@nil{%
\@xa\def\csname\thmt@envname #1utorefname\endcsname{#3}%
\ifcsname #2refname\endcsname
\csname #2refname\expandafter\endcsname\expandafter{\thmt@envname}{#3}{#4}%
\fi
}
\newtheorem{theorem}{Theorem}
\newtheorem{lemma}[theorem]{Lemma}
\theoremstyle{definition}
\theoremstyle{remark}
\newcommand{\mG}{\mathbold{G}}
\newcommand{\cO}{\mathcal{O}}
\newcommand{\Rb}{\mathbb{R}}
\newcommand{\hb}{\mathbold{h}}
\newcommand{\UPD}{\mathsf{UPD}}
\newcommand{\AGG}{\mathsf{AGG}}
\newcommand{\new}[1]{\emph{#1}}
\newcommand{\trans}{^\intercal}
\renewcommand{\vec}[1]{\mathbold{#1}}
\newcommand{\oms}{\{\!\!\{}
\newcommand{\cms}{\}\!\!\}}
\newcommand{\tup}[1]{{(#1)}}
\icmltitlerunning{Towards graph neural networks for provably solving convex optimization problems}
\begin{document}

\twocolumn[
\icmltitle{Towards graph neural networks for provably solving convex optimization problems}

\icmlsetsymbol{equal}{*}

\begin{icmlauthorlist}
\icmlauthor{Chendi Qian}{yyy}
\icmlauthor{Christopher Morris}{yyy}
\end{icmlauthorlist}

\icmlaffiliation{yyy}{Department of Computer Science, RWTH Aachen University, Aachen, Germany}

\icmlcorrespondingauthor{Chendi Qian}{chendi.qian@log.rwth-aachen.de}

\icmlkeywords{graph neural networks, GNNs, MPNN, optimization, quadratic}

\vskip 0.3in
]

\printAffiliationsAndNotice{}

\begin{abstract}
Recently, message-passing graph neural networks (MPNNs) have shown potential for solving combinatorial and continuous optimization problems due to their ability to capture variable-constraint interactions. While existing approaches leverage MPNNs to approximate solutions or warm-start traditional solvers, they often lack guarantees for feasibility, particularly in convex optimization settings. Here, we propose an iterative MPNN framework to solve convex optimization problems with provable feasibility guarantees. First, we demonstrate that MPNNs can provably simulate standard interior-point methods for solving quadratic problems with linear constraints, covering relevant problems such as SVMs. Secondly, to ensure feasibility, we introduce a variant that starts from a feasible point and iteratively restricts the search within the feasible region. Experimental results show that our approach outperforms existing neural baselines in solution quality and feasibility, generalizes well to unseen problem sizes, and, in some cases, achieves faster solution times than state-of-the-art solvers such as Gurobi. 
\end{abstract}

\section{Introduction}

\new{Message-passing graph neural networks} (MPNNs) have recently been widely applied to optimization problems, including continuous and combinatorial domains \citep{bengio2021machine, cappart2023combinatorial, scavuzzo2024machine}. Due to their inherent ability to capture structured data, MPNNs are well-suited as proxies for representing and solving such problems, e.g., in satisfiability problems, literals and clauses can be modeled as different node types within a bipartite graph \citep{selsam2018learning}. At the same time, in \new{linear programming} (LP), the variable-constraint interaction naturally forms a bipartite graph structure~\citep{Gas+2019}. Consequently, MPNNs are used as a lightweight proxy to solve such optimization problems in a data-driven fashion.

Most \new{combinatorial optimization} (CO) problems are \textsf{NP}-{hard}~\citep{Aus+1999}, making exact solutions computationally intractable for larger instances. To address this, it is often more practical to relax the integrality constraints and solve the corresponding continuous optimization problems instead. This approach involves formulating a \new{continuous relaxation} as a proxy for the original problem. After solving the relaxed problem, the resulting continuous solutions guide the search in the discrete solution space \citep{Schrijver86}. For example, solving the underlying \new{linear programming relaxation} in \new{mixed-integer linear programming} (MILP) is a crucial step for scoring candidates in strong branching \citep{achterberg2005branching}, also using MPNNs~\citep{Gas+2019,pmlr-v238-qian24a}. Recent studies have extended MPNN-based modeling to other continuous optimization problems, especially \new{quadratic programming} (QP)~\citep{li2024pdhg,gao2024ipm,xiong2024neuralqp}. 
Some existing methods integrate neural networks with traditional solvers \citep{pmlr-v202-fan23d,jung2022learning,li2022learning2reform,li2024pdhg,liu2024learningpivot}. These approaches either replace components of the solver with neural networks or use neural networks to warm-start the solver. In the former case, the methods remain limited by the solver's framework. In contrast, in the latter, the solver is still required to solve a related problem to produce feasible and optimal solutions.

However, the above works mainly aim to predict a near-optimal solution without ensuring the feasibility of such a solution. For example, \citet{fioretto2020predicting,pmlr-v238-qian24a} propose penalizing constraint violations by adding an extra loss term during training without offering strict feasibility guarantees. Existing strategies typically fall into two categories: relying on solvers to produce feasible solutions or projecting neural network outputs into the feasible region. The first approach, where the neural network serves primarily as a warm-start for the solver, still relies on the solver for final solutions. The second approach, which involves projecting outputs into the feasible region, often requires solving an additional optimization problem and can be efficient only for specific cases. Moreover, this projection may degrade solution quality \citep{li2024onsmallgnn}. More recently, approaches leveraging Lagrangian duality theory have emerged, aiming to design neural networks capable of producing dual-feasible solutions \citep{fioretto2021lagrangian,klamkin2024dual,park2023ss_pdl}. While promising, these methods often require many iterations and still lack guarantees for strict feasibility.

\paragraph{Present work} We propose an MPNN architecture that directly outputs high-quality, feasible solutions to convex optimization problems, closely approximating the optimal ones. Building on \citet{pmlr-v238-qian24a}, which pioneered using MPNNs for simulating polynomial-time \new{interior-point methods} (IPM) for LPs, we extend this approach to \new{linearly constrained quadratic programming} (LCQP), covering relevant problems such SVMs. Unlike \citet{pmlr-v238-qian24a}, where each MPNN layer corresponds to an IPM iteration, our fixed-layer MPNN predicts the next interior point from the current one, decoupling the number of MPNN layers from IPM iterations. We further prove that such MPNNs can simulate IPMs for solving LCQPs. In addition, we incorporate a computationally lightweight projection step that restricts the search direction to the feasible region to ensure feasibility, leveraging the constraints' linear algebraic structure. Experiments show our method outperforms neural network baselines, generalizes well to larger, unseen problem instances, and, in some cases, achieves faster solution times than state-of-the-art exact solvers such as Gurobi \citep{gurobi}.

In summary, our contributions are as follows.
\begin{enumerate}
    \item We propose an MPNN-based approach for predicting solutions of LCQP instances.
    \item We theoretically show that an MPNN with $\mathcal{O}(1)$ distinct layers, where each layer has unique weights, and $\mathcal{O}(m+n)$ total message-passing steps and each step executes a layer that may be reused across steps, can simulate an IPM for LCQP.
    \item We introduce an efficient and feasibility-guaranteed variant that incorporates a projection step to ensure the predicted solutions strictly satisfy the constraints of the LCQP.
    \item Empirically, we show that both methods achieve high-quality predictions and outperform traditional QP solvers in solving time for certain problems. Furthermore, our approach can generalize to larger, unseen problem instances in specific cases.
\end{enumerate}

\subsection{Related work}

\begin{figure*}
\begin{center}
\includegraphics[scale=1.5]{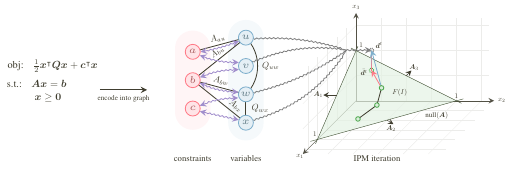}
\end{center}
\vspace{-20pt}
\caption{Overview of our MPNN architectures for solving LCQPs.}
\end{figure*}

Here, we discuss relevant related work.

\paragraph{MPNNs} MPNNs~\citep{Gil+2017,Sca+2009} have been extensively studied in recent years. Notable architectures can be categorized into spatial models \citep{Duv+2015,Ham+2017,bresson2017residual,Vel+2018,xu2018how} and spectral MPNNs \citep{Bru+2014,defferrard2016convolutional,Kip+2017,Lev+2019,monti2018motifnet,geisler2024spatio}. The former conforms to the message-passing framework of~\citet{Gil+2017}, while the latter leverage the spectral property of the graph. 

\paragraph{Machine learning for convex optimization}
In this work, we focus on convex optimization and direct readers interested in combinatorial optimization problems to the surveys \citet{bengio2021machine,cappart2023combinatorial,peng2021graph}; see a more detailed discussion on related work in \cref{sec:more_literature}.

A few attempts have been made to apply machine learning to LPs. \citet{li2022learning2reform} learned to reformulate LP instances, and \citet{pmlr-v202-fan23d} learned the initial basis for the simplex method, both aimed at accelerating the solver. \citet{liu2024learningpivot} imitated simplex pivoting, and \citet{pmlr-v238-qian24a} proposed using MPNNs to simulate IPMs for LPs \citep{nocedal2006numerical}. \citet{li2024pdhg} introduced PDHG-Net to approximate and warm-start the \new{primal-dual hybrid gradient algorithm} (PDHG)  \citep{applegate2021practical,lu2024first}. \citet{li2024onsmallgnn} bounded the depth and width of MPNNs while simulating a specific LP algorithm.
Quadratic programming (QP) has seen limited standalone exploration. Notable works include \citet{bonami2018learning}, who analyzed solver behavior to classify linearization needs, and \citet{pmlr-v157-getzelman21a}, who used reinforcement learning for solver selection. Others accelerated solvers by learning step sizes \citep{ichnowski2021accelerating,jung2022learning} or warm-started them via end-to-end learning \citep{sambharya2023end}. Graph-based representations have been applied to quadratically constrained quadratic programming (QCQP) \citep{wu2024representingqcqp,xiong2024neuralqp}, while \citet{gao2024ipm} extended \citet{pmlr-v238-qian24a} to general nonlinear programs.
On the theoretical side, \citet{chen2022representing,chen2024qp,wu2024representingqcqp} examined the expressivity of MPNNs in approximating LP and QP solutions, offering insights into their capabilities and limitations.

See \Cref{sec:more_literature} for a discussion on machine learning for constrained optimization.

\subsection{Background}

Here, we introduce notation, MPNNs, and convex optimization problems.

\paragraph{Notation} Let $\mathbb{N} \coloneqq \{0, 1, 2, \dots \}$. For $n \geq 1$, let $[n] \coloneqq \{ 1, \dotsc, n \} \subset \mathbb{N}$. We use $\{\!\!\{ \dots\}\!\!\}$ to denote multisets, i.e., the generalization of sets allowing for multiple instances for each of its elements. A \new{graph} $G$ is a pair $(V(G),E(G))$ with \emph{finite} sets of
\new{vertices} or \new{nodes} $V(G)$ and \new{edges} $E(G) \subseteq \{ \{u,v\} \subseteq V(G) \mid u \neq v \}$. For ease of notation, we denote the edge $\{u,v\}$ in $E(G)$ by $(u,v)$ or $(v,u)$. Throughout the paper, we use standard notation, e.g., we denote the \new{neighborhood} of a node $v$ by $N(v)$; see~\cref{notation_app} for details. By default, a vector $\vec{x} \in \mathbb{R}^{d}$ is a column vector.

\paragraph{MPNNs} Intuitively, MPNNs learn a vectorial representation, i.e., a $d$-dimensional real-valued vector, representing each vertex in a graph by aggregating information from neighboring vertices. Let $G$ be an $n$-order attributed graph with node feature matrix $\vec{X} \in \mathbb{R}^{n \times d}$, for $d > 0$, following, \citet{Gil+2017} and \citet{Sca+2009}, in each layer, $t > 0$,  for vertex $v \in V(G)$, we compute a vertex feature $\hb_{v}^\tup{t} \coloneqq$
\begin{equation*}
	\UPD^\tup{t}\Bigl(\hb_{v}^\tup{t-1},\AGG^\tup{t} \bigl(\oms \hb_{u}^\tup{t-1}
	\mid u\in N(v) \cms \bigr)\Bigr) \in \Rb^{d},
\end{equation*}
where  $\UPD^\tup{t}$ and $\AGG^\tup{t}$ may be parameterized functions, e.g., neural networks, and $\hb_{v}^\tup{0} \coloneq \vec{X}_v$. 

\paragraph{Convex optimization problems} In this work, we focus on convex optimization problems, namely LCQPs, of the following form,
\begin{equation}
\begin{aligned}
\setlength{\abovedisplayskip}{6pt}
\setlength{\belowdisplayskip}{6pt}
\label{eq:standard_qp}
\min_{\vec{x} \in \mathbb{Q}^n_{\geq 0}}  \frac{1}{2} \vec{x}\trans \vec{Q} \vec{x} + \vec{c}\trans \vec{x} \text{ such that } \vec{A} \vec{x} = \vec{b}, \vec{x} \geq \vec{0}.
\end{aligned}
\end{equation}
Here, an LCQP instance $I$ is a tuple $(\vec{Q}, \vec{A}, \vec{b}, \vec{c})$, where $\vec{Q} \in \mathbb{Q}^{n \times n}$ and $\vec{c} \in \mathbb{Q}^n$ are the quadratic and linear coefficients of the objective, $\vec{A} \in \mathbb{Q}^{m \times n}$ and $\vec{b} \in \mathbb{Q}^m$ form the constraints. We assume that the quadratic matrix $\vec{Q}$ is positive semi-definite (PSD), i.e., $\vec{v}\trans \vec{Q} \vec{v} \geq 0$, for all $\vec{v} \in \mathbb{R}^n$; otherwise, the problem is non-convex. We assume $m \leq n$ and that $\vec{A}$ has full rank $m$; otherwise, either the problem is infeasible, or some linear constraints can be eliminated using resolving techniques~\citep{Andersen1995PresolvingIL}. Furthermore, if constraints are inequalities, we can always transform them into equalities by adding slack variables \citep{boyd2004convex}. We denote the feasible region of the instance $I$ as $F(I) \coloneq \{ \vec{x} \in \mathbb{Q}^n \mid \vec{A}_j \vec{x} = \vec{b}_j \text{ for } j \in [m] \text{ and } \vec{x}_i \geq 0 \text{ for } i \in [n] \}$. The optimal solution $\vec{x}^* \in F(I)$ is defined such as $\frac{1}{2} \vec{x}\trans \vec{Q} \vec{x} + \vec{c}\trans \vec{x} \geq \frac{1}{2} {\vec{x}^*}\trans \vec{Q} \vec{x}^* + \vec{c}\trans \vec{x}^*, $\text{ for }$ x \in F(I)$, which we assume to be unique. 

\section{Towards MPNNs for solving convex optimization problems}

Here, we provide a detailed overview of our MPNN architectures. We first describe how to represent an LCQP instance as a graph. Next, we present an MPNN architecture that provably simulates IPMs for solving LCQPs. Finally, we introduce a null space projection method to ensure feasibility throughout the search. %

\paragraph{Encoding LCQP instances as graphs} \label{sec:graph_construct}
Previous works have demonstrated that LP instances can be effectively encoded as a bipartite or tripartite graph~\citep{Gas+2019,khalil2022mip,pmlr-v238-qian24a}. In the case of LCQPs, following~\citet{chen2024qp}, we encode a given LCQP instance $I$ into a graph $G(I)$ with constraint node set $C(I)$ and variable node set $V(I)$. We define the constraint-variable node connections via the non-zero entries of the $\vec{A}$ matrix and define the edge features via $\vec{e}_{cv} \coloneq \vec{A}_{cv}$, for $v \in V(I), c \in C(I)$. In addition, the constraint vector $\vec{b}$ acts as features for the constraint nodes, i.e., we design the constraint node feature matrix as $\vec{C} \coloneq \text{reshape}\left(\vec{b}\right) \in \mathbb{Q}^{m \times 1}$. The feature matrix for the variable nodes is set to the objective vector $\vec{V} \coloneq \text{reshape} \left(\vec{c}\right) \in \mathbb{Q}^{n \times 1}$. To encode the $\vec{Q}$ matrix, we follow \citet{chen2024qp} and encode the non-zero entries $\vec{Q}_{vu}$ as edges between variable nodes $v,u$, and use the value $\vec{Q}_{vu}$ as the edge attribute, $\vec{e}_{vu} \coloneq \vec{Q}_{vu}, v,u \in V(I)$. Moreover, we add a global node $\{g(I)\}$ to $G(I)$, similar to \citet{pmlr-v238-qian24a} for LPs, and connect it to all the variable and constraint nodes with uniform edge features $\vec{e}_{cg} \coloneq 1, c \in C(I)$, and $\vec{e}_{vg} \coloneq 1, v \in V(I)$. 

\paragraph{MPNNs for simulating IPMs} \label{sec:ipm} Here, we derive an MPNN architecture to provably simulate IPMs for solving LCQPs. For more details on IPM for LCQP, we refer to \cref{sec:ipm_derive}. Similar to \citet{gao2024ipm}, we decouple the number of layers $L$ of the MPNN and the number of iterations $T$. Additionally, the learnable parameters of the $L$-layer MPNN are shared across the iterations. At each iteration $t \in [T]$, the $L$-layer MPNN takes the graph $G(I)$ together with the current solution $\vec{x}^{(t-1)}$ as input. The MPNN outputs a scalar per variable node as the prediction for the next interior point $\vec{x}^{(t)}$, with an arbitrary initial solution $\vec{x}^{(0)} \geq \bm{0}$. We denote the node embedding at layer $l \in [L]$ and iteration $t \in [T]$ as $\vec{h}_{\circ}^{(l,t)}, \circ \in V(I) \cup C(I) \cup \{g(I)\}$. At the beginning of each iteration, we initialize the node embeddings as
\begin{equation}
\label{eq:tri_MPNN_init}
\begin{aligned}
\setlength{\abovedisplayskip}{6pt}
\setlength{\belowdisplayskip}{6pt}
    \vec{h}_c^{(0,t)} \coloneqq & \vec{C}_c \in \mathbb{Q}, c \in C(I),\\
    \vec{h}_v^{(0,t)} \coloneqq & \textsf{CONCAT}\left(\vec{V}_v,  \vec{x}^{(t-1)}_v \right) \in \mathbb{Q}^{2}, v \in V(I), \\
    \vec{h}_g^{(0,t)} \coloneqq & \bm{0}.
\end{aligned}
\end{equation}

Each message passing layer consists of three sequential steps, similar to \citet{pmlr-v238-qian24a}. First, the embeddings of the constraint nodes are updated, using the embeddings of the variable nodes and the global node $\vec{h}_c^{(l,t)} \coloneqq$
\begin{equation}
\label{eq:tri_MPNN_update1}
\begin{aligned}
    &\textsf{UPD}^{(l)}_{\text{c}}\Bigl[  \vec{h}_c^{(l-1,t)}, \\
    &\textsf{MSG}^{(l)}_{\text{v} \rightarrow \text{c}}\left(\{\!\!\{ (\vec{h}_v^{(l-1,t)}, \vec{e}_{cv}) \mid v \in {N}\left(c \right) \cap V(I) \}\!\!\}  \right), \\
    &\textsf{MSG}^{(l)}_{\text{g} \rightarrow \text{c}}\left( \vec{h}_g^{(l-1,t)}, \vec{e}_{cg} \right) \Bigr] \in \mathbb{Q}^d.
\end{aligned}
\end{equation}

\begin{algorithm}[t]
\caption{Training phase of our MPNN architecture simulating IPM steps. 
}
\label{alg:ipm_qp_MPNN_train}
\begin{algorithmic}[1]
    \REQUIRE An LCQP instance $I = (\vec{Q}, \vec{A}, \vec{b}, \vec{c})$, an arbitrary initial solution $\vec{x}^{(0)}$, an $L$-layer $\textsf{MPNN}$ architecture following \cref{eq:tri_MPNN_init,eq:tri_MPNN_update1,eq:tri_MPNN_update2,eq:tri_MPNN_update3,eq:tri_MPNN_pred}, number of iterations $T$, the ground-truth interior points $\{\vec{x}^{*(t)} \mid t \in [T] \}$. \\
    \ENSURE Supervised loss $\mathcal{L}$.
    \vspace{3pt}

	\FOR{$t \in [T]$}
        \vspace{3pt}
	    \STATE $\vec{x}^{(t)} \leftarrow \textsf{MPNN}\left(I, \vec{x}^{(t-1)} \right) $
            \STATE Calculate $\mathcal{L}^{(t)}(\vec{x}^{*(t)}, \vec{x}^{(t)})  $ as in \cref{eq:ipm_loss}
	\ENDFOR
    \RETURN The loss $\mathcal{L} \leftarrow \frac{1}{T} \sum_{t=1}^{T} \mathcal{L}^{(t)}(\vec{x}^{*(t)}, \vec{x}^{(t)})$
\end{algorithmic}
\end{algorithm}

\begin{algorithm}[t]
\caption{Inference phase of our MPNN architecture simulating IPM steps. 
}
\label{alg:ipm_qp_MPNN_val}
\begin{algorithmic}[1]
    \REQUIRE An LCQP instance $I = (\vec{Q}, \vec{A}, \vec{b}, \vec{c})$, an arbitrary initial solution $\vec{x}^{(0)}$, an $L$-layer $\textsf{MPNN}$ architecture following \cref{eq:tri_MPNN_init,eq:tri_MPNN_update1,eq:tri_MPNN_update2,eq:tri_MPNN_update3,eq:tri_MPNN_pred}, number of iterations $T$, threshold  $\delta$ for solution selection. \\
    \ENSURE Best found solution $\vec{x}$.
    \vspace{3pt}
        \STATE Define $\text{obj}(\vec{x}) \coloneq \frac{1}{2} \vec{x}\trans \vec{Q} \vec{x} + \vec{c}\trans \vec{x}$
        \STATE Define $\text{cons}(\vec{x}) \coloneq \max \{ \left|\vec{A}_i \vec{x} - \vec{b}_i\right| \mid i \in [m]\}$
        \IF{$\text{cons}(\vec{x}^{(0)}) < \delta$}
            \STATE $\vec{x} \leftarrow \vec{x}^{(0)}$
        \ENDIF
	\FOR{$t \in [T]$}
        \vspace{3pt}
	    \STATE $\vec{x}^{(t)} \leftarrow \textsf{MPNN}\left(I, \vec{x}^{(t-1)} \right)$
            \IF{$\text{cons}(\vec{x}^{(t)}) < \delta$ and $\text{obj}\left(\vec{x}^{(t)}\right) < \text{obj}\left(\vec{x}\right)$}
                \STATE $\vec{x} \leftarrow \vec{x}^{(t)}$
            \ENDIF
	\ENDFOR
    \RETURN $\vec{x}$.
\end{algorithmic}
\end{algorithm}

Next, the global node embeddings are updated based on the embeddings of the variable nodes and the most recently updated constraint node embeddings $\vec{h}_g^{(l,t)} \coloneqq$
\begin{equation}
\label{eq:tri_MPNN_update2}
\begin{aligned}
     &\textsf{UPD}^{(l)}_{\text{g}}\Bigl[  \vec{h}_g^{(l-1,t)}, \\
    &\textsf{MSG}^{(l)}_{\text{v} \rightarrow \text{g}}\left(\{\!\!\{ (\vec{h}_v^{(l-1,t)}, \vec{e}_{vg}) \mid v \in V(I) \}\!\!\}  \right), \\
    &\textsf{MSG}^{(l)}_{\text{c} \rightarrow \text{g}}\left(\{\!\!\{ (\vec{h}_c^{(l,t)}, \vec{e}_{cg}) \mid c \in C(I) \}\!\!\}  \right) \Bigr] \in \mathbb{Q}^d.
\end{aligned}
\end{equation}
Finally, the embeddings of the variable nodes are updated by aggregating information from their neighboring variable nodes and the updated constraint node and global node embeddings $ \vec{h}_v^{(l,t)} \coloneqq$
\begin{equation}
\label{eq:tri_MPNN_update3}
\begin{aligned}
    &\textsf{UPD}^{(l)}_{\text{v}}\Bigl[  \vec{h}_v^{(l-1,t)}, \\
    &\textsf{MSG}^{(l)}_{\text{v} \rightarrow \text{v}}\left(\{\!\!\{ (\vec{h}_u^{(l-1,t)}, \vec{e}_{uv}) \mid u \in {N}\left(v \right) \cap V(I) \}\!\!\}  \right),\\
    &\textsf{MSG}^{(l)}_{\text{c} \rightarrow \text{v}}\left(\{\!\!\{ (\vec{h}_c^{(l,t)}, \vec{e}_{cv}) \mid c \in {N}\left(v \right) \cap C(I) \}\!\!\}  \right), \\
    &\textsf{MSG}^{(l)}_{\text{g} \rightarrow \text{v}}\left(\vec{h}_g^{(l,t)}, \vec{e}_{vg}\right)\Bigr] \in \mathbb{Q}^d.
\end{aligned}
\end{equation}
Finally, we use a multi-layer perceptron for predicting the current variable assignments,
\begin{equation}
\label{eq:tri_MPNN_pred}
    \vec{x}_v^{(t)} \coloneqq \textsf{MLP}\left( \vec{h}_v^{(L,t)}\right) \in \mathbb{Q},
\end{equation}
whose output vector $\vec{x}^{(t)} \in \mathbb{Q}^n$ serves as the prediction of the next interior point. 

For training, we use the mean-squared error between our intermediate predictions $\vec{x}^{(t)}, t \in [T]$, and the ground truth interior point given by the IPM $\vec{x}^{*(t)}, t \in [T]$,
\begin{equation}
\label{eq:ipm_loss}
\mathcal{L}^{(t)} \mleft(\vec{x}^{*(t)}, \vec{x}^{(t)} \mright) \coloneq \lVert \vec{x}^{*(t)} - \vec{x}^{(t)} \rVert_2^2.
\end{equation}
During training, we pre-set the iterations and supervise all predicted interior points. During inference, however, our framework allows for an arbitrary number of iterations and picks the best solution simultaneously. The training and inference processes are summarized in \cref{alg:ipm_qp_MPNN_train} and \cref{alg:ipm_qp_MPNN_val}.

Now, the following result shows that there exists an MPNN architecture, $f_{\textsf{MPNN,IPM}}$, of $\cO(1)$ layers and $\cO(m)$ message passing steps in the form of \cref{eq:tri_MPNN_init,eq:tri_MPNN_update1,eq:tri_MPNN_update2,eq:tri_MPNN_update3}, that is capable of simulating the IPM algorithm for LCQPs. For the detailed proof, please see \cref{sec:proof_appendix}.

\begin{theorem}
\label{thm:mpnn_can_ipm}
There exists an MPNN $f_{\textsf{MPNN,IPM}}$ composed of $\cO(1)$ layers and $\cO(m+n)$ successive message-passing steps that reproduces each iteration of the IPM algorithm for LCQPs, in the sense that for any LCQP instance $I = \left(\vec{Q}, \vec{A},\vec{b},\vec{c} \right)$ and any primal-dual point $\left(\vec{x}^{(t)}, \boldsymbol{\lambda}^{(t)}, \vec{s}^{(t)}\right)$ with $t > 0$, $f_{\textsf{MPNN,IPM}}$ maps the graph $G(I)$ carrying $[\vec{x}^{(t-1)}, \vec{s}^{(t-1)}]$ on the variable nodes, $[\boldsymbol{\lambda}^{(t-1)}]$ on the constraint nodes, and $[\mu, \sigma]$ on the global node to the same graph $G(I)$ carrying the output $[\vec{x}^{(t)}, \vec{s}^{(t)}]$ and $[\boldsymbol{\lambda}^{(t)}]$ of \cref{alg:ipm-practice} on the variable and constraint nodes, respectively.
\end{theorem}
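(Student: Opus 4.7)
The plan is to follow the blueprint of \citet{pmlr-v238-qian24a} for LPs and extend it to the quadratic setting by exploiting the extra variable-variable edges carrying $\vec{Q}$. First, I would unfold one iteration of the practical IPM of \cref{alg:ipm-practice} into its elementary linear-algebra primitives: (i) computing the three residuals $\vec{r}_p = \vec{A}\vec{x}-\vec{b}$, $\vec{r}_d = \vec{Q}\vec{x}+\vec{c}-\vec{A}\trans\boldsymbol{\lambda}-\vec{s}$, and $\vec{r}_c = \vec{X}\vec{S}\vec{e} - \sigma\mu\vec{e}$; (ii) solving the Newton system for the search direction $(\Delta\vec{x},\Delta\boldsymbol{\lambda},\Delta\vec{s})$; (iii) computing the step length via ratio tests and an inner-product-based update of $\mu$; and (iv) performing the affine updates $\vec{x}\leftarrow\vec{x}+\alpha\Delta\vec{x}$, $\boldsymbol{\lambda}\leftarrow\boldsymbol{\lambda}+\alpha\Delta\boldsymbol{\lambda}$, $\vec{s}\leftarrow\vec{s}+\alpha\Delta\vec{s}$, together with the rescaling $\mu \leftarrow \vec{x}\trans\vec{s}/n$.

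Second, I would argue that each primitive except the linear solve fits into $\mathcal{O}(1)$ distinct layers in the form of \cref{eq:tri_MPNN_update1,eq:tri_MPNN_update2,eq:tri_MPNN_update3}. The mat-vec products $\vec{A}\vec{x}$ and $\vec{A}\trans\boldsymbol{\lambda}$ are each a single $\text{v}\!\to\!\text{c}$ (respectively $\text{c}\!\to\!\text{v}$) message that multiplies the source feature by the scalar edge weight $\vec{e}_{cv}=\vec{A}_{cv}$ and sums; the product $\vec{Q}\vec{x}$ is a single $\text{v}\!\to\!\text{v}$ message on the edges carrying $\vec{e}_{uv}=\vec{Q}_{uv}$; the diagonal operations $\vec{X}\vec{S}\vec{e}$ and the entrywise updates are node-local and implementable by a single $\textsf{UPD}$; and scalar reductions such as $\mu=\tfrac{1}{n}\vec{x}\trans\vec{s}$, the ratio-test minimum, and the inner products needed below are computed on the global node through one $\text{v}\!\to\!\text{g}$ (or $\text{c}\!\to\!\text{g}$) message with a sum or min aggregator, then broadcast back by a $\text{g}\!\to\!\text{v}/\text{c}$ message in the next layer. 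Each such primitive involves only polynomial functions of bounded degree over the bounded ranges of the IPM iterates, so ReLU-MLPs realize them exactly on the support in question, via the universal-approximation argument used in \citet{pmlr-v238-qian24a}.

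The main obstacle, and the step that consumes the $\mathcal{O}(m+n)$ message-passing budget, is the Newton solve. I would block-eliminate $\Delta\vec{s}$ to reduce to the symmetric saddle-point system
\[
\begin{bmatrix} \vec{Q}+\vec{X}^{-1}\vec{S} & \vec{A}\trans \\ \vec{A} & \vec{0} \end{bmatrix}\begin{bmatrix} \Delta\vec{x} \\ -\Delta\boldsymbol{\lambda} \end{bmatrix} = \begin{bmatrix} \vec{r}_1 \\ \vec{r}_2 \end{bmatrix},
\]
of order $n+m$, which is non-singular under the paper's standing assumptions ($\vec{Q}\succeq 0$, $\vec{A}$ full row rank, $\vec{x},\vec{s}>0$). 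I would then simulate a Krylov iteration---concretely MINRES, or equivalently Lanczos-based tridiagonalization followed by back-substitution---in which each step requires exactly one product with this operator, one pair of inner products, and one vector update. All of these are instances of the primitives above and therefore share the same constant-size pool of learnable layers, reused $\mathcal{O}(n+m)$ times. In exact arithmetic, MINRES terminates in at most $n+m$ steps, delivering the exact Newton direction; the iterate counters and recurrence scalars $(\alpha_k,\beta_k)$ live on the global node, keeping the layer count independent of the instance.

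Composing the constant block that computes residuals and the affine update with the $\mathcal{O}(m+n)$ Krylov sweeps yields one IPM iteration in $\mathcal{O}(m+n)$ successive message-passing steps using $\mathcal{O}(1)$ distinct weight matrices, as claimed. The hard part will be verifying that all operations---in particular the division $\vec{X}^{-1}\vec{S}$ that appears as the Schur-complement scaling and the minimum in the ratio test---can be represented exactly by the MLPs on the compact range of iterates guaranteed by the IPM's polynomial-time convergence; I expect to handle this exactly as in \citet{pmlr-v238-qian24a}, by first showing a uniform boundedness of the iterates along the central path and then appealing to universal approximation of continuous functions on this compact set.
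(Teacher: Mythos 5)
Your proposal follows essentially the same route as the paper's proof: block-eliminate $\Delta\vec{s}$ to obtain the order-$(n+m)$ augmented saddle-point system, simulate a Krylov solver whose per-iteration work (one operator product via the $\vec{A}$- and $\vec{Q}$-edges, inner products reduced on the global node, local vector updates) costs a constant pool of message-passing layers reused $\cO(n+m)$ times, and wrap this with $\cO(1)$ additional layers for the residuals, the $\epsilon$-smoothed ratio test, and the affine updates. The only substantive differences are that the paper uses conjugate gradients (asserting positive definiteness of the augmented matrix via a Schur-complement argument) and writes out fully explicit closed-form message/aggregation/update functions rather than invoking universal approximation, whereas you propose MINRES --- arguably the safer choice for this symmetric indefinite system --- and defer exact representability to an approximation argument on a compact set of iterates.
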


\paragraph{Ensuring feasible solutions} \label{sec:feas}
In supervised learning, ensuring feasibility is challenging due to training and validation errors in \cref{alg:ipm_qp_MPNN_train} and \cref{alg:ipm_qp_MPNN_val}. Correcting infeasible solutions typically involves additional optimization, which adds computational overhead and may degrade solution quality. To address this, we propose an iterative method that maintains strict feasibility throughout optimization. Starting from a feasible point $\vec{x}^{(0)}$, the search is constrained to the feasible region $\vec{x}^{(t)} \in F(I)$, for all $t \in [T]$. We also discard intermediate steps from the expert solver, focusing only on the optimal solution to enhance flexibility.
This section details our approach, answering two key questions: (1) How is an initial feasible solution constructed? (2) How can feasibility be preserved during updates? We also describe MPNN modifications and the search algorithm.

We can obtain a feasible initial solution by solving a trivial LP with the given constraints and a simple objective, such as $\vec{c} = \bm{0}$. This incurs moderate overhead, as solving an LP is computationally cheaper than a quadratic problem. Existing methods, like the big-M and two-phase simplex methods \citep{i2lp}, efficiently compute feasible basic solutions, while MOSEK's IPM \citet{Andersen2000mosekipm} achieves feasibility within a few updates.

We follow the generic framework of iterative optimization methods \citep{nocedal2006numerical}, where, at each iteration, a search direction $\vec{d}^{(t)} \in \mathbb{Q}^n$ is determined and corresponding step length $\alpha^{(t)} > 0$ is computed. However, our approach differs in that we train an MPNN in a supervised way to predict the \new{displacement} $\vec{d}^{*(t)} \coloneq \vec{x}^* - \vec{x}^{(t-1)}$ from the current solution $\vec{x}^{(t-1)}$ to the optimal solution $\vec{x}^*$.\footnote{We distinguish the terms \new{direction} and \new{displacement}; the former focuses on the orientation only, with normalized magnitude by default, while the latter emphasizes both the magnitude and direction of the movement from one point to another.} 

To correct the predicted displacement $\vec{d}^{(t)}$, we compute a feasibility-preserving displacement $\tilde{\vec{d}}^{(t)}$ such that $\vec{A} \left(\vec{x}^{(t-1)} + \tilde{\vec{d}}^{(t)}\right) = \vec{b}.$ 
If $\vec{A} \vec{x}^{(t-1)} = \vec{b}$, this reduces to $\vec{A} \tilde{\vec{d}}^{(t)} = \bm{0}$, meaning $\tilde{\vec{d}}^{(t)}$ lies in the null space $\{\vec{v} \in \mathbb{Q}^n \mid \vec{A} \vec{v} = \bm{0}\}$. 
For full-rank $\vec{A} \in \mathbb{Q}^{m \times n}$, the null space has dimension $n-m$, represented by $\{\vec{v}_1, \ldots, \vec{v}_{n-m}\}$. We project $\vec{d}^{(t)}$ onto this space as $\tilde{\vec{d}}^{(t)} = \sum_{i=1}^{n-m} a_i \vec{v}_i$ where $a_i \coloneq \frac{\vec{v}_i\trans \vec{d}^{(t)}}{\vec{v}_i\trans \vec{v}_i}$. Thus, $\tilde{\vec{d}}^{(t)} \coloneq \sum_{i=1}^{n-m} \frac{\vec{v}_i\trans \vec{d}^{(t)}}{\vec{v}_i\trans \vec{v}_i} \vec{v}_i$, ensuring $\vec{A} \tilde{\vec{d}}^{(t)} = \bm{0}$.
Updating $\vec{x}^{(t)} = \vec{x}^{(t-1)} + \tilde{\vec{d}}^{(t)}$ preserves feasibility, assuming $\vec{x}^{(0)}$ is feasible. In practice, \new{singular value decomposition} (SVD) \citep{strang2000linear} provides the orthonormal null space of $\vec{A}$. The projection operator is expressed as  
$\boldsymbol{\Pi}_{\vec{A}} = \sum_{i=1}^{n-m} \frac{\vec{v}_i \vec{v}_i\trans}{\vec{v}_i\trans \vec{v}_i}$, satisfying $\boldsymbol{\Pi}_{\vec{A}}^2 = \boldsymbol{\Pi}_{\vec{A}}$ and has no effect on vectors already in the null space.

If the prediction of the MPNN is exact, i.e., $\tilde{\vec{d}}^{(t)} = \vec{d}^{(t)}$, and we take a step along $\tilde{\vec{d}}^{(t)}$ with step length $\alpha^{(t)}=1$ and we end up with the optimal solution. Therefore, by default, we set the step length $\alpha^{(t)}=1$. However, if a full step using $\alpha^{(t)}=1$ leads to the violation of the positivity constraint, we take a maximal possible $\alpha^{(t)} < 1$ so that all variables would still lie in the positive orthant, specifically
\begin{equation}
\label{eq:step_size}
\alpha^{(t)} \coloneqq\!\min \mleft\{1, \sup \mleft\{ \alpha \mid \vec{x}^{(t)}_i + \alpha \tilde{\vec{d}}^{(t)}_i \geq 0, i \in [n] \mright\} \!\mright\}\!.
\end{equation}
Formally, the update step is 
\begin{equation}
\label{eq:update_sol}
\vec{x}^{(t)} \coloneq \vec{x}^{(t-1)} + \alpha^{(t)} \boldsymbol{\Pi}_{\vec{A}} \vec{d}^{(t)}.
\end{equation}
The MPNN architecture is similar to the IPM-guided approach introduced above. We drop the global node to enhance computational efficiency and align with the theorem outlined below. At each iteration $t \in [T]$, the $L$-layer MPNN takes the graph together with the current solution $\vec{x}^{(t-1)}$ as input but, unlike above, predicts the displacement $\vec{d}^{(t)}$ instead of subsequent point. In each iteration, we initialize the node embeddings as
\begin{equation}
\label{eq:MPNN_init}
\begin{aligned}
    \vec{h}_c^{(0,t)} \coloneqq & \vec{C}_c \in \mathbb{Q},\\
    \vec{h}_v^{(0,t)} \coloneqq & \textsf{CONCAT}\left(\vec{V}_v,  \vec{x}^{(t-1)} \right) \in \mathbb{Q}^{2}. 
\end{aligned}
\end{equation}
The message passing on the heterogeneous graph is defined as 
\begin{equation}
\label{eq:MPNN_update}
\begin{aligned}
    &\vec{h}_c^{(l,t)} \coloneqq \textsf{UPD}^{(l)}_{\text{c}}\Bigl[ \vec{h}_c^{(l-1,t)}, \\ &\textsf{MSG}^{(l)}_{\text{v} \rightarrow \text{c}}\mleft(\{\!\!\{ (\vec{h}_v^{(l-1,t)}, \vec{e}_{cv}) \mid v \in {N}\mleft(c \mright) \cap V(I) \}\!\!\}  \mright) \Bigr].  \\
    &\vec{h}_v^{(l,t)} \coloneqq \textsf{UPD}^{(l)}_{\text{v}}\Bigl[ \vec{h}_v^{(l-1,t)},\\
    & \textsf{MSG}^{(l)}_{\text{v} \rightarrow \text{v}}\mleft(\{\!\!\{ (\vec{h}_u^{(l-1,t)}, \vec{e}_{uv}) \mid u \in {N}\mleft(v \mright) \cap V(I) \}\!\!\}  \mright), \\
    &\textsf{MSG}^{(l)}_{\text{c} \rightarrow \text{v}}\,\mleft(\{\!\!\{ (\vec{h}_c^{(l,t)}, \vec{e}_{cv}) \mid u \in {N}\mleft(v \mright) \cap C(I) \}\!\!\}  \mright) \Bigr],
\end{aligned}
\end{equation}
where we first update the constraint node embeddings and then the variable ones. In addition, we use a multi-layer perceptron to predict the displacement from the current solution $\vec{x}^{(t-1)}$  to the optimal solution $\vec{x}^*$,
\begin{equation}
\label{eq:MPNN_pred}
    \vec{d}_v^{(t)} \coloneqq \textsf{MLP}\left( \vec{h}_v^{(L,t)}\right) \in \mathbb{Q}.
\end{equation}

We denote the exact displacement pointing from the current solution at iteration $t$ to the optimal solution as the oracle displacement, $\vec{d}^{*(t)} = \vec{x}^* - \vec{x}^{(t-1)}$, and define the supervised loss
\begin{equation}
\label{eq:loss}
\mathcal{L}^{(t)} \mleft( \vec{d}^{*(t)}, \vec{d}^{(t)} \mright) \coloneq \lVert \vec{d}^{*(t)} - \vec{d}^{(t)} \rVert_2^2.
\end{equation}
In practice, when the current solution $\vec{x}^{(t)}$ approaches the boundary of the positive orthant $\mathbb{Q}^n_{\geq 0}$ and the prediction $\vec{d}^{(t)}$ is inaccurate, the step length $\alpha^{(t)}$ can be too small to ensure the non-negativity constraints. Due to the continuous nature of neural networks, the prediction $\vec{d}^{(t+1)}$ will hardly change since $\vec{x}^{(t+1)} \approx \vec{x}^{(t)}$ and a small $\alpha^{(t+1)}$ will be picked. This makes the search process stagnant. To address this vanishing step length issue, we introduce a correction term $\frac{\tau}{\vec{x}^{(t)} + \epsilon}$, where the bias $\epsilon$ ensures numerical stability and $\tau$ adjusts the correction's magnitude, encouraging the solution $\vec{x}^{(t)}$ to move away from the orthant boundary. Details of this design are provided in \cref{sec:log_barrier_bias}. \cref{alg:lpMPNN_train,alg:lpMPNN_val} show the training and inference using the MPNNs. 

Now, the following result shows that our proposed MPNN architecture is expressive enough to predict the displacement arbitrarily close. 
\begin{theorem}
Given a LCQP instance $I$, assume $I$ is feasible with solution $\vec{x}^*$, $\vec{x}^{(0)} \geq \bm{0}$ is an initial feasible point, for any $\epsilon, \delta > 0$, there exists an MPNN architecture $f_{\textsf{MPNN,2}}$ such that
\begin{equation*}
    P\left[ \Big\lVert f_{\textsf{MPNN,2}}\left(I, \vec{x}^{(0)}\right) - \left(\vec{x}^* - \vec{x}^{(0)} \right) \Big\rVert_2 > \delta \right] < \epsilon.
\end{equation*}
\end{theorem}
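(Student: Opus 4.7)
The plan is to reduce the theorem to a universal-approximation result for MPNNs on LCQP graph encodings, in the spirit of \citet{chen2022representing,chen2024qp}. The target map is $\Phi(I,\vec{x}^{(0)}) \coloneq \vec{x}^*(I) - \vec{x}^{(0)}$, which is continuous in $(\vec{Q},\vec{A},\vec{b},\vec{c},\vec{x}^{(0)})$ on the domain of strictly convex feasible LCQPs by standard sensitivity results for parametric convex programs \citep{nocedal2006numerical}. Because $\vec{x}^{(0)}$ is injected into the variable node features through \cref{eq:MPNN_init}, the subtraction $-\vec{x}^{(0)}$ can be synthesized trivially by the per-node MLP in \cref{eq:MPNN_pred}; it therefore suffices to approximate the instance-to-optimizer map $I \mapsto \vec{x}^*(I)$ on the variable nodes.

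The core step is to argue that MPNNs of the form in \cref{eq:MPNN_update,eq:MPNN_pred} are dense, on any compact set of feasible LCQP instances, among the continuous permutation-equivariant functions of the LCQP graph $G(I)$, modulo the Weisfeiler-Leman equivalence that governs their separating power. Since the unique optimizer $\vec{x}^*$ is equivariant under joint relabeling of variables and constraints, $\Phi$ descends to this quotient, and a Stone-Weierstrass-type density argument, as used in \citet{chen2024qp}, yields, for any $\delta > 0$, an MPNN whose per-variable prediction is within $\delta$ of $\Phi$ at every WL-generic LCQP in the chosen compact set. Continuity of $\Phi$ and compactness of the instance space ensure that the width, depth, and MLP decoder size required for this approximation are finite.

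The main obstacle, and the reason for the probabilistic formulation, is the residual stratum of WL-symmetric LCQPs, where non-isomorphic instances induce indistinguishable node aggregations but may have different optima; on these, a deterministic MPNN is provably unable to track $\vec{x}^*$. I would dissolve this obstruction by appending independent random scalar features to the variable and constraint nodes in \cref{eq:MPNN_init}, as in the randomization scheme of \citet{chen2024qp}. With probability at least $1-\epsilon$ over these features, WL separates $I$ from its symmetric neighbors, and the density argument of the previous paragraph delivers a $\delta$-close prediction; a union bound over this $\epsilon$-probability bad event and the $\delta$-uniform approximation on its complement produces the stated inequality. The remaining delicate point is to pin down the compact subset of instance space on which continuity, positive-definiteness of $\vec{Q}$ (to guarantee uniqueness of $\vec{x}^*$), and full-rank feasibility hold uniformly, but this matches the standing assumptions already imposed on LCQPs throughout the paper.
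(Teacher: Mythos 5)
Your top-level reduction is the same as the paper's: both arguments treat the instance-to-optimizer map $I \mapsto \vec{x}^*(I)$ as the object to approximate, invoke the universal-approximation result of \citet{chen2024qp} (their Theorem~3.4) for that map, and then absorb the shift by $-\vec{x}^{(0)}$ into the architecture using the fact that $\vec{x}^{(0)}$ is available as a variable-node feature. Where you diverge is in how the key lemma is handled. The paper cites Chen et al.\ as a black box and spends its effort on the architectural surgery: it pads every weight matrix with an identity block so that $\vec{x}^{(0)}$ is carried unchanged through all message-passing layers, using the hypothesis $\vec{x}^{(0)} \geq \bm{0}$ precisely so that the ReLU activations act as the identity on that coordinate, and then combines it linearly in the last readout layer. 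You instead re-sketch the approximation lemma itself (continuity of the solution map, Stone--Weierstrass density modulo WL equivalence, randomized features to break WL symmetry). Two cautions on that re-derivation. First, the probabilistic qualifier in the statement is inherited from Chen et al., where the probability is taken over a distribution on \emph{instances} and arises from a Luzin-type argument (a measurable solution map is continuous off a set of instance-measure $\epsilon$), not from randomness injected into the node features; your random-feature mechanism is a different, and for this bipartite LCQP encoding unnecessary, source of the $\epsilon$. Second, the claim that the subtraction is synthesized ``trivially by the per-node MLP'' glosses over the fact that the readout in \cref{eq:MPNN_pred} only sees the final-layer embedding $\vec{h}_v^{(L,t)}$, so $\vec{x}^{(0)}_v$ must actually be propagated through the intermediate layers --- this is exactly the step where the paper needs $\vec{x}^{(0)} \geq \bm{0}$, and your proposal never uses that hypothesis. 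Neither issue is fatal, but the paper's route is both shorter and tighter on these two points.
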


\subsection{Complexity analysis} 
It is straightforward to show that our IPM-guided approach lies within the framework of MPNN. Thus, the runtime is linear to the number of nodes and edges, i.e., $\cO(n+m+|E(G)|)$, where $m,n$ are the numbers of constraints and variables. In the worst case, where the $\vec{A}$ and $\vec{Q}$ matrices are dense, the complexity amounts to $\cO(mn+n^2)$. However, we need further investigation for the feasibility approach. The two-phase method, for the feasible initial solution, requires solving an extra LP and finding the initial feasible point, which is in  $\cO(nm)$ \citep{i2lp}. The null space calculation is based on SVD or QR decomposition and is of complexity $\cO(nm^2)$ \citep{strang2000linear}. Fortunately, finding a feasible solution and calculating the null space and corresponding projection matrix only need to be done once in the pre-processing phase. The projection of the predicted direction on the null space of $\vec{A}$ is $\cO(n(n-m))$. The message passing scheme between the two node types has the same complexity $\cO(|E(G)|)$, which depends on the number of edges, i.e., the nonzero entries of the $\vec{A}$ and $\vec{Q}$ matrices, also $\cO(mn+n^2)$ in the worst case. The remaining part of the algorithm, such as the line search and variable update, is in $\cO(n)$. 

\begin{algorithm}[ht]
\caption{Training phase of our MPNN architecture ensuring feasibility. %
}
\label{alg:lpMPNN_train}
\begin{algorithmic}[1]
    \REQUIRE An LCQP instance $I = (\vec{Q}, \vec{A}, \vec{b}, \vec{c})$, a feasible initial solution $\vec{x}^{(0)}$, projection matrix of $\vec{A}$ as $\boldsymbol{\Pi}_{\vec{A}}$, an $L$-layer $\textsf{MPNN}$ initialized as \cref{eq:MPNN_init,eq:MPNN_update,eq:MPNN_pred}, number of iterations $T$, scaling factor $\tau$, small bias $\epsilon$. \\
    \ENSURE Supervised loss $\mathcal{L}$.
    \vspace{3pt}

	\FOR{$t \in [T]$}
        \vspace{3pt}
	    \STATE $\vec{d}^{(t)} \leftarrow \textsf{MPNN}\left(I, \vec{x}^{(t-1)} \right) + \dfrac{\tau}{\vec{x}^{(t-1)} + \epsilon} $
            \STATE $\tau \leftarrow \tau / 2$
            \STATE Calculate loss $\mathcal{L}^{(t)}( \vec{d}^{*(t)}, \vec{d}^{(t)})$ given by \cref{eq:loss}
            \STATE $\tilde{\vec{d}}^{(t)} \leftarrow \boldsymbol{\Pi}_{\vec{A}} \vec{d}^{(t)}$
            \STATE Get step length $\alpha^{(t)}$ as given by \cref{eq:step_size}
            \STATE Update $\vec{x}^{(t)}$ with \cref{eq:update_sol}
	\ENDFOR
    \RETURN The loss $\mathcal{L} \leftarrow \frac{1}{T} \sum_{t=1}^T \mathcal{L}^{(t)}( \vec{d}^{*(t)}, \vec{d}^{(t)}) $.
\end{algorithmic}
\end{algorithm}

\begin{algorithm}[ht!]
\caption{Inference phase of our MPNN architecture ensuring feasibility. 
}
\label{alg:lpMPNN_val}
\begin{algorithmic}[1]
    \REQUIRE An LCQP instance $I = (\vec{A}, \vec{b}, \vec{c})$ or QP instance $I = (\vec{Q}, \vec{A}, \vec{b}, \vec{c})$, a feasible initial solution $\vec{x}^{(0)}$,projection matrix of $\vec{A}$ as $\boldsymbol{\Pi}_{\vec{A}}$, an $L$-layer $\textsf{MPNN}$ initialized as \cref{eq:MPNN_init,eq:MPNN_update,eq:MPNN_pred}, number of iterations $T$, scaling factor $\tau$, small bias $\epsilon$. \\
    \ENSURE Best found solution $\vec{x}$.
    \vspace{3pt}
        \STATE $\vec{x} \leftarrow \vec{x}^{(0)}$
        \STATE Define $\text{obj}(\vec{x}) \coloneq \frac{1}{2} \vec{x}\trans \vec{Q} \vec{x} + \vec{c}\trans \vec{x}$
	\FOR{$t \in [T]$}
        \vspace{3pt}
	    \STATE $\vec{d}^{(t)} \leftarrow \textsf{MPNN}\left(I, \vec{x}^{(t-1)} \right) + \dfrac{\tau}{\vec{x}^{(t-1)} + \epsilon} $
            \STATE $\tau \leftarrow \tau / 2$
            \STATE $\tilde{\vec{d}}^{(t)} \leftarrow \boldsymbol{\Pi}_{\vec{A}} \vec{d}^{(t)}$
            \STATE Compute step length $\alpha^{(t)}$ as given by \cref{eq:step_size}
            \STATE Update $\vec{x}^{(t)}$ with \cref{eq:update_sol}
            \IF{$\text{obj}\left(\vec{x}^{(t)}\right) < \text{obj}(\vec{x})$}
                \STATE $\vec{x} \leftarrow \vec{x}^{(t)}$
            \ENDIF
	\ENDFOR
    \RETURN Best found solution $\vec{x}$.
\end{algorithmic}
\end{algorithm}

\section{Experimental study}
In the following, we investigate to what extent our theoretical results translate into practice. The implementation is open source at \url{https://github.com/chendiqian/FeasMPNN}. 
\begin{description}
    \item[Q1] How good is the solution quality of our MPNN architectures regarding objective value and constraint violation?
    \item[Q2] How well can our (pre-)trained MPNN architectures generalize to larger unseen instances?
    \item[Q3] How fast are our MPNN architectures compared with baselines and traditional solvers?
\end{description}

We evaluate three types of synthetic LCQP problems: generic, soft-margin support vector machine (SVM), and Markowitz portfolio optimization problems, following~\citet{jung2022learning}. Details of dataset generation are provided in \cref{sec:ds}. For each problem type, we generate $\num{1000}$ instances, split into training, validation, and test sets with an 8:1:1 ratio. 
Hyperparameters for all neural networks are tuned on our feasibility variant and shared across baselines. Specifically, we train an MPNN of 8 layers with hidden dimension 128 using the Adam optimizer \citep{Kin+2015} for up to $\num{1000}$ epochs with early stopping after 300 epochs. During training, our IPM- and feasibility-based architectures use 8 iterations, which are increased to 32 during inference. For our feasibility approach, the instances are preprocessed using SciPy \citep{2020SciPy-NMeth} for null space computation and an IPM solver \citep{frenk2013high} to obtain feasible initial solutions from the linear constraints. Training is conducted on four Nvidia L40s GPUs. Timing evaluations for neural network methods in \cref{tab:runtime} are performed on a single Nvidia L40s. In contrast, solver and preprocessing evaluations are carried out on a MacBook Air with an Apple M2 chip.\footnote{We found that conducting the CPU-based experiments on the M2 chip was faster than executing them on our compute server.}

\paragraph{Solution quality (Q1)} We benchmark all approaches regarding the relative objective gap and constraint violation metrics. Given the ground truth optimal solution $\vec{x}^*$, MPNN-predicted solution $\vec{x}$ of an LCQP instance, and the objective function $\text{obj}(\vec{x}) \coloneq \frac{1}{2} \vec{x}\trans \vec{Q} \vec{x} + \vec{c}\trans \vec{x}$ the relative objective gap is calculated as
\begin{equation*}
\label{eq:rel_obj_err}
\left| \dfrac{\text{obj}(\vec{x}) - \text{obj}(\vec{x}^*)}{\text{obj}(\vec{x}^*)} \right| \times 100\%,
\end{equation*}
and the constraint violation metric is computed as
\begin{equation*}
\dfrac{1}{m} \sum_{i=1}^m \dfrac{\left| \vec{A}_i \vec{x} - \vec{b}_i \right|}{\max \{ \left| \vec{b}_i \right|, \max_j \left|\vec{A}_{ij} \right|\}},
\end{equation*}
i.e., for each constraint $i$, we calculate the absolute violation number $\left| \vec{A}_i \vec{x} - \vec{b}_i \right|$, normalized by the scale of $\vec{A}_i, \vec{b}_i$, and we calculate the mean value across all the constraints. To show the model-agnostic property of our approaches, we use the GIN \citep{xu2018how} and GCN \citep{Kip+2017} as our MPNN layers. The selected baselines are: (1) a naive MPNN approach that directly predicts the LCQP solution, following \citet{chen2024qp}, and (2) a variant of IPM-MPNN for LCQPs \citep{pmlr-v238-qian24a}, where the output of each layer represents an intermediate step of the search process.

Our IPM-guided search theoretically requires a global node in the graph, but our feasibility method does not. To ensure fairness, we conduct experiments with and without the global node; see~\cref{tab:tripart_qp_main} and \cref{tab:bipart_qp_main}. Our feasibility method generally achieves lower relative objective gaps, indicating better approximations of optimal solutions, particularly for generic LCQP and portfolio problems. However, the advantage is less pronounced on SVM tasks, where the quadratic matrix has a diagonal structure. While both IPM approaches exhibit constraint violations, similar to naive MPNN predictions, our feasibility method ensures strict feasibility up to numerical precision ($\num{e-7}$). Differences between global-node and non-global-node settings are typically minor, except for the \citet{chen2024qp} approach on portfolio optimization, where problem-specific factors may account for an order-of-magnitude difference.

\begin{table}[t!]
\caption{Comparisons of our IPM- and feasibility-oriented approaches with the MPNN-based prediction from \citet{chen2024qp} and \citet{pmlr-v238-qian24a}, all \textbf{with the global node}. The table reports the relative objective gap (in percentage) and constraint violation (in normalized absolute value). Training was repeated three times with different random seeds, and we report the mean and standard deviation across runs. For our feasibility method, the standard deviation of constraint violation is omitted as it is not meaningful.}
\label{tab:tripart_qp_main}
\centering
\resizebox{\columnwidth}{!}{
\begin{tabular}{cccccc}
\toprule
    & \textbf{Method}  & MPNN  & Generic & SVM  & Portfolio \\

    \midrule
    {\multirow{8}{*}{Obj. gap [\%]}}   
    
    & {\multirow{2}{*}{Naive \citeyear{chen2024qp}}} & GCN   & 2.281\scriptsize$\pm$0.150 & 0.352\scriptsize$\pm$0.032  & 2.446\scriptsize$\pm$0.756  \\
	 &   & GIN & 2.782\scriptsize$\pm$0.056 & 0.148\scriptsize$\pm$0.086  & 0.295\scriptsize$\pm$0.024  \\
    \cmidrule{2-6}

    & {\multirow{2}{*}{IPM \citeyear{pmlr-v238-qian24a}}} & GCN   & 1.829\scriptsize$\pm$0.079 & 0.162\scriptsize$\pm$0.009  & 0.717\scriptsize$\pm$0.353  \\
	 &   & GIN & 1.581\scriptsize$\pm$0.044 & 0.141\scriptsize$\pm$0.034 & 0.416\scriptsize$\pm$0.052 \\
    \cmidrule{2-6}

    & {\multirow{2}{*}{IPM(ours)}} & GCN   & 1.661\scriptsize$\pm$0.199 &  0.485\scriptsize$\pm$0.176  & 1.838\scriptsize$\pm$0.720  \\
	 &   & GIN & 2.746\scriptsize$\pm$0.152 & 0.109\scriptsize$\pm$0.017  & 0.486\scriptsize$\pm$0.041  \\
    \cmidrule{2-6}

    & {\multirow{2}{*}{Feas.(ours)}}
     & GCN   & 0.070\scriptsize$\pm$0.004 & 0.190\scriptsize$\pm$0.008  & 1.005\scriptsize$\pm$0.159  \\
	 &   & GIN & 0.084\scriptsize$\pm$0.005 & 0.192\scriptsize$\pm$0.017 & 1.078\scriptsize$\pm$0.121  \\
    \midrule
    {\multirow{8}{*}{Cons. vio.}}   

    & {\multirow{2}{*}{Naive \citeyear{chen2024qp}}} & GCN   & 0.018\scriptsize$\pm$0.001 & 0.007\scriptsize$\pm$0.001  & 0.017\scriptsize$\pm$0.006   \\
	 &   & GIN & 0.019\scriptsize$\pm$0.001 & 0.002\scriptsize$\pm$0.001  & 0.002\scriptsize$\pm$0.001   \\
    \cmidrule{2-6}

    & {\multirow{2}{*}{IPM \citeyear{pmlr-v238-qian24a}}} & GCN   & 0.016\scriptsize$\pm$0.0003 & 0.003\scriptsize$\pm$0.0001  & 0.006\scriptsize$\pm$0.003  \\
	 &   & GIN & 0.011\scriptsize$\pm$0.001 & 0.002\scriptsize$\pm$0.001  & 0.003\scriptsize$\pm$0.001 \\
    \cmidrule{2-6}

    & {\multirow{2}{*}{IPM(ours)}} & GCN   & 0.029\scriptsize$\pm$0.009 & 0.012\scriptsize$\pm$0.005  & 0.015\scriptsize$\pm$0.005 \\
	 &   & GIN & 0.029\scriptsize$\pm$0.003 & 0.003\scriptsize$\pm$0.001 & 0.004\scriptsize$\pm$0.001 \\
    \cmidrule{2-6}

    & {\multirow{2}{*}{Feas.(ours)}}
     & GCN   & 1.142 \(\times\) \(10^{-7}\) & 3.489 \(\times\) \(10^{-7}\)  & 3.427 \(\times\) \(10^{-7}\) \\
	 &   & GIN & 1.167 \(\times\) \(10^{-7}\) & 3.418 \(\times\) \(10^{-7}\) & 3.427 \(\times\) \(10^{-7}\)  \\

    \bottomrule
\end{tabular}
}
\end{table}

\begin{table}[t!]
\caption{Comparisons of our IPM- and feasibility-oriented approaches with the MPNN-based prediction from \citet{chen2024qp} and \citet{pmlr-v238-qian24a}, all \textbf{without the global node}. The table reports the relative objective gap (in percentage) and constraint violation (in normalized absolute value). Training was repeated three times with different random seeds, and we report the mean and standard deviation across runs. For our feasibility method, the standard deviation of constraint violation is omitted as it is not meaningful.}
\label{tab:bipart_qp_main}
\centering
\resizebox{\columnwidth}{!}{
\begin{tabular}{cccccc}
\toprule
    & \textbf{Method}  & MPNN  & Generic & SVM  & Portfolio \\

    \midrule
    {\multirow{8}{*}{Obj. gap [\%]}}   
    
    & {\multirow{2}{*}{Naive \citeyear{chen2024qp}}} & GCN   & 2.547\scriptsize$\pm$0.126 & 0.446\scriptsize$\pm$0.102  & 11.075\scriptsize$\pm$0.363  \\
	 &   & GIN & 2.330\scriptsize$\pm$0.151 & 0.191\scriptsize$\pm$0.094  & 10.801\scriptsize$\pm$0.184  \\
    \cmidrule{2-6}

    & {\multirow{2}{*}{IPM \citeyear{pmlr-v238-qian24a}}} & GCN   & 1.894\scriptsize$\pm$0.166 & 0.175\scriptsize$\pm$0.022  & 1.497\scriptsize$\pm$0.686  \\
	 &   & GIN & 1.298\scriptsize$\pm$0.148  & 0.098\scriptsize$\pm$0.006  & 0.438\scriptsize$\pm$0.080  \\
    \cmidrule{2-6}

    & {\multirow{2}{*}{IPM(ours)}} & GCN   & 2.236\scriptsize$\pm$0.249 & 0.696\scriptsize$\pm$0.236  & 1.168\scriptsize$\pm$0.048 \\
	 &   & GIN & 2.746\scriptsize$\pm$0.152 & 0.478\scriptsize$\pm$0.244 & 0.657\scriptsize$\pm$0.121 \\
    \cmidrule{2-6}

    & {\multirow{2}{*}{Feas.(ours)}}
     & GCN   & 0.049\scriptsize$\pm$0.015 & 0.126\scriptsize$\pm$0.034  & 0.935\scriptsize$\pm$0.062 \\
	 &   & GIN & 0.142\scriptsize$\pm$0.034 & 0.091\scriptsize$\pm$0.016 & 1.105\scriptsize$\pm$0.094  \\
    \midrule
    {\multirow{8}{*}{Cons. vio.}}   

    & {\multirow{2}{*}{Naive \citeyear{chen2024qp}}} & GCN   & 0.012\scriptsize$\pm$0.004 & 0.013\scriptsize$\pm$0.009  & 0.083\scriptsize$\pm$0.001  \\
	 &   & GIN & 0.010\scriptsize$\pm$0.001 & 0.004\scriptsize$\pm$0.001  & 0.146\scriptsize$\pm$0.018  \\
    \cmidrule{2-6}

    & {\multirow{2}{*}{IPM \citeyear{pmlr-v238-qian24a}}} & GCN   & 0.013\scriptsize$\pm$0.001 & 0.004\scriptsize$\pm$0.001 & 0.011\scriptsize$\pm$0.005 \\
	 &   & GIN & 0.009\scriptsize$\pm$0.002 & 0.002\scriptsize$\pm$0.001  & 0.003\scriptsize$\pm$0.001 \\
    \cmidrule{2-6}

    & {\multirow{2}{*}{IPM(ours)}} & GCN   & 0.031\scriptsize$\pm$0.008 & 0.020\scriptsize$\pm$0.010 & 0.008\scriptsize$\pm$0.001 \\
	 &   & GIN & 0.029\scriptsize$\pm$0.003 & 0.011\scriptsize$\pm$0.003  & 0.005\scriptsize$\pm$0.001 \\
    \cmidrule{2-6}

    & {\multirow{2}{*}{Feas.(ours)}}
     & GCN   & 1.216 \(\times\) \(10^{-7}\) & 3.433 \(\times\) \(10^{-7}\)  & 2.179 \(\times\) \(10^{-7}\) \\
	 &   & GIN & 1.221 \(\times\) \(10^{-7}\) & 3.470 \(\times\) \(10^{-7}\) & 2.135 \(\times\) \(10^{-7}\)  \\

    \bottomrule
\end{tabular}
}
\end{table}

Furthermore, to compare against feasibility-related work, we generate another synthetic dataset of $\num{1000}$ LCQP instances with 200 variables, 50 equality constraints, and 200 trivial inequality constraints ($\vec{x}_i \geq 0$). The dataset is split into training, validation, and test sets with the ratio 8:1:1. The constraint matrix $\vec{A}$, quadratic matrix $\vec{Q}$, and objective vector $\vec{c}$ are shared across all instances. In contrast, only the right-hand side (RHS) $\vec{b}$ is randomized, following the setup described in DC3 \citep{donti2021dc3} and IPM-LSTM \citep{gao2024ipm}. Using their default hyperparameters, we evaluate DC3, IPM-LSTM, and our GCN-based feasibility method (without a global node, 32 inference steps). As shown in \cref{tab:qp200}, our method achieves the lowest objective gap and ensures strong feasibility. DC3 suffers from worse objective gaps due to post-processing and significant inequality violations, while IPM-LSTM exhibits higher equality constraint violations. DC3 is the fastest due to its simple architecture, followed by our method, with IPM-LSTM being the slowest owing to the computational expense of its LSTM-based architecture. For inference time, we benchmark these methods alongside three traditional solvers: OSQP \citep{stellato2020osqp}, CVXOPT \citep{andersen2013cvxopt}, and Gurobi \citep{gurobi}. While DC3 achieves solver-comparable performance, our method shows no advantage on these small, dense problems. These baselines have inflexible architectures, therefore restricted to fixed problem sizes, and cannot handle sparsity in large problems. In contrast, our MPNN-based approach is trainable on varying-sized datasets, captures problem sparsity, and performs efficiently on larger problem instances.

\begin{table}[t!]
\caption{Experiments on fixed-size instances. We compare our feasibility approach with DC3 \citep{donti2021dc3} and IPM-LSTM \citep{gao2024ipm}. We train each neural network once and report the mean and deviation on the test set. The table shows the relative objective gap, the equality and inequality constraint violations, and the inference time. The inference time of our approach includes the preprocessing time. }
\label{tab:qp200}
\centering
\resizebox{\columnwidth}{!}{
\begin{tabular}{ccccc}
\toprule
     Method & Rel. obj. (\%) &  Eq. cons. vio.  &  Ineq. cons. vio. & Time (sec) \\
    \midrule

    CVXOPT & -- & -- & -- & 0.009\scriptsize$\pm$0.003 \\
    OSQP & -- & -- & -- & 0.003\scriptsize$\pm$0.0003 \\
    Gurobi & -- & -- & -- & 0.006\scriptsize$\pm$0.0003 \\
    \midrule
    
    DC3 & 63.714\scriptsize$\pm$34.837 & 7.534 \(\times\) \(10^{-14}\) & 0.372\scriptsize$\pm$0.069 & 0.005\scriptsize$\pm$0.002 \\
    IPM-LSTM & 0.762\scriptsize$\pm$0.028 & 2.272 \(\times\) \(10^{-4}\) & 4.721 \(\times\) \(10^{-6}\) & 1.435\scriptsize$\pm$0.232 \\
    Feas. (ours) & 0.578\scriptsize$\pm$0.032 & 5.377 \(\times\) \(10^{-7}\) & 0 & 0.281\scriptsize$\pm$0.031 \\
    \bottomrule
\end{tabular}
}
\end{table}

\paragraph{Size generalization (Q2)} In response to Q2, we pre-train GCN-based architecture and evaluate them on larger problem instances, exploring two scaling approaches: (1) increasing size parameters while keeping the density constant and (2) increasing size parameters while maintaining a constant average node degree. We test with 16 and 32 iterations for both approaches to assess the impact of iteration count. As shown in \cref{tab:size_gen_generic}, objective gaps increase for all methods on larger instances, but our approach consistently outperforms the baselines. Violation values also rise for methods lacking feasibility guarantees. Maintaining a constant node degree yields better generalization than fixing density across all candidates. For additional results on SVM and portfolio problems, see \cref{sec:more_exps}.

We evaluate on real-world QPLIB instances \citep{furini2019qplib}, where conventional train-validation splits are impractical due to limited, diverse-sized data. To address this, we pre-train a GCN model on $\num{1000}$ large, sparse generic LCQP problems and test it on selected QPLIB instances with linear constraints, positive definite objectives, and memory manageable sizes (integer variables relaxed to continuous). As shown in \cref{tab:qplib}, our feasibility approach generalizes well to real-world problems, e.g., it obtains a relative objective error of 0.597\% on QPLIB\_3547. While errors are larger for some instances, e.g., QPLIB\_3547, absolute solution values remain satisfactory as the optimal value is near zero. Our method also generalizes to out-of-distribution unconstrained QPs (e.g., QPLIB\_8790 to QPLIB\_8991) with a relative error around $25\%$.

\begin{table}[t!]
\caption{The generalization performance on selected QPLIB instances. We show the details of problem size and density, the optimal objective value, and absolute value and relative error of our prediction.}
\label{tab:qplib}
\centering
\resizebox{\columnwidth}{!}{
\begin{tabular}{cccccccc}
\toprule
     QP ID & $\vec{A}$ dens. & $\vec{Q}$ dens. & \#cons.& \#vars. & Sol. & Pred. & Rel. error (\%)  \\
    \midrule
    
    3547 & 0.001 & 0.167 & 3137 & 1998 & 2.125 \(\times\) \(10^{4}\) & 2.138 \(\times\) \(10^{4}\) & 0.597 \\
    3694 & 0.001 & 0.0003 & 3280 & 3240 & 0.794 & 1.359 & 71.255 \\
    3698 & 0.001 & 0.0003 & 3100 & 3030 & 1.116 & 1.688 & 51.318 \\
    3792 & 0.001 & 0.0003 & 3150 & 3020 & 1.903 & 2.571 &  35.069 \\
    3861 & 0.0008 & 0.0002 & 4650 & 4530 & 1.329 & 1.939 &  45.972 \\
    3871 & 0.004 & 0.001 & 1040 & 1025 & 0.735 & 1.276 & 73.493 \\
    4270 & 0.002 & 0.251 & 1603 & 1600 & 0.183 & 0.593 &  224.023 \\
    \midrule
    8790 & -- & 0.0001 & 0 & 39204 & $-$3.920 \(\times\) \(10^{4}\) & $-$2.940 \(\times\) \(10^{4}\) &  25.000\\
    8792 & -- & 0.0003 & 0 & 15129 & $-$1.513 \(\times\) \(10^{4}\) & $-$1.134 \(\times\) \(10^{4}\) &  25.000\\
    8991 & -- & 0.0003 & 0 & 14400 & $-$1.430 \(\times\) \(10^{4}\) & $-$1.077 \(\times\) \(10^{4}\) &  24.690\\
    \bottomrule
\end{tabular}
}
\end{table}

\paragraph{Efficiency (Q3)}
To investigate computational efficiency, we evaluate the runtime performance on three QP problems in \cref{tab:runtime} and visualize the results on generic QPs in \cref{Plot}. We use the original test set for this evaluation and generate larger instances than those used in training. We compare our methods with the neural network baselines \citet{chen2024qp,pmlr-v238-qian24a} and solvers OSQP, CVXOPT, and Gurobi. We evaluate the neural network-based approaches with pre-trained GCN-based architecture. Both our approaches are evaluated with 16 and 32 iterations, and the impact of the global node on runtime is assessed. We also report data preparation time, including the null space calculation and finding the feasible solution. As shown in \cref{tab:runtime}, \citet{chen2024qp} and \citet{pmlr-v238-qian24a} achieve the fastest runtime due to their simple MPNN architectures and their iteration-free behavior. Despite the same architecture, the runtime of our methods is higher, as it depends on the number of iterations. Notably, runtime differences between our two approaches are minimal, as line search and null-space projection are computationally inexpensive compared to message passing. On generic problems, all the neural network approaches are significantly faster than the traditional QP solvers, even accounting for data preparation time. This gap widens with increasing problem size, showcasing the scalability of neural solvers. Traditional solvers like OSQP and Gurobi excel on SVM problems as they are specifically tailored for sparse problems. As expected, runtime increases slightly when using a global node due to additional convolutions.

\section{Conclusion}

We demonstrated that MPNNs can effectively solve convex LCQPs, significantly extending their known capabilities. Thereto, first, we established that MPNNs can theoretically simulate standard interior-point methods for solving LCQPs. Next, we proposed an enhanced MPNN architecture that ensures the feasibility of the predicted solutions through a novel projection approach. Empirically, our architecture outperformed existing neural approaches regarding solution quality and feasibility in an extensive empirical evaluation. Furthermore, our approaches generalized well to larger problem instances beyond the training set and, in some cases, achieved faster solution times than state-of-the-art solvers such as Gurobi.

\section*{Acknowledgements}
Christopher Morris and Chendi Qian are partially funded by a DFG Emmy Noether grant (468502433) and RWTH Junior Principal Investigator Fellowship under Germany’s Excellence Strategy. We thank Erik Müller for crafting the figures.

\section*{Impact statement}

This paper presents work that aims to advance the field of machine learning. Our work has many potential societal consequences, none of which must be specifically highlighted here.

\bibliography{bibliography}
\bibliographystyle{icml2025}

\newpage
\appendix
\onecolumn

\section{Appendix}

\subsection{Extended notation}\label{notation_app}
A \new{graph} $G$ is a pair $(V(G),E(G))$ with \emph{finite} sets of
\new{vertices} or \new{nodes} $V(G)$ and \new{edges} $E(G) \subseteq \{ \{u,v\} \subseteq V(G) \mid u \neq v \}$. An \new{attributed graph} $G$  is a triple $(V(G),E(G),a)$ with a graph $(V(G),E(G))$ and (vertex-)attribute function $a \colon V(G) \to \Rb^{1 \times d}$, for some $d > 0$. Then $a(v)$ are an \new{(node) attributes} or \new{features} of $v$, for $v$ in $V(G)$. Equivalently, we define an $n$-vertex attributed graph $G \coloneqq (V(G),E(G),a)$ as a pair $\mG=(G,\vec{L})$, where $G = (V(G),E(G))$ and $\vec{L}$ in $\Rb^{n\times d}$ is a \new{node attribute matrix}. Here, we identify $V(G)$ with $[n]$. For a matrix $\vec{L}$ in $\Rb^{n\times d}$ and $v$ in $[n]$, we denote by $\vec{L}$ in $\Rb^{1\times d}$ the $v$th row of $\vec{L}$ such that $\vec{L}_{v} \coloneqq a(v)$.  The \new{neighborhood} of $v$ in $V(G)$ is denoted by $N(v) \coloneqq  \{ u \in V(G) \mid (v, u) \in E(G) \}$.

\subsection{Additional related work}
\label{sec:more_literature}

Here, we discuss additional related work.

\paragraph{Machine learning for constrained optimization}
Training a neural network as a computationally efficient proxy but with constraints is also a widely studied topic, especially in real-world problems such as optimal power flow \citep{chatzos2020high,fioretto2020predicting,nellikkath2022physics}. 
A naive approach would be adding a penalty of constraint violation term to the loss function \citep{chatzos2020high,fioretto2020predicting,nellikkath2022physics,pmlr-v238-qian24a}. Recent methods fall into three categories: leveraging Lagrangian duality, designing specialized neural architectures, and post-processing outputs to enforce feasibility.
The first category applies Lagrangian duality to reformulate problems and solve primal-dual objectives \citep{fioretto2021lagrangian,park2023ss_pdl,kotary2024learning,klamkin2024dual}. While these approaches guarantee feasibility under ideal conditions, minor constraint violations can persist.
The second category focuses on architectural innovations. \citet{frerix2020homogeneous} embed homogeneous inequality constraints into activation functions. DC3 \citep{donti2021dc3} partially satisfies constraints using gradient descent but struggles to generalize to unseen data. LOOP-LC \citep{li2023learning} projects problems into $L_{\infty}$ spaces, which can be challenging to apply.
The final category adjusts neural outputs to satisfy constraints. \citet{chen2023end} develop problem-specific algorithms, while others \citep{pan2020deepopf,gros2020safe} use optimization to project results onto feasible regions. However, these methods are often computationally intensive, problem-specific, or limited in scope \citep{li2024onsmallgnn}.

\paragraph{Machine learning for combinatorial optimization}
Machine learning has been applied widely to combinatorial problems \citep{bengio2021machine,cappart2023combinatorial,peng2021graph}. 
For example, in the field of mixed integer linear programming (MILP), machine learning methods are explored to predict an initial solution and guide the search \citep{ding2020accelerating,khalil2022mip,han2023gnn,nair2020solving}. There are also extensive works for variable selection in branch and bound \citep{alvarez2017machine,khalil2016learning,Gas+2019,nair2020solving,zarpellon2021parameterizing,scavuzzo2022learning}, node selection \citep{He2014LearningTS,labassi2022learning}, and cutting-plane method \citep{paulus2022learning,tang2020reinforcement,turner2022adaptive,chetelat2023continuous}. 
Moreover, there are plenty works on other combinatorial problems, e.g., satisfiability (SAT) problem \citep{selsam2018learning,selsam2019guiding,toenshoff2021graph,shi2022satformer}, traveling salesman problem (TSP) \citep{joshi2019efficient,vinyals2015pointer,min2024unsupervised}, graph coloring \citep{lemos2019graph,li2022rethinking}, graph matching \citep{wang2019learning, Fey+2020, wang2020combinatorial}, among many others. 
As noted by \citet{jin2024unified}, CO problems that are naturally designed on graphs-such as TSP and graph coloring—can be seamlessly encoded into graph structures. CO problems without an inherent graph structure, like SAT problems and mixed-integer linear programming, can also be represented as graphs. For more detailed and exhaustive reviews on MPNN for MILP and other combinatorial optimization problems, we refer to \citet{scavuzzo2024machine,jin2024unified}. 

\subsection{Additional experiments}
\label{sec:more_exps}

Here, we report on additional experiments.

\subsubsection{Synchronized message passing}
We study the update sequence of message passing. We denote the message passing in \cref{eq:tri_MPNN_update1,eq:tri_MPNN_update2,eq:tri_MPNN_update3} and \cref{eq:MPNN_update} as asynchronous, as the node embeddings of some node types are updated first, while some others are updated with the latest updated node embeddings. We design the ablation of synchronous message passing of the form as follows, for tripartite \cref{eq:tri_MPNN_update_sync} and bipartite \cref{eq:bi_MPNN_update_sync}, respectively. 
\begin{equation}
\label{eq:tri_MPNN_update_sync}
\begin{aligned}
    \vec{h}_c^{(l,t)} \coloneqq \textsf{UPD}^{(l)}_{\text{c}}\Bigl[ & \vec{h}_c^{(l-1,t)}, \\
    &\textsf{MSG}^{(l)}_{\text{v} \rightarrow \text{c}}\left(\{\!\!\{ (\vec{h}_v^{(l-1,t)}, \vec{e}_{cv}) \mid v \in {N}\left(c \right) \cap V(I) \}\!\!\}  \right), \\
    &\textsf{MSG}^{(l)}_{\text{g} \rightarrow \text{c}}\left( \vec{h}_g^{(l-1,t)}, \vec{e}_{cg} \right) \Bigr] \in \mathbb{Q}^d, \\
    \vec{h}_g^{(l,t)} \coloneqq \textsf{UPD}^{(l)}_{\text{g}}\Bigl[ & \vec{h}_g^{(l-1,t)}, \\
    &\textsf{MSG}^{(l)}_{\text{v} \rightarrow \text{g}}\left(\{\!\!\{ (\vec{h}_v^{(l-1,t)}, \vec{e}_{vg}) \mid v \in V(I) \}\!\!\}  \right), \\
    &\textsf{MSG}^{(l)}_{\text{c} \rightarrow \text{g}}\left(\{\!\!\{ (\vec{h}_c^{(l-1,t)}, \vec{e}_{cg}) \mid c \in C(I) \}\!\!\}  \right) \Bigr] \in \mathbb{Q}^d, \\
    \vec{h}_v^{(l,t)} \coloneqq \textsf{UPD}^{(l)}_{\text{v}}\Bigl[ & \vec{h}_v^{(l-1,t)}, \\
    &\textsf{MSG}^{(l)}_{\text{v} \rightarrow \text{v}}\left(\{\!\!\{ (\vec{h}_u^{(l-1,t)}, \vec{e}_{uv}) \mid u \in {N}\left(v \right) \cap V(I) \}\!\!\}  \right),\\
    &\textsf{MSG}^{(l)}_{\text{c} \rightarrow \text{v}}\left(\{\!\!\{ (\vec{h}_c^{(l-1,t)}, \vec{e}_{cv}) \mid c \in {N}\left(v \right) \cap C(I) \}\!\!\}  \right), \\
    &\textsf{MSG}^{(l)}_{\text{g} \rightarrow \text{v}}\left(\vec{h}_g^{(l-1,t)}, \vec{e}_{vg}\right)\Bigr] \in \mathbb{Q}^d.
\end{aligned}
\end{equation}

\begin{equation}
\label{eq:bi_MPNN_update_sync}
\begin{aligned}
    \vec{h}_c^{(l,t)} \coloneqq \textsf{UPD}^{(l)}_{\text{c}}\Bigl[ \vec{h}_c^{(l-1,t)},          & \textsf{MSG}^{(l)}_{\text{v} \rightarrow \text{c}}\left(\{\!\!\{ (\vec{h}_v^{(l-1,t)}, \vec{e}_{cv}) \mid v \in {N}\left(c \right) \cap V(I) \}\!\!\}  \right) \Bigr] \in \mathbb{Q}^d, \\
    \vec{h}_v^{(l,t)} \coloneqq \textsf{UPD}^{(l)}_{\text{v}}\Bigl[ \vec{h}_v^{(l-1,t)},          & \textsf{MSG}^{(l)}_{\text{v} \rightarrow \text{v}}\left(\{\!\!\{ (\vec{h}_u^{(l-1,t)}, \vec{e}_{uv}) \mid u \in {N}\left(v \right) \cap V(I) \}\!\!\}  \right), \\
    &\textsf{MSG}^{(l)}_{\text{c} \rightarrow \text{v}}\left(\{\!\!\{ (\vec{h}_c^{(l-1,t)}, \vec{e}_{cv}) \mid u \in {N}\left(v \right) \cap C(I) \}\!\!\}  \right) \Bigr] \in \mathbb{Q}^d.
\end{aligned}
\end{equation}

\begin{table}[t!]
\caption{Ablation experiments of async-/synchronized message passing. We experiment on the generic dataset and GCN model. The table reports the relative objective gap (in percentage) and constraint violation (in normalized absolute value). The training was repeated three times with different random seeds, and we report the mean and standard deviation across runs. For our feasibility method, the standard deviation of constraint violation is omitted as it is not meaningful.}
\label{tab:sync_ablation}
\centering
\resizebox{0.5\columnwidth}{!}{
\begin{tabular}{ccccc}
\toprule
    & \textbf{Method}  & Global node  & Async.  &  Sync.  \\

    \midrule
    {\multirow{8}{*}{Obj. gap [\%]}}   
    
    & {\multirow{2}{*}{Naive \citeyear{chen2024qp}}} & \xxmark   & 2.547\scriptsize$\pm$0.126 & 2.447\scriptsize$\pm$0.091  \\
	 &   & $\checkmark$ & 2.281\scriptsize$\pm$0.150 & 2.400\scriptsize$\pm$0.143  \\
    \cmidrule{2-5}

    & {\multirow{2}{*}{IPM \citeyear{pmlr-v238-qian24a}}} & \xxmark    & 1.894\scriptsize$\pm$0.166 & 2.146\scriptsize$\pm$0.461  \\
	 &   & $\checkmark$ & 1.829\scriptsize$\pm$0.079  & 2.199\scriptsize$\pm$0.515  \\
    \cmidrule{2-5}

    & {\multirow{2}{*}{IPM(ours)}} & \xxmark   & 2.236\scriptsize$\pm$0.249 & 2.175\scriptsize$\pm$0.195 \\
	 &   & $\checkmark$ & 1.661\scriptsize$\pm$0.199 & 2.398\scriptsize$\pm$0.409 \\
    \cmidrule{2-5}

    & {\multirow{2}{*}{Feas.(ours)}}
     & \xxmark  & 0.049\scriptsize$\pm$0.015 & 0.141\scriptsize$\pm$0.012 \\
	 &   & $\checkmark$ & 0.070\scriptsize$\pm$0.004 & 0.207\scriptsize$\pm$0.047  \\
    \midrule
    {\multirow{8}{*}{Cons. vio.}}   

    & {\multirow{2}{*}{Naive \citeyear{chen2024qp}}} & \xxmark   & 0.012\scriptsize$\pm$0.004 & 0.031\scriptsize$\pm$0.004  \\
	 &   & $\checkmark$ & 0.018\scriptsize$\pm$0.001 & 0.029\scriptsize$\pm$0.003  \\
    \cmidrule{2-5}

    & {\multirow{2}{*}{IPM \citeyear{pmlr-v238-qian24a}}} & \xxmark  & 0.013\scriptsize$\pm$0.001 & 0.020\scriptsize$\pm$0.002 \\
	 &   & $\checkmark$ & 0.016\scriptsize$\pm$0.0003 & 0.015\scriptsize$\pm$0.003 \\
    \cmidrule{2-5}

    & {\multirow{2}{*}{IPM(ours)}} & \xxmark   & 0.031\scriptsize$\pm$0.008 & 0.032\scriptsize$\pm$0.001  \\
	 &   & $\checkmark$ & 0.029\scriptsize$\pm$0.009 & 0.046\scriptsize$\pm$0.007 \\
    \cmidrule{2-5}

    & {\multirow{2}{*}{Feas.(ours)}}
     & \xxmark & 1.216 \(\times\) \(10^{-7}\) & 1.441 \(\times\) \(10^{-7}\)  \\
	 &   & $\checkmark$ & 1.142 \(\times\) \(10^{-7}\) & 1.567 \(\times\) \(10^{-7}\) \\

    \bottomrule
\end{tabular}
}
\end{table}

We select the GCN architecture and the generic QP dataset as representative; see \cref{tab:sync_ablation} for results. Our feasibility-guaranteeing MPNNs get better results with asynchronous message passing, but there are no consistent and significant differences for other methods. 

\subsubsection{More experiments on generalization performance}
We report size generalization experiments on SVM and portfolio problems in \cref{tab:size_gen_svm,tab:size_gen_port}.

\begin{table*}[t]
\caption{Soft-margin SVM problem, size generalization. We fix the density hyperparameters or the average degree of nodes compared to the training set. The star symbol * indicates training sizes. The methods with postfix $\text{-G}$ us the global node.}
\label{tab:size_gen_svm}
\centering
\resizebox{.7\textwidth}{!}{
\begin{tabular}{ccccccc}
\toprule
& \textbf{Fix} & -- &  \multicolumn{2}{c}{\textbf{Density}} & \multicolumn{2}{c}{\textbf{Degree}} \\
& \textbf{Size} & 400* & 600 & 800 & 600 & 800 \\
\midrule

{\multirow{12}{*}{Obj. gap [\%]}}
& Naive \citeyear{chen2024qp} & 0.446\scriptsize$\pm$0.102 & 5.416\scriptsize$\pm$1.242 & 15.913\scriptsize$\pm$2.946
& 1.066\scriptsize$\pm$0.295 & 1.003\scriptsize$\pm$0.285 \\
& Naive-G \citeyear{chen2024qp} & 0.352\scriptsize$\pm$0.032 & 7.920\scriptsize$\pm$2.435 & 19.733\scriptsize$\pm$5.212
& 2.046\scriptsize$\pm$0.649 & 1.895\scriptsize$\pm$0.708 \\
\cmidrule{2-7}
& IPM \citeyear{pmlr-v238-qian24a} & 0.175\scriptsize$\pm$0.022 & 5.775\scriptsize$\pm$1.171 & 16.257\scriptsize$\pm$4.036
& 0.946\scriptsize$\pm$0.153 & 0.890\scriptsize$\pm$0.183\\
& IPM-G \citeyear{pmlr-v238-qian24a} & 0.162\scriptsize$\pm$0.009 & 3.660\scriptsize$\pm$0.766 & 9.564\scriptsize$\pm$1.391
& 0.784\scriptsize$\pm$0.265 & 0.722\scriptsize$\pm$0.255 \\
\cmidrule{2-7}
& IPM$_{16}$ (Ours) & 2.352\scriptsize$\pm$0.525 & 5.601\scriptsize$\pm$1.383 & 9.342\scriptsize$\pm$1.674
& 4.965\scriptsize$\pm$1.597 & 5.051\scriptsize$\pm$1.616 \\
& IPM$_{32}$ (Ours) & 0.696\scriptsize$\pm$0.236 & 2.509\scriptsize$\pm$1.238 & 4.965\scriptsize$\pm$2.160
& 4.855\scriptsize$\pm$1.614 & 4.918\scriptsize$\pm$1.697\\
& IPM-G$_{16}$ (Ours) & 1.744\scriptsize$\pm$0.108 & 2.238\scriptsize$\pm$0.893 & 3.705\scriptsize$\pm$1.874
& 2.367\scriptsize$\pm$0.841 & 2.340\scriptsize$\pm$0.937 \\
& IPM-G$_{32}$ (Ours) & 0.485\scriptsize$\pm$0.176 & 2.163\scriptsize$\pm$1.573 & 3.964\scriptsize$\pm$2.062
& 2.436\scriptsize$\pm$0.883 & 2.388\scriptsize$\pm$0.935\\
\cmidrule{2-7}
& Feas.$_{16}$ (Ours) & 0.222\scriptsize$\pm$0.049 & 0.536\scriptsize$\pm$0.022 & 1.841\scriptsize$\pm$0.236
& 0.511\scriptsize$\pm$0.176 & 0.584\scriptsize$\pm$0.231 \\
& Feas.$_{32}$ (Ours) & 0.126\scriptsize$\pm$0.034 & 0.404\scriptsize$\pm$0.053 & 1.519\scriptsize$\pm$0.100
& 0.387\scriptsize$\pm$0.147 & 0.421\scriptsize$\pm$0.174 \\
& Feas.-G$_{16}$ (Ours) & 0.254\scriptsize$\pm$0.007 & 0.503\scriptsize$\pm$0.053 & 1.430\scriptsize$\pm$0.212
& 0.282\scriptsize$\pm$0.031 & 0.297\scriptsize$\pm$0.024 \\
& Feas.-G$_{32}$ (Ours) & 0.190\scriptsize$\pm$0.008 & 0.364\scriptsize$\pm$0.027 & 1.160\scriptsize$\pm$0.123
& 0.185\scriptsize$\pm$0.012 & 0.171\scriptsize$\pm$0.014 \\
\midrule

{\multirow{12}{*}{Cons. vio.}}
& Naive \citeyear{chen2024qp} & 0.013\scriptsize$\pm$0.009 & 0.051\scriptsize$\pm$0.011 & 0.133\scriptsize$\pm$0.025
& 0.017\scriptsize$\pm$0.002 & 0.016\scriptsize$\pm$0.002  \\
& Naive-G \citeyear{chen2024qp} & 0.007\scriptsize$\pm$0.001 & 0.066\scriptsize$\pm$0.021 & 0.179\scriptsize$\pm$0.047
& 0.019\scriptsize$\pm$0.003 & 0.018\scriptsize$\pm$0.003 \\
\cmidrule{2-7}
& IPM \citeyear{pmlr-v238-qian24a} & 0.004\scriptsize$\pm$0.001 & 0.051\scriptsize$\pm$0.010 & 0.155\scriptsize$\pm$0.031
& 0.011\scriptsize$\pm$0.003 & 0.011\scriptsize$\pm$0.003\\
& IPM-G \citeyear{pmlr-v238-qian24a} & 0.003\scriptsize$\pm$0.0001 & 0.035\scriptsize$\pm$0.006 & 0.107\scriptsize$\pm$0.018
& 0.007\scriptsize$\pm$0.001 & 0.007\scriptsize$\pm$0.001 \\
\cmidrule{2-7}
& IPM$_{16}$ (Ours) & 0.015\scriptsize$\pm$0.007 & 0.074\scriptsize$\pm$0.025 & 0.136\scriptsize$\pm$0.027
& 0.079\scriptsize$\pm$0.008 & 0.080\scriptsize$\pm$0.008\\
& IPM$_{32}$ (Ours) & 0.020\scriptsize$\pm$0.010 & 0.082\scriptsize$\pm$0.026  & 0.145\scriptsize$\pm$0.024
& 0.074\scriptsize$\pm$0.012 & 0.075\scriptsize$\pm$0.012\\
& IPM-G$_{16}$ (Ours) & 0.010\scriptsize$\pm$0.005 & 0.047\scriptsize$\pm$0.011 & 0.106\scriptsize$\pm$0.023
& 0.055\scriptsize$\pm$0.011 & 0.055\scriptsize$\pm$0.012 \\
& IPM-G$_{32}$ (Ours) & 0.012\scriptsize$\pm$0.005 & 0.051\scriptsize$\pm$0.012 & 0.111\scriptsize$\pm$0.023
& 0.054\scriptsize$\pm$0.012 & 0.053\scriptsize$\pm$0.013 \\
\cmidrule{2-7}
& Feas.$_{16}$ (Ours) & 3.433 \(\times\) \(10^{-7}\) & 3.148 \(\times\) \(10^{-7}\) & 3.357 \(\times\) \(10^{-7}\)
& 2.553 \(\times\) \(10^{-7}\) & 2.782 \(\times\) \(10^{-7}\) \\
& Feas.$_{32}$ (Ours) & 3.486 \(\times\) \(10^{-7}\) & 3.204 \(\times\) \(10^{-7}\) & 3.412 \(\times\) \(10^{-7}\)
& 2.627 \(\times\) \(10^{-7}\) & 2.859 \(\times\) \(10^{-7}\) \\
& Feas.-G$_{16}$ (Ours) & 3.376 \(\times\) \(10^{-7}\) & 3.054 \(\times\) \(10^{-7}\) & 3.328 \(\times\) \(10^{-7}\)
& 2.485 \(\times\) \(10^{-7}\) & 2.750 \(\times\) \(10^{-7}\) \\
& Feas.-G$_{32}$ (Ours) & 3.489 \(\times\) \(10^{-7}\) & 3.127 \(\times\) \(10^{-7}\) & 3.369 \(\times\) \(10^{-7}\)
& 2.489 \(\times\) \(10^{-7}\) & 2.962 \(\times\) \(10^{-7}\) \\

\bottomrule

\end{tabular}
}
\end{table*}

\begin{table*}[t]
\caption{Markowitz portfolio problem, size generalization. We fix the density hyperparameters or the average degree of nodes as compared to the training set. The size with the star symbol * is where we train the models. The methods with postfix $\text{-G}$ are with the global node. }
\label{tab:size_gen_port}
\centering
\resizebox{.7\textwidth}{!}{
\begin{tabular}{ccccccc}
\toprule
& \textbf{Fix} & -- &  \multicolumn{2}{c}{\textbf{Density}} & \multicolumn{2}{c}{\textbf{Degree}} \\
& \textbf{Size} & 800* & 1000 & 1200 & 1000 & 1200 \\
\midrule

{\multirow{12}{*}{Obj. gap [\%]}}
& Naive \citeyear{chen2024qp} & 11.075\scriptsize$\pm$0.363 & 39.487\scriptsize$\pm$1.472 & 89.422\scriptsize$\pm$8.968
& 37.366\scriptsize$\pm$0.202 & 80.900\scriptsize$\pm$1.199 \\
& Naive-G \citeyear{chen2024qp} & 2.446\scriptsize$\pm$0.756 & 50.303\scriptsize$\pm$2.226 & 114.081\scriptsize$\pm$2.407
& 43.062\scriptsize$\pm$3.737 & 103.827\scriptsize$\pm$4.694 \\
\cmidrule{2-7}
& IPM \citeyear{pmlr-v238-qian24a} & 11.075\scriptsize$\pm$0.363 & 59.571\scriptsize$\pm$3.657 & 123.412\scriptsize$\pm$16.006
& 53.575\scriptsize$\pm$5.673 & 122.935\scriptsize$\pm$17.091 \\
& IPM-G \citeyear{pmlr-v238-qian24a} & 0.717\scriptsize$\pm$0.353 & 60.532\scriptsize$\pm$2.009 & 123.601\scriptsize$\pm$6.519
& 53.641\scriptsize$\pm$3.141 & 127.39\scriptsize$\pm$10.328 \\
\cmidrule{2-7}
& IPM$_{16}$ (Ours) & 1.241\scriptsize$\pm$0.123 & 59.885\scriptsize$\pm$1.284 & 134.475\scriptsize$\pm$4.046
& 48.010\scriptsize$\pm$1.222 & 107.825\scriptsize$\pm$3.340\\
& IPM$_{32}$ (Ours) & 1.168\scriptsize$\pm$0.048 & 58.791\scriptsize$\pm$0.902 & 133.522\scriptsize$\pm$3.104
& 46.299\scriptsize$\pm$0.963 & 104.649\scriptsize$\pm$3.020\\
& IPM-G$_{16}$ (Ours) & 1.672\scriptsize$\pm$0.302 & 50.184\scriptsize$\pm$2.841 & 120.491\scriptsize$\pm$3.303
& 41.294\scriptsize$\pm$3.882 & 97.128\scriptsize$\pm$10.011 \\
& IPM-G$_{32}$ (Ours) & 1.838\scriptsize$\pm$0.720 & 48.149\scriptsize$\pm$3.959 & 116.984\scriptsize$\pm$3.736
& 39.947\scriptsize$\pm$4.891 & 95.537\scriptsize$\pm$10.651 \\
\cmidrule{2-7}
& Feas.$_{16}$ (Ours) & 1.069\scriptsize$\pm$0.135 & 30.945\scriptsize$\pm$7.768 & 69.905\scriptsize$\pm$19.248
& 24.286\scriptsize$\pm$7.778 & 44.202\scriptsize$\pm$12.299 \\
& Feas.$_{32}$ (Ours) & 0.935\scriptsize$\pm$0.062 & 18.755\scriptsize$\pm$0.100 & 43.238\scriptsize$\pm$13.631
& 13.420\scriptsize$\pm$4.255 & 25.170\scriptsize$\pm$6.707 \\
& Feas.-G$_{16}$ (Ours) & 1.076\scriptsize$\pm$0.161 & 8.211\scriptsize$\pm$2.691 & 22.804\scriptsize$\pm$7.514
& 5.348\scriptsize$\pm$1.463 & 10.702\scriptsize$\pm$3.327 \\
& Feas.-G$_{32}$ (Ours) & 1.005\scriptsize$\pm$0.159 & 5.303\scriptsize$\pm$1.279 & 14.530\scriptsize$\pm$4.671
& 3.412\scriptsize$\pm$0.835 & 6.389\scriptsize$\pm$1.285 \\
\midrule

{\multirow{12}{*}{Cons. vio.}}
& Naive \citeyear{chen2024qp} & 0.083\scriptsize$\pm$0.001 & 0.266\scriptsize$\pm$0.011 & 0.397\scriptsize$\pm$0.015
& 0.271\scriptsize$\pm$0.008 & 0.382\scriptsize$\pm$0.009  \\
& Naive-G \citeyear{chen2024qp} & 0.017\scriptsize$\pm$0.006 & 0.184\scriptsize$\pm$0.012 & 0.330\scriptsize$\pm$0.017
& 0.176\scriptsize$\pm$0.012 & 0.331\scriptsize$\pm$0.017 \\
\cmidrule{2-7}
& IPM \citeyear{pmlr-v238-qian24a} & 0.011\scriptsize$\pm$0.005 & 0.237\scriptsize$\pm$0.027 & 0.435\scriptsize$\pm$0.039
& 0.235\scriptsize$\pm$0.034 & 0.440\scriptsize$\pm$0.048 \\
& IPM-G \citeyear{pmlr-v238-qian24a} & 0.006\scriptsize$\pm$0.003 & 0.239\scriptsize$\pm$0.004 & 0.409\scriptsize$\pm$0.011
& 0.231\scriptsize$\pm$0.002 & 0.422\scriptsize$\pm$0.013 \\
\cmidrule{2-7}
& IPM$_{16}$ (Ours) & 0.007\scriptsize$\pm$0.001 & 0.192\scriptsize$\pm$0.010 & 0.352\scriptsize$\pm$0.015
& 0.175\scriptsize$\pm$0.007 & 0.331\scriptsize$\pm$0.008\\
& IPM$_{32}$ (Ours) & 0.008\scriptsize$\pm$0.000 & 0.187\scriptsize$\pm$0.012 & 0.347\scriptsize$\pm$0.016
& 0.169\scriptsize$\pm$0.008 & 0.319\scriptsize$\pm$0.009\\
& IPM-G$_{16}$ (Ours) & 0.011\scriptsize$\pm$0.003 & 0.156\scriptsize$\pm$0.019 & 0.288\scriptsize$\pm$0.040
& 0.153\scriptsize$\pm$0.015 & 0.288\scriptsize$\pm$0.039 \\
& IPM-G$_{32}$ (Ours) & 0.015\scriptsize$\pm$0.005 & 0.162\scriptsize$\pm$0.015 & 0.297\scriptsize$\pm$0.038
& 0.160\scriptsize$\pm$0.011 & 0.293\scriptsize$\pm$0.037 \\
\cmidrule{2-7}
& Feas.$_{16}$ (Ours) & 2.228 \(\times\) \(10^{-8}\) & 3.054 \(\times\) \(10^{-8}\) & 3.377 \(\times\) \(10^{-8}\)
& 2.779 \(\times\) \(10^{-8}\) & 3.249 \(\times\) \(10^{-8}\) \\
& Feas.$_{32}$ (Ours) & 2.179 \(\times\) \(10^{-8}\) & 2.873 \(\times\) \(10^{-8}\) & 3.491 \(\times\) \(10^{-8}\)
& 2.693 \(\times\) \(10^{-8}\) & 3.386 \(\times\) \(10^{-8}\) \\
& Feas.-G$_{16}$ (Ours) & 2.086 \(\times\) \(10^{-8}\) & 2.570 \(\times\) \(10^{-8}\) & 2.980 \(\times\) \(10^{-8}\)
& 2.384 \(\times\) \(10^{-8}\) & 3.203 \(\times\) \(10^{-8}\) \\
& Feas.-G$_{32}$ (Ours) & 3.427 \(\times\) \(10^{-8}\) & 8.568 \(\times\) \(10^{-8}\) & 2.235 \(\times\) \(10^{-8}\)
& 2.692 \(\times\) \(10^{-8}\) & 2.160 \(\times\) \(10^{-8}\) \\

\bottomrule

\end{tabular}
}
\end{table*}

\subsubsection{LP as special QP}
\label{sec:lp_as_qp}
We observe that LPs are special cases of QPs. If we remove quadratic term in the objective of \cref{eq:standard_qp}, we arrive at a standard LP form
\begin{equation*}
\begin{aligned}
\label{eq:standard_lp}
\min_{\vec{x} \in \mathbb{Q}^n_{\geq 0}} \quad & \vec{c}\trans \vec{x} \\
\text{s.t.} \quad & \vec{A} \vec{x} = \vec{b}.
\end{aligned}
\end{equation*}

Since there is no quadratic matrix, no edges between variable nodes exist. The graph representation is similar to \citet{chen2022representing} for bipartite graphs and \citet{pmlr-v238-qian24a,ding2020accelerating} for tripartite graphs. We generate LP instances by relaxing well-known mixed-integer linear programming problems, similar to \citet{pmlr-v238-qian24a}; see  \cref{tab:main_lp} for results.  

\begin{table}[t]
\caption{Experiment results on LP instances. The methods with postfix -G is with the global node. We do not show the violation of the feasibility method as it is not informative. }
\label{tab:main_lp}
\centering
\resizebox{0.6\columnwidth}{!}{
\begin{tabular}{ccccccc}
\toprule
    & \textbf{Method}  & MPNN  & Setcover & Indset  & Cauc & Fac \\

	\midrule
	{\multirow{8}{*}{Obj. gap [\%]}}   
    & {\multirow{2}{*}{Naive \citeyear{chen2022representing}}}
     & GCN   & 0.743\scriptsize$\pm$0.013 & 0.380\scriptsize$\pm$0.035  & 0.630\scriptsize$\pm$0.074 &  0.389\scriptsize$\pm$0.037 \\
	 &   & GIN & 0.681\scriptsize$\pm$0.017 & 0.408\scriptsize$\pm$0.019  & 0.465\scriptsize$\pm$0.008 &  0.329\scriptsize$\pm$0.003 \\
    \cmidrule{2-7}

    & {\multirow{2}{*}{Naive-G \citeyear{chen2022representing}}}
     & GCN   & 0.706\scriptsize$\pm$0.036 & 0.357\scriptsize$\pm$0.021  & 0.557\scriptsize$\pm$0.092 &  0.336\scriptsize$\pm$0.028 \\
	 &   & GIN & 0.641\scriptsize$\pm$0.024 & 0.401\scriptsize$\pm$0.036  & 0.460\scriptsize$\pm$0.026 &  0.620\scriptsize$\pm$0.089 \\
    \cmidrule{2-7}
    
    & {\multirow{2}{*}{Feas.}}  & GCN &  0.101\scriptsize$\pm$0.012 & 0.065\scriptsize$\pm$0.005  & 0.425\scriptsize$\pm$0.041 &  0.072\scriptsize$\pm$0.016 \\
    &  & GIN      & 0.123\scriptsize$\pm$0.026 & 0.088\scriptsize$\pm$0.014  & 0.529\scriptsize$\pm$0.007 &  0.063\scriptsize$\pm$0.034 \\

    \cmidrule{2-7}
    
    & {\multirow{2}{*}{Feas.-G}}  & GCN  & 0.120\scriptsize$\pm$0.026 & 0.089\scriptsize$\pm$0.030  & 0.338\scriptsize$\pm$0.034 &  0.069\scriptsize$\pm$0.007 \\
    &  & GIN      & 0.151\scriptsize$\pm$0.038 & 0.080\scriptsize$\pm$0.008  & 0.333\scriptsize$\pm$0.010 &  0.033\scriptsize$\pm$0.009 \\
    
    \midrule

    {\multirow{4}{*}{Cons. vio.}} 
    & {\multirow{2}{*}{Naive \citeyear{chen2022representing}}}
    & GCN   & 0.024\scriptsize$\pm$0.003 & 0.023\scriptsize$\pm$0.003  & 0.028\scriptsize$\pm$0.002 &  0.012\scriptsize$\pm$0.001 \\
    &   & GIN   & 0.026\scriptsize$\pm$0.003 & 0.025\scriptsize$\pm$0.002  & 0.025\scriptsize$\pm$0.002 &  0.009\scriptsize$\pm$0.002 \\

    \cmidrule{2-7}

    & {\multirow{2}{*}{Naive-G \citeyear{chen2022representing}}}
    & GCN   & 0.027\scriptsize$\pm$0.003 & 0.024\scriptsize$\pm$0.003  & 0.028\scriptsize$\pm$0.001 &  0.017\scriptsize$\pm$0.005 \\
    &   & GIN   & 0.033\scriptsize$\pm$0.007 & 0.025\scriptsize$\pm$0.001  & 0.025\scriptsize$\pm$0.001 &  0.014\scriptsize$\pm$0.003 \\
    
    \bottomrule
\end{tabular}
}
\end{table}

\subsection{Datasets}

Here, we give details on dataset generation. 

\label{sec:ds}
\paragraph{Generic QP} For generic QP problems, we consider the standard form of QP but with inequalities,
\begin{equation}
\begin{aligned}
\label{eq:ineq_qp}
\min_{\vec{x} \in \mathbb{R}^n_{\geq 0}} \quad & \frac{1}{2} \vec{x}\trans \vec{Q} \vec{x} + \vec{c}\trans \vec{x} \\
\text{s.t.} \quad & \vec{A} \vec{x} \leq \vec{b}.
\end{aligned}
\end{equation}
We generate the matrix $\vec{A}$ and vectors $\vec{c}, \vec{b}$ with entries drawn i.i.d.\ from the standard normal distribution $\mathcal{N}(0, 1)$. To maintain sparsity, we independently drop out each entry of $\vec{A}$ with probability $\tau$ using a Bernoulli distribution $\mathcal{B}(\tau)$, setting the dropped entries to zero. We generate the quadratic matrix $\vec{Q}$ simply with the $\texttt{make\_sparse\_spd\_matrix}$ function from SciPy given the desired density. Finally, we add slack variables to the constraints to make them into equalities. 

\paragraph{Soft margin SVM} For the QP problems generated from SVMs \citep{bishop2006pattern}, we follow the form,
\begin{equation*}
\begin{aligned}
\label{eq:svm_qp}
\min_{\vec{w}} \quad & \vec{w}\trans \vec{w} + \lambda \bm{1}\trans \boldsymbol{\xi} \\
\text{s.t.} \quad & \vec{y} \odot \vec{X} \vec{w} \geq \bm{1} - \boldsymbol{\xi} \\
& \boldsymbol{\xi} \in \mathbb{R}^n_{\geq 0}.
\end{aligned}
\end{equation*}
The $\vec{w} \in \mathbb{R}^n$ above denotes the vector of learnable parameters in the SVM we try to optimize, and $(\vec{X} \, \vec{y} \in \mathbb{R}^m)$ are the data points. Note that we have no constraints on $m, n$, and $\vec{X}$ must not be full rank. Here, $\boldsymbol{\xi} \in \mathbb{R}^m$ is the margin parameters we try to minimize, and $\odot$ is element-wise multiplication. Given density hyperparameter $\tau \in (0,1)$, we generate two sub-matrices $\vec{X}_1, \vec{X}_2 \in \mathbb{R}^{\frac{m}{2} \times n}$, with the entries drawn i.i.d.\ from the normal distributions $\mathcal{N}(-\dfrac{1}{n \tau}, \dfrac{1}{n \tau})$ and $\mathcal{N}(\dfrac{1}{n \tau}, \dfrac{1}{n \tau})$, respectively, and apply random drop for sparsity. The labels are $\{1, -1\}$ for the data points in the two sub-matrices. Finally, we also add slack variables to turn the inequalities into equalities.

\paragraph{Markowitz portfolio optimization} There are various formulations of the Markowitz portfolio optimization problem. We consider the following form,
\begin{equation*}
\begin{aligned}
\min_{\vec{x} \in \mathbb{R}^n_{\geq 0}} \quad & \vec{x}\trans \boldsymbol{\Sigma} \vec{x} \\
\text{s.t.} \quad & \boldsymbol{\mu}\trans \vec{x} = r \\
& \bm{1}\trans \vec{x} = 1.
\end{aligned}
\end{equation*}
We generate the symmetric, PSD matrix $\boldsymbol{\Sigma}$ again with the $\texttt{make\_sparse\_spd\_matrix}$ function from SciPy, and we sample the entries of $\boldsymbol{\mu}$ i.i.d.\ from the normal distribution $\mathcal{N}(0, 1)$. Here, $r$ is sampled from the uniform distribution $\mathcal{U}(0, 1)$. 

\paragraph{LP instances}
For the LP instances in \cref{sec:lp_as_qp}, we follow the setting of \citet{pmlr-v238-qian24a}.

\paragraph{Dataset hyperparameters}
Here, we list the hyperparameters of our dataset generation. For generic QP problems, we generate QPs of the form \cref{eq:ineq_qp} and use equality for the constraints. \Cref{tab:ds_gen} lists the configurations for training and size generalization experiments and the hyperparameters with which we generate a dataset to train a GCN for QPLIB experiments. 

\begin{table}[t]
\caption{Hyperparameters for generating generic QP instances. The comments in the bracket indicate whether we fix the density or average node degree compared with the training data.}
\label{tab:ds_gen}
\centering
\resizebox{0.7\columnwidth}{!}{
\begin{tabular}{cccccc}
\toprule
  \textbf{Dataset}  & \#cons.  & \#vars.  & $\vec{A}$ dens. & $\vec{Q}$ dens.  & nums. \\
    \midrule
  Training & 400 & 400 & 0.01  & 0.01 & 1000 \\
  Larger (dens.) & 600 & 600 & 0.01 & 0.01 & 100 \\
  Largest (dens.) & 800 & 800 & 0.01 & 0.01 & 100 \\
    \midrule
  Larger (deg.) & 600 & 600 & 0.005 & 0.007 & 100 \\
  Largest (deg.) & 800 & 800 & 0.004 & 0.005 & 100 \\
  \midrule
  QPLIB & [2000, 3000] & [2000, 3000] & [\num{1.e-4}, \num{1.e-3}] & [\num{1.e-5}, \num{1.e-3}] & 1000 \\
    \bottomrule
\end{tabular}
}
\end{table}
We generate instances for SVM problems with the hyperparameters in \cref{tab:ds_svm}. There is no hyperparameter for the density of the quadratic matrix, as it is always diagonal. 

\begin{table}[t]
\caption{Hyperparameters for generating SVM instances. The comments in the bracket indicate whether we fix the density or average node degree compared with the training data.}
\label{tab:ds_svm}
\centering
\resizebox{0.4\columnwidth}{!}{
\begin{tabular}{ccccc}
\toprule
  Dataset  & \#cons.  & \#vars.  &$\vec{A}$ dens.  & nums. \\
    \midrule
  Training & 400 & 400 & 0.01 & 1000 \\
  Larger (dens.) & 600 & 600 & 0.01 & 100 \\
  Largest (dens.) & 800 & 800 & 0.01 & 100 \\
    \midrule
  Larger (deg.) & 600 & 600 & 0.008 & 100 \\
  Largest (deg.) & 800 & 800 & 0.006 & 100 \\
    \bottomrule
\end{tabular}
}
\end{table}

The hyperparameters of portfolio problems are shown in \cref{tab:ds_port}. There is no hyperparameter for the number of constraints, as it is a constant. However, we have control over the density of the quadratic matrix. 
\begin{table}[t]
\caption{Hyperparameters for generating portfolio instances. The comments in the bracket indicate whether we fix the density or average node degree compared with the training data.}
\label{tab:ds_port}
\centering
\resizebox{0.3\columnwidth}{!}{
\begin{tabular}{ccccc}
\toprule
  Dataset & \#vars. & $\vec{Q}$ dens.  & nums. \\
    \midrule
  Training & 800 & 0.01 & 1000 \\
  Larger (dens.) & 1000 & 0.01 & 100 \\
  Largest (dens.) & 1200 & 0.01 & 100 \\
    \midrule
  Larger (deg.) & 1000 & 0.008 & 100 \\
  Largest (deg.) & 1200 & 0.006 & 100 \\
    \bottomrule
\end{tabular}
}
\end{table}
Finally, the hyperparameters for the LP instances in \cref{tab:main_lp} are shown in \cref{tab:ds_setc,tab:ds_mis,tab:ds_cauc,tab:ds_fac}.

\begin{table}[t]
\caption{Hyperparameters for generating set cover problem instances.}
\label{tab:ds_setc}
\centering
\resizebox{0.4\columnwidth}{!}{
\begin{tabular}{ccccc}
\toprule
  Dataset  & \#cons.  & \#vars.  & $\vec{A}$ dens.  & nums. \\
    \midrule
  Set cover & [200,300] & [300,400] & 0.008 & 1000 \\
    \bottomrule
\end{tabular}
}
\end{table}

\begin{table}[t]
\caption{Hyperparameters for generating maximum independent set problem instances.}
\label{tab:ds_mis}
\centering
\resizebox{0.4\columnwidth}{!}{
\begin{tabular}{cccc}
\toprule
  Dataset  & \#nodes  & $p(u,v), u,v \in E(G)$  & nums. \\
    \midrule
  Max ind. set & [250,300] & 0.01 & 1000 \\
    \bottomrule
\end{tabular}
}
\end{table}

\begin{table}[t]
\caption{Hyperparameters for generating combinatorial auction problem instances.}
\label{tab:ds_cauc}
\centering
\resizebox{0.3\columnwidth}{!}{
\begin{tabular}{cccc}
\toprule
  Dataset  & \#items.  & \#bids  & nums. \\
    \midrule
  Comb. auc. & [300,400] & [300,400] & 1000 \\
    \bottomrule
\end{tabular}
}
\end{table}

\begin{table}[t]
\caption{Hyperparameters for generating capacitated facility location problem instances.}
\label{tab:ds_fac}
\centering
\resizebox{0.3\columnwidth}{!}{
\begin{tabular}{ccccc}
\toprule
  Dataset  & \#custom  & \#fac. & ratio  & nums. \\
    \midrule
  Cap. fac. loc. & [60,70] & 5 & 0.5 & 1000 \\
    \bottomrule
\end{tabular}
}
\end{table}

\begin{table*}[t]
\caption{Generic QP problem, size generalization. We fix the density hyperparameters or the average degree of nodes as compared to the training set. The size with the star symbol * is where we train the models on. The methods with postfix $\text{-G}$ are with the global node.}
\label{tab:size_gen_generic}
\centering
\resizebox{.7\textwidth}{!}{
\begin{tabular}{ccccccc}
\toprule
& \textbf{Fix} & -- &  \multicolumn{2}{c}{\textbf{Density}} & \multicolumn{2}{c}{\textbf{Degree}} \\
&  \textbf{Size} & 400* & 600 & 800 & 600 & 800 \\
\midrule

{\multirow{12}{*}{Obj. gap [\%]}}
& Naive \citeyear{chen2024qp} & 2.547\scriptsize$\pm$0.126 & 11.480\scriptsize$\pm$0.297 & 20.663\scriptsize$\pm$0.813
& 1.787\scriptsize$\pm$0.196 & 1.402\scriptsize$\pm$0.127  \\
& Naive-G \citeyear{chen2024qp} & 2.281\scriptsize$\pm$0.150 & 8.461\scriptsize$\pm$1.078 & 15.443\scriptsize$\pm$1.500
& 1.744\scriptsize$\pm$0.134 & 1.302\scriptsize$\pm$0.091 \\
\cmidrule{2-7}
& IPM \citeyear{pmlr-v238-qian24a} & 1.894\scriptsize$\pm$0.166 & 10.521\scriptsize$\pm$1.057 & 20.936\scriptsize$\pm$1.106
& 1.510\scriptsize$\pm$0.126 & 1.153\scriptsize$\pm$0.063 \\
& IPM-G \citeyear{pmlr-v238-qian24a} & 1.829\scriptsize$\pm$0.079 & 10.987\scriptsize$\pm$1.132 & 20.129\scriptsize$\pm$1.782
& 1.541\scriptsize$\pm$0.021 & 1.062\scriptsize$\pm$0.132 \\
\cmidrule{2-7}
& IPM$_{16}$ (Ours) & 10.679\scriptsize$\pm$4.351 & 22.448\scriptsize$\pm$1.307 & 29.633\scriptsize$\pm$1.739
& 11.982\scriptsize$\pm$4.683 & 11.989\scriptsize$\pm$5.584 \\
& IPM$_{32}$ (Ours) & 2.236\scriptsize$\pm$0.249 & 7.685\scriptsize$\pm$2.917 & 19.310\scriptsize$\pm$4.806
& 2.003\scriptsize$\pm$0.472 & 1.856\scriptsize$\pm$0.370\\
& IPM-G$_{16}$ (Ours) & 10.715\scriptsize$\pm$5.344 & 12.170\scriptsize$\pm$5.604 & 14.648\scriptsize$\pm$8.498
& 10.692\scriptsize$\pm$5.573 & 9.694\scriptsize$\pm$6.355 \\
& IPM-G$_{32}$ (Ours) & 1.661\scriptsize$\pm$0.199 & 4.961\scriptsize$\pm$1.219 & 3.462\scriptsize$\pm$0.692
& 1.648\scriptsize$\pm$0.238 & 1.350\scriptsize$\pm$0.243 \\
\cmidrule{2-7}
& Feas.$_{16}$ (Ours) & 0.119\scriptsize$\pm$0.013 & 0.948\scriptsize$\pm$0.111 & 5.867\scriptsize$\pm$0.667
& 0.118\scriptsize$\pm$0.062 & 0.127\scriptsize$\pm$0.002 \\
& Feas.$_{32}$ (Ours) & 0.049\scriptsize$\pm$0.015 & 0.615\scriptsize$\pm$0.064 & 4.839\scriptsize$\pm$0.602
& 0.045\scriptsize$\pm$0.011 & 0.046\scriptsize$\pm$0.007 \\
& Feas.-G$_{16}$ (Ours) & 0.163\scriptsize$\pm$0.003 & 1.480\scriptsize$\pm$0.049 & 8.609\scriptsize$\pm$0.660
& 0.180\scriptsize$\pm$0.037 & 0.184\scriptsize$\pm$0.005 \\
& Feas.-G$_{32}$ (Ours) & 0.070\scriptsize$\pm$0.004 & 0.991\scriptsize$\pm$0.031 & 7.607\scriptsize$\pm$0.607
& 0.077\scriptsize$\pm$0.010 & 0.071\scriptsize$\pm$0.005 \\
\midrule

{\multirow{12}{*}{Cons. vio.}}
& Naive \citeyear{chen2024qp} & 0.012\scriptsize$\pm$0.004 & 0.053\scriptsize$\pm$0.002 & 0.099\scriptsize$\pm$0.002
& 0.014\scriptsize$\pm$0.002 & 0.014\scriptsize$\pm$0.002  \\
& Naive-G \citeyear{chen2024qp} & 0.018\scriptsize$\pm$0.001 & 0.050\scriptsize$\pm$0.004 & 0.095\scriptsize$\pm$0.006
& 0.017\scriptsize$\pm$0.001 & 0.017\scriptsize$\pm$0.001 \\
\cmidrule{2-7}
& IPM \citeyear{pmlr-v238-qian24a} & 0.013\scriptsize$\pm$0.001 & 0.048\scriptsize$\pm$0.002 & 0.092\scriptsize$\pm$0.003
& 0.012\scriptsize$\pm$0.001 & 0.011\scriptsize$\pm$0.001 \\
& IPM-G \citeyear{pmlr-v238-qian24a} & 0.016\scriptsize$\pm$0.0003 & 0.048\scriptsize$\pm$0.002 & 0.089\scriptsize$\pm$0.003
& 0.016\scriptsize$\pm$0.001 & 0.016\scriptsize$\pm$0.0003\\
\cmidrule{2-7}
& IPM$_{16}$ (Ours) & 0.028\scriptsize$\pm$0.008 & 0.040\scriptsize$\pm$0.006 & 0.064\scriptsize$\pm$0.006
& 0.028\scriptsize$\pm$0.008 & 0.027\scriptsize$\pm$0.008\\
& IPM$_{32}$ (Ours) & 0.031\scriptsize$\pm$0.008 & 0.050\scriptsize$\pm$0.006 & 0.078\scriptsize$\pm$0.007
& 0.030\scriptsize$\pm$0.009 & 0.030\scriptsize$\pm$0.008\\
& IPM-G$_{16}$ (Ours) & 0.026\scriptsize$\pm$0.007 & 0.048\scriptsize$\pm$0.007 & 0.069\scriptsize$\pm$0.004
& 0.026\scriptsize$\pm$0.008 & 0.026\scriptsize$\pm$0.007 \\
& IPM-G$_{32}$ (Ours) & 0.029\scriptsize$\pm$0.009 & 0.053\scriptsize$\pm$0.007 & 0.076\scriptsize$\pm$0.004
& 0.029\scriptsize$\pm$0.009 & 0.028\scriptsize$\pm$0.009 \\
\cmidrule{2-7}
& Feas.$_{16}$ (Ours) & 9.786  \(\times\) \(10^{-8}\) & 1.138  \(\times\) \(10^{-7}\) & 1.279  \(\times\) \(10^{-7}\)
& 1.093  \(\times\) \(10^{-7}\) & 1.178  \(\times\) \(10^{-7}\) \\
& Feas.$_{32}$ (Ours) & 1.215  \(\times\) \(10^{-7}\) & 1.376  \(\times\) \(10^{-7}\) & 1.516  \(\times\) \(10^{-7}\)
& 1.318  \(\times\) \(10^{-7}\) & 1.396  \(\times\) \(10^{-7}\) \\
& Feas.-G$_{16}$ (Ours) & 9.080  \(\times\) \(10^{-8}\) & 1.161  \(\times\) \(10^{-7}\) & 1.546  \(\times\) \(10^{-7}\)
& 1.021  \(\times\) \(10^{-7}\) & 1.148  \(\times\) \(10^{-7}\) \\
& Feas.-G$_{32}$ (Ours) & 1.142  \(\times\) \(10^{-7}\) & 1.447  \(\times\) \(10^{-7}\) & 1.769  \(\times\) \(10^{-7}\)
& 1.228  \(\times\) \(10^{-7}\) & 1.346  \(\times\) \(10^{-7}\) \\

\bottomrule

\end{tabular}
}
\end{table*}

\begin{table*}[t]
\caption{Computational efficiency on various problems trained with GCN architecture. The GCN was trained on datasets of smaller size and evaluated on larger sizes. The larger instances are generated by increasing the problem size and fixing the density parameters. We compare both our methods at 16 and 32 iterations against the naive MPNN approach \citet{chen2024qp}, the fix-step IPM-MPNN \citet{pmlr-v238-qian24a} and three traditional QP solvers. The methods with postfix $\text{-G}$ are with the global node. We report the neural networks based results with three models pretrained with various random seeds, with the mean and standard deviation reported.}
\label{tab:runtime}
\centering
\resizebox{1.\textwidth}{!}{
\begin{tabular}{c|ccc|ccc|ccc}
\toprule
 \textbf{Problem} & \multicolumn{3}{c}{ \textbf{Generic}} & \multicolumn{3}{c}{ \textbf{SVM}} & \multicolumn{3}{c}{ \textbf{Portfolio}} \\
 \textbf{Size} & 400* & 600 & 800 & 400* & 600 & 800 & 800* & 1000 & 1200  \\

\midrule
CVXOPT & 0.764\scriptsize$\pm$0.198 & 2.181\scriptsize$\pm$0.168 & 5.177\scriptsize$\pm$0.757 
& 0.495\scriptsize$\pm$0.050 & 1.205\scriptsize$\pm$0.331 & 2.503\scriptsize$\pm$0.606 
& 0.178\scriptsize$\pm$0.063 & 0.214\scriptsize$\pm$0.077 & 0.411\scriptsize$\pm$0.085 \\
OSQP & 0.485\scriptsize$\pm$0.165 & 1.470\scriptsize$\pm$0.109 & 3.865\scriptsize$\pm$0.259 
& 0.019\scriptsize$\pm$0.001 & 0.072\scriptsize$\pm$0.008 & 0.270\scriptsize$\pm$0.085 
& 0.040\scriptsize$\pm$0.006 & 0.065\scriptsize$\pm$0.013 & 0.121\scriptsize$\pm$0.042 \\
Gurobi & 0.813\scriptsize$\pm$0.088 & 2.691\scriptsize$\pm$0.158 & 6.682\scriptsize$\pm$0.579 
& 0.013\scriptsize$\pm$0.001 & 0.057\scriptsize$\pm$0.016 & 0.125\scriptsize$\pm$0.018 
& 0.221\scriptsize$\pm$0.032 & 0.303\scriptsize$\pm$0.022 & 0.520\scriptsize$\pm$0.063 \\
\midrule
Naive \citeyear{chen2024qp} & 0.010\scriptsize$\pm$0.002 & 0.026\scriptsize$\pm$0.008 & 0.018\scriptsize$\pm$0.005 
& 0.015\scriptsize$\pm$0.003 & 0.017\scriptsize$\pm$0.003 & 0.010\scriptsize$\pm$0.004 
& 0.008\scriptsize$\pm$0.002 & 0.009\scriptsize$\pm$0.003 & 0.010\scriptsize$\pm$0.005 \\
Naive-G \citeyear{chen2024qp} & 0.021\scriptsize$\pm$0.005 & 0.045\scriptsize$\pm$0.061 & 0.061\scriptsize$\pm$0.007 & 0.022\scriptsize$\pm$0.006 & 0.025\scriptsize$\pm$0.005 & 0.028\scriptsize$\pm$0.005 & 0.022\scriptsize$\pm$0.006 & 0.023\scriptsize$\pm$0.007 & 0.024\scriptsize$\pm$0.009\\
\midrule
IPM \citeyear{pmlr-v238-qian24a} & 0.014\scriptsize$\pm$0.002 & 0.031\scriptsize$\pm$0.002 & 0.048\scriptsize$\pm$0.001 & 0.010\scriptsize$\pm$0.001 & 0.010\scriptsize$\pm$0.001 & 0.013\scriptsize$\pm$0.002 & 0.012\scriptsize$\pm$0.003 & 0.018\scriptsize$\pm$0.002 & 0.017\scriptsize$\pm$0.001 \\
IPM-G \citeyear{pmlr-v238-qian24a} & 0.025\scriptsize$\pm$0.0004 & 0.036\scriptsize$\pm$0.002 & 0.057\scriptsize$\pm$0.003 
& 0.022\scriptsize$\pm$0.004 & 0.025\scriptsize$\pm$0.003 & 0.026\scriptsize$\pm$0.002 
& 0.022\scriptsize$\pm$0.002 & 0.022\scriptsize$\pm$0.002 & 0.019\scriptsize$\pm$0.002 \\
\midrule
IPM$_{16}$ (Ours) & 0.150\scriptsize$\pm$0.009 & 0.377\scriptsize$\pm$0.006 & 0.704\scriptsize$\pm$0.002 & 0.136\scriptsize$\pm$0.010 & 0.137\scriptsize$\pm$0.002 & 0.138\scriptsize$\pm$0.003 & 0.136\scriptsize$\pm$0.009 & 0.124\scriptsize$\pm$0.002 & 0.152\scriptsize$\pm$0.009\\
IPM$_{32}$ (Ours) & 0.298\scriptsize$\pm$0.010 & 0.751\scriptsize$\pm$0.003 & 1.410\scriptsize$\pm$0.005 & 0.256\scriptsize$\pm$0.014 & 0.247\scriptsize$\pm$0.011 & 0.238\scriptsize$\pm$0.007 & 0.261\scriptsize$\pm$0.017 & 0.254\scriptsize$\pm$0.009 & 0.289\scriptsize$\pm$0.007\\
IPM-G$_{16}$ (Ours) & 0.277\scriptsize$\pm$0.003 & 0.479\scriptsize$\pm$0.006 & 0.824\scriptsize$\pm$0.001 
& 0.256\scriptsize$\pm$0.008 & 0.238\scriptsize$\pm$0.002 & 0.240\scriptsize$\pm$0.005 
& 0.274\scriptsize$\pm$0.010 & 0.242\scriptsize$\pm$0.006 & 0.259\scriptsize$\pm$0.006 \\
IPM-G$_{32}$ (Ours) & 0.533\scriptsize$\pm$0.006 & 0.980\scriptsize$\pm$0.004 & 1.612\scriptsize$\pm$0.001 
& 0.497\scriptsize$\pm$0.008 & 0.459\scriptsize$\pm$0.009 & 0.488\scriptsize$\pm$0.007 
& 0.500\scriptsize$\pm$0.008 & 0.494\scriptsize$\pm$0.005 & 0.522\scriptsize$\pm$0.006 \\
\midrule
Prep. & 0.052\scriptsize$\pm$0.002 & 0.148\scriptsize$\pm$0.010 & 0.314\scriptsize$\pm$0.005 
& 0.034\scriptsize$\pm$0.001 & 0.091\scriptsize$\pm$0.002 & 0.184\scriptsize$\pm$0.009 
& 0.002\scriptsize$\pm$0.0004 & 0.004\scriptsize$\pm$0.001 & 0.005\scriptsize$\pm$0.001 \\
Feas.$_{16}$ (Ours) & 0.177\scriptsize$\pm$0.003 & 0.382\scriptsize$\pm$0.001 & 0.704\scriptsize$\pm$0.001 
& 0.161\scriptsize$\pm$0.001 & 0.166\scriptsize$\pm$0.002 & 0.140\scriptsize$\pm$0.001 
& 0.177\scriptsize$\pm$0.005 & 0.171\scriptsize$\pm$0.002 & 0.170\scriptsize$\pm$0.004 \\
Feas.$_{32}$ (Ours) & 0.294\scriptsize$\pm$0.001 & 0.756\scriptsize$\pm$0.004 & 1.414\scriptsize$\pm$0.002
& 0.295\scriptsize$\pm$0.005 & 0.283\scriptsize$\pm$0.006 & 0.264\scriptsize$\pm$0.002
& 0.305\scriptsize$\pm$0.020 & 0.299\scriptsize$\pm$0.011 & 0.310\scriptsize$\pm$0.010 \\
Feas.-G$_{16}$ (Ours) & 0.324\scriptsize$\pm$0.006 & 0.482\scriptsize$\pm$0.001 & 0.821\scriptsize$\pm$0.001 & 0.297\scriptsize$\pm$0.001 & 0.255\scriptsize$\pm$0.001 & 0.254\scriptsize$\pm$0.001 & 0.262\scriptsize$\pm$0.009 & 0.276\scriptsize$\pm$0.008 & 0.270\scriptsize$\pm$0.002 \\
Feas.-G$_{32}$ (Ours) & 0.602\scriptsize$\pm$0.025 & 0.955\scriptsize$\pm$0.003 & 1.652\scriptsize$\pm$0.014 & 0.482\scriptsize$\pm$0.002 & 0.508\scriptsize$\pm$0.005 & 0.481\scriptsize$\pm$0.013 & 0.470\scriptsize$\pm$0.015 & 0.517\scriptsize$\pm$0.018 & 0.543\scriptsize$\pm$0.015 \\
\bottomrule
\end{tabular}
}
\end{table*}

\begin{figure}
\centering
\scalebox{0.8}{\definecolor{lgray}{HTML}{D6D6D6}

\begin{tikzpicture}
    \pgfplotsset{
        width=0.6\textwidth,
        height=0.4\textwidth,
        every axis/.append style={
            xlabel={\footnotesize Size},
            ylabel={\footnotesize Time (sec.)},
            xlabel style={yshift=0.2cm},
            ylabel style={yshift=-0.1cm},
            xtick={400, 600, 800},
            tick label style={font=\footnotesize},
            legend style={font=\footnotesize, at={(1.05,1)}, draw=none, anchor=north west}
        }
    }

    \begin{axis}[ymode=log, ymin=0.008, ymax=7.2, title={Inference time performance comparison}]
        \addplot+[color=lviolet, mark=*, mark size=1pt, mark options={fill=lviolet}, error bars/.cd, y dir=both, y explicit, error bar style={draw=lviolet!30}, error mark options={draw=lviolet!30,rotate=90}] 
        coordinates {(400, 0.010) +- (0, 0.002) (600, 0.026) +- (0, 0.008) (800, 0.061) +- (0, 0.007)};
        \addlegendentry{Naive}
        
        \addplot+[color=lgreen, mark=*, mark size=1pt, mark options={fill=lgreen}, error bars/.cd, y dir=both, y explicit, error bar style={draw=lgreen!30}, error mark options={draw=lgreen!30,rotate=90}]  
        coordinates {(400, 0.177) +- (0, 0.003) (600, 0.382) +- (0, 0.001) (800, 0.704) +- (0, 0.001)};
        \addlegendentry{Feas.$_{16}$}
        
        \addplot+[color=lred, mark=*, mark size=1pt, mark options={fill=lred}, error bars/.cd, y dir=both, y explicit, error bar style={draw=lred!30}, error mark options={draw=lred!30,rotate=90}] 
        coordinates {(400, 0.294) +- (0, 0.001) (600, 0.756) +- (0, 0.004) (800, 1.414) +- (0, 0.002)};
        \addlegendentry{Feas.$_{32}$}
        
        \addplot+[color=lrose, mark=*, mark size=1pt, mark options={fill=lrose}, error bars/.cd, y dir=both, y explicit, error bar style={draw=lrose!30}, error mark options={draw=lrose!30,rotate=90}] 
        coordinates {(400, 0.052) +- (0, 0.002) (600, 0.148) +- (0, 0.010) (800, 0.314) +- (0, 0.005)};
        \addlegendentry{Prep.}
        
        \addplot+[color=lblue, mark=*, mark size=1pt, mark options={fill=lblue}, error bars/.cd, y dir=both, y explicit, error bar style={draw=lblue!30}, error mark options={draw=blue!30,rotate=90}] 
        coordinates {(400, 0.764) +- (0, 0.198) (600, 2.181) +- (0, 0.168) (800, 5.177) +- (0, 0.757)};
        \addlegendentry{CVXOPT}
        
        \addplot+[color=lorange, solid, mark=*, mark size=1pt, mark options={fill=lorange}, error bars/.cd, y dir=both, y explicit, error bar style={draw=lorange!30}, error mark options={draw=lorange!30,rotate=90}] 
        coordinates {(400, 0.485) +- (0, 0.165) (600, 1.470) +- (0, 0.109) (800, 3.865) +- (0, 0.259)};
        \addlegendentry{OSQP}

        \addplot+[color=lgray, solid, mark=*, mark size=1pt, mark options={fill=lgray}, error bars/.cd, y dir=both, y explicit, error bar style={draw=lgray!30}, error mark options={draw=lgray!30,rotate=90}] 
        coordinates {(400, 0.813) +- (0, 0.088) (600, 2.691) +- (0, 0.158) (800, 6.682) +- (0, 0.579)};
        \addlegendentry{Gurobi}
    \end{axis}
    
\end{tikzpicture}}
\caption{Comparison of inference runtimes on generic problem datasets, featuring GCN without the global node and three alternative solvers.}
\label{Plot}
\end{figure}
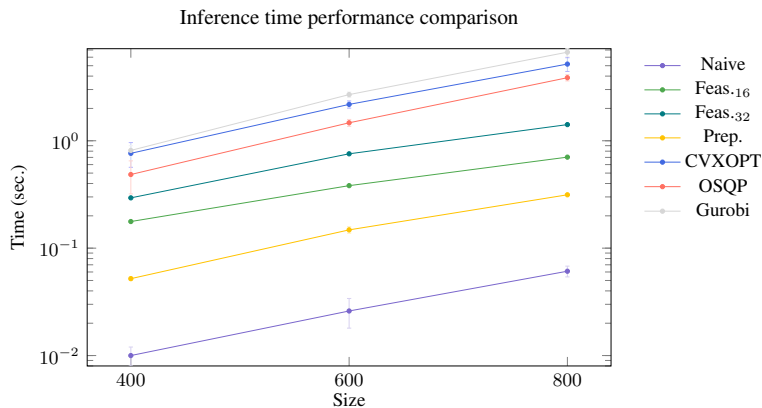

\subsection{Log-barrier function in search}
\label{sec:log_barrier_bias}
In practical implementations, there are situations where the current solution $\vec{x}^{(t)}$ is near the boundary of the positive orthant $\mathbb{Q}^n_{\geq 0}$ and the prediction of displacement is inaccurate. In such cases, the step length would be small to not violate the non-negative constraint. At the next iteration, due to the continuous nature of neural networks, the prediction will again be inaccurate since the current solution hardly moved, and a small step size will be picked. Therefore, the current solution will likely be stuck at some suboptimal point. Recall the log barrier function in IPM \citep{nocedal2006numerical}, the function 
\begin{equation}
\label{eq:barrier_func}
f \colon \mathbb{Q}^n \rightarrow \mathbb{Q}, f(\vec{x}^{(t)}) \coloneq - \bm{1}\trans \log(\vec{x}^{(t)}),
\end{equation}
where $\log$ is an element-wise operation, is added in the objective function to prevent Newton's step from being too aggressive and violating the non-negative constraint. Here, we incorporate the same function to encourage the current solution to move away from the orthant boundary. Calculating the gradient of this log barrier function w.r.t.\  $\vec{x}^{(t)}$, we have the direction vector $\nabla_{\vec{x}^{(t)}} f \coloneq - 1 / {\vec{x}^{(t)}}$. We directly subtract this vector from the predicted displacement vector $\vec{d}^{(t)}$ before applying null-space projection to them. Intuitively, this pushes the entries of the current solution that are very close to zero to larger positive numbers. To minimize the negative effect of the log barrier function on the convergence, we introduce a discount coefficient $\tau$ that scales itself down at each iteration. When $\tau \rightarrow 0$, the algorithm can still converge. However, the log barrier function still has a limitation. That is, it is not guaranteed that the entries in $1 / {\vec{x}^{(t)}}$ after the projection $\boldsymbol{\Pi}_{\vec{A}} \dfrac{1}{\vec{x}^{(t)}}$ are still positive, which may drive some entries of $\vec{x}^{(t)}$ even closer to 0. In response to this challenge, we show that this log barrier force after the null space projection still effectively pushes the current solution away from the positive orthant boundary. Formally, based on the \cref{eq:barrier_func}, we want to show that
\begin{equation}
\label{eq:decrease_logbarrier}
f \left(\vec{x} + \alpha \boldsymbol{\Pi}_{\vec{A}} \frac{1}{\vec{x}}\right) \leq f\left(\vec{x} \right),
\end{equation}
for sufficiently small step length $\alpha$.
Since  $\alpha$ is a small number, we treat the term $\alpha \boldsymbol{\Pi}_{\vec{A}} \frac{1}{\vec{x}}$ as a small perturbation and perform a Taylor expansion,
\begin{equation*}
\begin{aligned}
f \left(\vec{x} + \alpha \boldsymbol{\Pi}_{\vec{A}} \frac{1}{\vec{x}} \right) & \approx f(\vec{x}) + \nabla f(\vec{x})\trans \alpha \boldsymbol{\Pi}_{\vec{A}} \frac{1}{\vec{x}} \\
 & = f(\vec{x}) - \alpha \frac{1}{\vec{x}}\trans \boldsymbol{\Pi}_{\vec{A}} \frac{1}{\vec{x}}.
\end{aligned}
\end{equation*}
We notice the second term is an inner product, hence non-negative. So for sufficiently small $\alpha$ \cref{eq:decrease_logbarrier} holds. 
Now, we would like to find an upper bound for $\alpha$. We perform the second order Taylor expansion,
\begin{equation*}
\begin{aligned}
f \left(\vec{x} + \alpha \boldsymbol{\Pi}_{\vec{A}} \frac{1}{\vec{x}} \right) & \approx f(\vec{x}) + \nabla f(\vec{x})\trans \alpha \boldsymbol{\Pi}_{\vec{A}} \frac{1}{\vec{x}} + \left(\alpha \boldsymbol{\Pi}_{\vec{A}} \frac{1}{\vec{x}}\right)\trans \nabla^2 f(\vec{x}) \left(\alpha \boldsymbol{\Pi}_{\vec{A}} \frac{1}{\vec{x}}\right)\\
& = f(\vec{x}) - \alpha \frac{1}{\vec{x}}\trans \boldsymbol{\Pi}_{\vec{A}} \frac{1}{\vec{x}} + \left(\alpha \boldsymbol{\Pi}_{\vec{A}} \frac{1}{\vec{x}}\right)\trans \text{diag}\left(\frac{1}{\vec{x}^2}\right) \left(\alpha \boldsymbol{\Pi}_{\vec{A}} \frac{1}{\vec{x}}\right) \\
& < f(\vec{x}).
\end{aligned}
\end{equation*}
 We want to ensure that 
\begin{equation*}
\alpha \frac{1}{\vec{x}}\trans \boldsymbol{\Pi}_{\vec{A}} \frac{1}{\vec{x}} - \left(\alpha \boldsymbol{\Pi}_{\vec{A}} \frac{1}{\vec{x}}\right)\trans \text{diag}\left(\frac{1}{\vec{x}^2}\right) \left(\alpha \boldsymbol{\Pi}_{\vec{A}} \frac{1}{\vec{x}}\right) > 0.
\end{equation*}
 We obtain the upper bound
\begin{equation*}
\alpha < \dfrac{\frac{1}{\vec{x}}\trans \boldsymbol{\Pi}_{\vec{A}} \frac{1}{\vec{x}}}{\left(\boldsymbol{\Pi}_{\vec{A}} \frac{1}{\vec{x}}\right)\trans \text{diag}\left(\frac{1}{\vec{x}^2}\right) \left(\boldsymbol{\Pi}_{\vec{A}} \frac{1}{\vec{x}}\right)}.
\end{equation*}

\subsection{Derivation of the IPM} 
\label{sec:ipm_derive}
In this section, we only consider QPs as LPs are special cases of QPs where $\vec{Q}$ is set to an all-zero matrix. Let us first recap the standard form of QPs with linear equality constraints, 
\begin{equation*}
\begin{aligned}
\min_{{\vec{x} \in \mathbb{R}^n_{\geq 0}}} \quad & \frac{1}{2} \vec{x}\trans \vec{Q} \vec{x} + \vec{c}\trans \vec{x} \\
\text{s.t.} \quad & \vec{A} \vec{x} = \vec{b}.
\end{aligned}
\end{equation*}
By adding Lagrangian multipliers, we obtain the Lagrangian, 
\begin{equation*}
\mathcal{L}(\vec{x}, \boldsymbol{\lambda}, \vec{s}) \coloneq \frac{1}{2} \vec{x}\trans \vec{Q} \vec{x} + \vec{c}\trans \vec{x} - \boldsymbol{\lambda} \trans \left( \vec{A} \vec{x} - \vec{b} \right) - \vec{s}\trans \vec{x}, 
\end{equation*}
with $\vec{x}, \vec{s} \in \mathbb{R}^n_{\geq 0}, \boldsymbol{\lambda} \in \mathbb{R}^m$. We can derive the \new{Karush–Kuhn–Tucker} (KKT) conditions for the Lagrangian,
\begin{equation}
\label{eq:kkt}
\begin{aligned}
\vec{A} \vec{x} &= \vec{b} \\
\vec{Q} \vec{x} + \vec{c} - \vec{A}\trans \boldsymbol{\lambda} - \vec{s} &= \bm{0} \\
\vec{x} & \geq \bm{0} \\
\vec{s} & \geq \bm{0} \\
\vec{x}_i \vec{s}_i &= \bm{0}.
\end{aligned}
\end{equation}
According to Lagrangian duality theory~\citep{nocedal2006numerical}, the KKT condition is the necessary condition for optimality, and it is also a sufficient condition in our case of QPs. Thus, our goal is to find the solution $\left(\vec{x}, \boldsymbol{\lambda}, \vec{s}\right)$ that satisfies the KKT condition above. Let us consider the function
\begin{equation*}
F\left(\vec{x}, \boldsymbol{\lambda}, \vec{s}\right) \coloneq \begin{bmatrix}
\vec{A} \vec{x} - \vec{b} \\
\vec{Q} \vec{x} + \vec{c} - \vec{A}\trans \boldsymbol{\lambda} - \vec{s} \\
\vec{X} \vec{S} \bm{1}
\end{bmatrix},
\end{equation*}
where $\vec{X}$ and $\vec{S}$ are diagonal matrices of the vectors $\vec{x}$ and $\vec{s}$, respectively. The solution to the KKT condition is equivalent to the solution of $F\left(\vec{x}, \boldsymbol{\lambda}, \vec{s}\right) = \bm{0}$, with $\vec{x}, \vec{s} \geq \bm{0}$. We search the zero point of the function $F$ iteratively with Newton's method. That is, given current point of $\left(\vec{x}, \boldsymbol{\lambda}, \vec{s}\right)$, we aim to find the search direction $\left( \Delta \vec{x}, \Delta \boldsymbol{\lambda}, \Delta \vec{s} \right)$ by solving 
\begin{equation*}
J(F)  \left( \Delta \vec{x}, \Delta \boldsymbol{\lambda}, \Delta \vec{s} \right) = -F\left(\vec{x}, \boldsymbol{\lambda}, \vec{s}\right),
\end{equation*}
where $J(\cdot)$ denotes the Jacobian of a function. The matrix form is 
\begin{equation*}
\begin{bmatrix}
\vec{A} & \bm{0} & \bm{0} \\
\vec{Q} & -\vec{A}\trans & -\vec{I} \\
\vec{S} & \bm{0} & \vec{X}
\end{bmatrix} 
\begin{bmatrix}
\Delta \vec{x} \\
\Delta \boldsymbol{\lambda} \\
\Delta \vec{s}
\end{bmatrix} = 
\begin{bmatrix}
-\vec{A} \vec{x} + \vec{b} \\
-\vec{Q} \vec{x} - \vec{c} + \vec{A}\trans \boldsymbol{\lambda} + \vec{s} \\
- \vec{X} \vec{S} \bm{1}
\end{bmatrix}.
\end{equation*}
Ideally, by solving this equation, we can obtain the search direction. By doing a line search along the direction, the positivity constraints can be satisfied, and we can eventually converge to the optimal solution. However, the direction from Newton's method might not ensure the positivity constraints, and the corresponding search step size might be too small. Thus similar to the IPM for LPs ~\citep{nocedal2006numerical}, we utilize the log barrier function and add the term $-\mu \log(\vec{x})\trans \bm{1}$ with $\mu \geq 0$ to the objective to replace the positivity constraints $\vec{x} \geq \bm{0}$. To be specific, the original QP problem becomes 
\begin{equation*}
\begin{aligned}
\min_{\vec{x}} \quad & \frac{1}{2} \vec{x}\trans \vec{Q} \vec{x} + \vec{c}\trans \vec{x} -\mu \log(\vec{x})\trans \bm{1} \\
\text{s.t.} \quad & \vec{A} \vec{x} = \vec{b}.
\end{aligned}
\end{equation*}
Correspondingly, we have a new Lagrangian 
\begin{equation*}
\mathcal{L}(\vec{x}, \boldsymbol{\lambda}, \vec{s}) \coloneq \frac{1}{2} \vec{x}\trans \vec{Q} \vec{x} + \vec{c}\trans \vec{x} - \boldsymbol{\lambda} \trans \left( \vec{A} \vec{x} - \vec{b} \right) -\mu \log(\vec{x})\trans \bm{1}.
\end{equation*}
The second equation in \cref{eq:kkt} becomes $\vec{Q} \vec{x} + \vec{c} - \vec{A}\trans \boldsymbol{\lambda} - \dfrac{\mu}{\vec{x}} = \bm{0}$, which is nonlinear, making optimization via Newton's method hard. Hence, we introduce the term $\vec{s} \coloneq \dfrac{\mu}{\vec{x}}$, and the new equation coincides with the second equation in \cref{eq:kkt}. However, the last equation in \cref{eq:kkt} becomes $\vec{x}_i \vec{s}_i = \mu$. According to the path following definition \citep{nocedal2006numerical}, when $\mu \rightarrow 0$, the QP with log barrier function recovers the original problem. In practice, $\mu$ is defined as the dot product of the current solution $\vec{x}\trans \vec{s}/n$, multiplied with a scaling factor $\sigma$. The corresponding Newton equation is
\begin{equation*}
\begin{bmatrix}
\vec{A} & \bm{0} & \bm{0} \\
\vec{Q} & -\vec{A}\trans & -\vec{I} \\
\vec{S} & \bm{0} & \vec{X}
\end{bmatrix} 
\begin{bmatrix}
\Delta \vec{x} \\
\Delta \boldsymbol{\lambda} \\
\Delta \vec{s}
\end{bmatrix} = 
\begin{bmatrix}
-\vec{A} \vec{x} + \vec{b} \\
-\vec{Q} \vec{x} - \vec{c} + \vec{A}\trans \boldsymbol{\lambda} + \vec{s} \\
- \vec{X} \vec{S} \bm{1} + \sigma \mu \bm{1}
\end{bmatrix}.
\end{equation*}
Now, let us take a closer look at the Newton equations. The last row gives us 
\begin{equation*}
\label{eq:eliminate_s}
\Delta \vec{s} \coloneqq - \vec{X}^{-1} \vec{S} \Delta \vec{x}  - \vec{s} + \vec{X}^{-1} \sigma \mu \bm{1},
\end{equation*}
applying this to the second row
\begin{equation*}
\vec{Q}\Delta \vec{x} - \vec{A}\trans \Delta \boldsymbol{\lambda} -\Delta \vec{s} =  -\vec{Q} \vec{x} - \vec{c} + \vec{A}\trans \boldsymbol{\lambda} + \vec{s},
\end{equation*}
we have 
\begin{equation*}
\left(\vec{Q} + \vec{X}^{-1} \vec{S} \right) \Delta \vec{x} - \vec{A}\trans \Delta \boldsymbol{\lambda} = \vec{X}^{-1} \sigma \mu \bm{1} -\vec{Q} \vec{x} - \vec{c} + \vec{A}\trans \boldsymbol{\lambda}.
\end{equation*}
Combining with the first row, we can see that we have reached an \say{atomic} linear system, 
\begin{equation}
\label{eq:augmented_eq}
\begin{bmatrix}
\vec{Q} + \vec{X}^{-1} \vec{S} & - \vec{A}\trans \\
- \vec{A} & \bm{0}
\end{bmatrix}
\begin{bmatrix}
\Delta \vec{x} \\
\Delta \boldsymbol{\lambda}
\end{bmatrix} = 
\begin{bmatrix}
\vec{X}^{-1} \sigma \mu \bm{1} -\vec{Q} \vec{x} - \vec{c} + \vec{A}\trans \boldsymbol{\lambda}\\
\vec{A} \vec{x} - \vec{b}
\end{bmatrix},
\end{equation}
where it is hard to perform Gaussian elimination due to the nontrivial computation of $\left(\vec{Q} + \vec{X}^{-1} \vec{S}\right)^{-1}$. In contrast, in LPs, the $\vec{Q}$ term in the matrix $\begin{bmatrix}
\vec{Q} + \vec{X}^{-1} \vec{S} & - \vec{A}\trans \\
- \vec{A} & \bm{0}
\end{bmatrix}$ vanishes, and we can further eliminate variable $\Delta \vec{x}$. However, the matrix $\begin{bmatrix}
\vec{Q} + \vec{X}^{-1} \vec{S} & - \vec{A}\trans \\
- \vec{A} & \bm{0}
\end{bmatrix}$ has full rank by definition. If we initialize the vectors $\vec{x}, \vec{s}$ with positive values, we know the matrix is positive definite from Schur complement theory \citep{gallier2010schur}. Hence, we use the conjugate gradient (CG) method \citep{nocedal2006numerical} for the joint solution of $\left(\Delta \vec{x}, \Delta \boldsymbol{\lambda}\right)$ by solving \cref{eq:augmented_eq}. For the standard CG algorithm, see \cref{alg:cg-ref}. For a parametrized CG algorithm specific to our problem \cref{eq:augmented_eq}, see \cref{alg:cg-ipm}.

\begin{algorithm}[t]
\caption{Standard conjugate gradient algorithm for reference.}
\label{alg:cg-ref}
\begin{algorithmic}[1]
\REQUIRE An equation $\vec{A} \vec{x} = \vec{b}$, where $\vec{A} \in \mathbb{R}^{n \times n}$ is a real-valued, symmetric, positive-definite matrix. 
\STATE $\vec{r}^{(0)} \leftarrow \vec{b}$
\STATE $\vec{x}^{(0)} \leftarrow \bm{0}$
\STATE $\vec{p}^{(0)} \leftarrow \vec{r}^{(0)}$
\FOR{$t \in [n]$}
\STATE $\alpha^{(t)} \leftarrow \dfrac{{\vec{r}^{(t-1)}}\trans \vec{r}^{(t-1)}}{{\vec{r}^{(t-1)}}\trans \vec{A} \vec{r}^{(t-1)}}$
\STATE $\vec{x}^{(t)} \leftarrow \vec{x}^{(t-1)} + \alpha^{(t)} \vec{p}^{(t-1)}$
\STATE $\vec{r}^{(t)} \leftarrow \vec{r}^{(t-1)} - \alpha^{(t)} \vec{A} \vec{p}^{(t-1)}$
\STATE $\beta^{(t)} \leftarrow \dfrac{{\vec{r}^{(t)}}\trans \vec{r}^{(t)}}{{\vec{r}^{(t-1)}}\trans \vec{r}^{(t-1)}}$
\STATE $\vec{p}^{(t)} \leftarrow \beta^{(t)} \vec{p}^{(t-1)} + \vec{r}^{(t)}$
\ENDFOR
\RETURN Solution $\vec{x}^{(n)}$.
\end{algorithmic}
\end{algorithm}

\begin{algorithm}[t]
\caption{Conjugate gradient algorithm for the linear equations \cref{eq:augmented_eq}.}
\label{alg:cg-ipm}
\begin{algorithmic}[1]
\REQUIRE A QP instance $\left( \vec{Q}, \vec{A}, \vec{b}, \vec{c} \right)$, initial solution $\vec{x}, \vec{s} > \bm{0}$, $\boldsymbol{\lambda} \in \mathbb{R}^m$, hyperparameter $\sigma, \mu$. 
\STATE $\vec{X} \leftarrow \text{diag}(\vec{x})$
\STATE $\vec{S} \leftarrow \text{diag}(\vec{s})$
\STATE $\vec{P} \leftarrow \begin{bmatrix}
\vec{Q} + \vec{X}^{-1} \vec{S} & - \vec{A}\trans \\
- \vec{A} & \bm{0}
\end{bmatrix}$
\STATE $\vec{r}^{(0)} \leftarrow \begin{bmatrix}
\vec{X}^{-1} \sigma \mu \bm{1} -\vec{Q} \vec{x} - \vec{c} + \vec{A}\trans \boldsymbol{\lambda}\\
\vec{A} \vec{x} - \vec{b}
\end{bmatrix}$
\STATE $\vec{w}^{(0)} \leftarrow \bm{0}$
\STATE $\vec{p}^{(0)} \leftarrow \vec{r}^{(0)}$
\FOR{$t \in [n+m]$}
\STATE $\alpha^{(t)} \leftarrow \dfrac{{\vec{r}^{(t-1)}}\trans \vec{r}^{(t-1)}}{{\vec{r}^{(t-1)}}\trans \vec{P} \vec{r}^{(t-1)}}$
\STATE $\vec{w}^{(t)} \leftarrow \vec{w}^{(t-1)} + \alpha^{(t)} \vec{p}^{(t-1)}$
\STATE $\vec{r}^{(t)} \leftarrow \vec{r}^{(t-1)} - \alpha^{(t)} \vec{P} \vec{p}^{(t-1)}$
\STATE $\beta^{(t)} \leftarrow \dfrac{{\vec{r}^{(t)}}\trans \vec{r}^{(t)}}{{\vec{r}^{(t-1)}}\trans \vec{r}^{(t-1)}}$
\STATE $\vec{p}^{(t)} \leftarrow \beta^{(t)} \vec{p}^{(t-1)} + \vec{r}^{(t)}$
\ENDFOR
\STATE $\Delta \vec{x} \leftarrow \vec{w}^{(n+m)}_{[1:n]}$
\STATE $\Delta \boldsymbol{\lambda} \leftarrow \vec{w}^{(n+m)}_{[n+1:n+m]}$
\RETURN Solution $\Delta \vec{x}, \Delta \boldsymbol{\lambda}$.
\end{algorithmic}
\end{algorithm}

\subsection{Omitted proofs}
\label{sec:proof_appendix}

Here, we outline the missing proofs from the main paper. 

\subsubsection{MPNNs can simulate IPMs}

Here we prove \cref{thm:mpnn_can_ipm} from the main paper in detail.

\begin{lemma}\label{thm:cg-is-MPNN}
There exists an MPNN $f_{\textsf{MPNN,IPM}}$ composed of $\cO(1)$ layers and $\cO(m+n)$ successive message-passing steps that reproduces each iteration of the IPM algorithm for LCQPs, in the sense that for any LCQP instance $I = \left(\vec{Q}, \vec{A},\vec{b},\vec{c} \right)$ and any primal-dual point $\left(\vec{x}^{(t)}, \boldsymbol{\lambda}^{(t)}, \vec{s}^{(t)}\right)$ with $t > 0$, $f_{\textsf{MPNN,IPM}}$ maps the graph $G(I)$ carrying $[\vec{x}^{(t-1)}, \vec{s}^{(t-1)}]$ on the variable nodes, $[\boldsymbol{\lambda}^{(t-1)}]$ on the constraint nodes, and $[\mu, \sigma]$ on the global node to the same graph $G(I)$ carrying the output $[\vec{x}^{(t)}, \vec{s}^{(t)}]$ and $[\boldsymbol{\lambda}^{(t)}]$ of \cref{alg:ipm-practice} on the variable and constraint nodes, respectively.
\end{lemma}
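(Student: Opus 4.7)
}

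The plan is to decompose a single IPM iteration into four blocks -- (a) computing the right-hand side of the Newton system \cref{eq:augmented_eq}, (b) solving the system by conjugate gradient as in \cref{alg:cg-ipm}, (c) recovering $\Delta \vec{s}$ by back-substitution, and (d) line search plus the primal--dual update -- and exhibit an MPNN layer (or a small library of layers) that implements each block, then argue that only block~(b) contributes the $\mathcal{O}(m+n)$ iterated message-passing steps while all other blocks run in $\mathcal{O}(1)$ layers. Because the graph $G(I)$ encodes the nonzero pattern of $\vec{A}$ via variable--constraint edges with $\vec{e}_{cv}=\vec{A}_{cv}$ and the nonzero pattern of $\vec{Q}$ via variable--variable edges with $\vec{e}_{uv}=\vec{Q}_{uv}$, matrix--vector products with $\vec{A}$, $\vec{A}^{\!\intercal}$, and $\vec{Q}$ are exactly single message-passing steps with multiplicative edge weights and sum aggregation; similarly, multiplying by the diagonal $\vec{X}^{-1}\vec{S}$ is a node-wise transformation. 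Inner products $\langle \vec r,\vec r\rangle$ and $\langle \vec r,\vec P\vec r\rangle$, needed to form the CG scalars $\alpha^{(k)}$ and $\beta^{(k)}$, are global sums and thus are computed at the global node by one aggregation step; the resulting scalars are then broadcast back via global-to-variable and global-to-constraint messages.

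Concretely, I would first construct, for block~(a), a single layer that uses the features $[\vec x^{(t-1)},\vec s^{(t-1)}]$, $[\boldsymbol\lambda^{(t-1)}]$, and $[\mu,\sigma]$ together with one variable-to-constraint, one constraint-to-variable, and one variable-to-variable pass to evaluate the residual vector on the right-hand side of \cref{eq:augmented_eq}; the scalar $\sigma\mu$ is broadcast from the global node to the variable nodes. For block~(b), I would design a constant-size ``CG layer'' whose state on each node is the triple $(\vec w^{(k)},\vec r^{(k)},\vec p^{(k)})$ (restricted to the variable or constraint component as appropriate). One invocation of this CG layer computes $\vec P \vec p^{(k-1)}$ by (i) a variable-to-variable pass using edge weights $\vec Q_{uv}$, (ii) a constraint-to-variable pass using $\vec A^{\!\intercal}$, (iii) a variable-to-constraint pass using $\vec A$, and (iv) a node-wise multiplication by $\vec X^{-1}\vec S$; it then aggregates $\langle\vec r^{(k-1)},\vec r^{(k-1)}\rangle$ and $\langle\vec r^{(k-1)},\vec P\vec r^{(k-1)}\rangle$ at the global node, derives $\alpha^{(k)}$ and $\beta^{(k)}$ by the MLP at the global node (division implemented by the MLP since the denominators are positive by positive-definiteness of $\vec P$ on the relevant Krylov subspace), broadcasts them, and updates $\vec w,\vec r,\vec p$ node-wise. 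Because the \emph{same} CG layer, with its shared parameters, is reused, $n+m$ applications suffice for CG to converge exactly by the standard finite-termination property \citep{nocedal2006numerical}, contributing $\mathcal{O}(n+m)$ message-passing steps but still $\mathcal{O}(1)$ distinct layers. For block~(c), the formula $\Delta\vec s = -\vec X^{-1}\vec S\,\Delta\vec x - \vec s + \vec X^{-1}\sigma\mu\mathbf 1$ is fully node-wise and realized by one variable-node layer. For block~(d), the line-search step $\alpha^{(t)}=\min\{1,\sup\{\alpha : \vec x+\alpha\Delta\vec x\ge 0,\ \vec s+\alpha\Delta\vec s\ge 0\}\}$ is the minimum over a per-coordinate ratio test: each variable node computes its local candidate $-\vec x_i/\Delta\vec x_i$ (and analogously for $\vec s_i$) when the displacement is negative, the global node performs a min-aggregation, clips to one, broadcasts, and the variable/constraint nodes apply the update.

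The key technical points I would verify carefully are the following. First, the graph topology actually supports every matrix--vector product needed: $\vec A\vec x$ and $\vec A^{\!\intercal}\boldsymbol\lambda$ are standard bipartite convolutions on the $\vec A$-edges, and $\vec Q\vec x$ is a convolution on the variable--variable edges added in the encoding, so that no information is lost even though $\vec A$ and $\vec Q$ are not explicitly given as features. Second, all scalar operations -- division to obtain $\alpha^{(k)},\beta^{(k)}$, the min in the ratio test, and the clipping to $[0,1]$ -- can be implemented (or arbitrarily well approximated) by MLPs at the global node under standard universal approximation assumptions for the update/aggregation functions, as is the convention in the MPNN-as-algorithm literature \citep{pmlr-v238-qian24a,chen2024qp}. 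Third, the per-iteration MPNN is applied once, but within it the CG sub-layer is invoked $n+m$ times with shared weights, which yields $\mathcal{O}(m+n)$ total message-passing steps and $\mathcal{O}(1)$ distinct layers as claimed.

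The main obstacle I anticipate is the handling of the conjugate-gradient iteration: unlike the outer IPM iteration, where a single MPNN pass suffices, CG requires an inner loop with state $(\vec w,\vec r,\vec p)$, global scalars $(\alpha,\beta)$ that couple all nodes, and exact finite termination in $n+m$ steps that depends on the positive-definiteness of the augmented system matrix. Demonstrating that a \emph{shared-parameter} MPNN layer, rather than a layer that knows the iteration index, faithfully reproduces the CG recursion -- and carefully tracking the feature dimensions needed to store $(\vec w,\vec r,\vec p)$ on each node alongside the broadcast global scalars -- is where the bulk of the bookkeeping in the proof will lie. Everything else reduces to standard encodings already used in \citet{pmlr-v238-qian24a} for the LP setting, extended by the variable--variable edges that encode $\vec Q$.
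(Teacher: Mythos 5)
Your proposal follows essentially the same route as the paper: encode $\vec{A}$-, $\vec{A}^{\!\intercal}$-, and $\vec{Q}$-products as single message-passing steps over the corresponding edges, accumulate the CG inner products at the global node to form $\alpha^{(k)},\beta^{(k)}$ and broadcast them back, and reuse one shared-parameter block of $\mathcal{O}(1)$ layers for $n+m$ CG iterations (the paper uses $1+7$ explicit sub-steps per iteration), with the $\Delta\vec{s}$ back-substitution and the $\epsilon$-smoothed ratio-test line search handled by additional constant-depth node-wise and global-min steps. The only detail worth tightening is that the paper makes the per-coordinate step-size test continuous by replacing the $+\infty$ branch with an explicit $\epsilon$-capped ratio, which you gesture at but do not spell out.
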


\begin{proof} For a QP instance $I$, we construct an undirected, heterogeneous graph $G(I)$ with three node types, variable nodes $V(I)$, constraint nodes $C(I)$, and a single global node $\{g(I)\}$, as already described in \cref{sec:graph_construct}. The corresponding node embedding matrices are denoted as $\vec{V}, \vec{C}, \vec{G}$ respectively. Across different node types, we have four edge types. The intra-variable-nodes connections are given by the non-zero entries of the $\vec{Q}$ matrix, and we use the values $\vec{Q}_{ij}$ as the edge attributes. For the edges connecting variable and constraint nodes, we use the non-zero entries of the $\vec{A}$ matrix and also the values $\vec{A}_{ij}$ as the edge attributes. We design the global node to be connected to all variable and constraint nodes with uniform edge attributes. By default, we assume all the vectors to be \emph{column} vectors, but we refer to a row $i$ of a matrix $\vec{M}$ in the form of $\vec{M}_i$ and assume it to be a \emph{row} vector. Specifically, we construct the graph with node feature matrices initialized as follows,
\begin{equation*}
\begin{aligned}
\vec{V}^{(\text{init},0)} & \coloneqq \begin{bmatrix} \vec{x} & \vec{s} & \vec{c} \end{bmatrix} \in \mathbb{R}^{n \times 3}, \\
\vec{C}^{(\text{init},0)} & \coloneqq \begin{bmatrix} \boldsymbol{\lambda} & \vec{b} \end{bmatrix} \in \mathbb{R}^{m \times 2}, \\
\vec{G}^{(\text{init},0)} & \coloneqq \begin{bmatrix}\sigma \mu \end{bmatrix} \in \mathbb{R}.
\end{aligned}
\end{equation*}

Note that we use superscripts of the form $(\text{phase},i)$ where phase stands for the initialization or some iteration $t$ in \cref{alg:cg-ipm}, and $i$ is an index for intermediate operations that appears when necessary. We set edge features
\begin{equation*}
\begin{aligned}
\vec{e}_{v \rightarrow u} & \coloneqq \vec{Q}_{vu}, \forall v, u \in V(I) \\
\vec{e}_{c \rightarrow v} & \coloneqq \vec{A}_{cv}, \forall c \in C(I), v \in V(I) \\
\vec{e}_{v \rightarrow g} & \coloneqq 1, \forall v \in V(I) \\
\vec{e}_{c \rightarrow g} & \coloneqq 1, \forall c \in C(I). \\
\end{aligned}
\end{equation*}
From now on, we drop the notations of $\vec{e}$ and use $\vec{Q}, \vec{A}$ for ease of notation. 

In principle, we would like all the message-passing steps to fall in the framework of the following synchronous, heterogeneous message-passing functions, 
\begin{equation}
\begin{aligned}
\label{eq:proof_mpnn_framework}
    \vec{V}_v^{(t)} \coloneqq  \textsf{UPD}^{(t)}_{\text{v}} \Bigl[
     &\vec{V}_v^{(t-1)}, \\
    & \textsf{AGG}^{(t)}_{\text{c} \rightarrow \text{v}}\left(\{\!\!\{ \textsf{MSG}^{(t)}_{\text{c} \rightarrow \text{v}}\left(\vec{C}_c^{(t-1)}, \vec{A}_{cv} \right) \mid c \in {N}\left(v \right) \cap C(I) \}\!\!\}  \right), \\
    & \textsf{AGG}^{(t)}_{\text{v} \rightarrow \text{v}}\left(\{\!\!\{ \textsf{MSG}^{(t)}_{\text{v} \rightarrow \text{v}}\left(\vec{V}_u^{(t-1)}, \vec{Q}_{vu} \right) \mid u \in {N}\left(v \right) \cap V(I) \}\!\!\}  \right), \\
    & \textsf{MSG}^{(t)}_{\text{g} \rightarrow \text{v}}\left( \vec{G}^{(t-1)} \right) \Bigr], \forall v \in V(I). \\
    \vec{C}_c^{(t)} \coloneqq  \textsf{UPD}^{(t)}_{\text{c}} \Bigl[
    & \vec{C}_c^{(t-1)}, \\
    & \textsf{AGG}^{(t)}_{\text{v} \rightarrow \text{c}}\left(\{\!\!\{ \textsf{MSG}^{(t)}_{\text{v} \rightarrow \text{c}}\left(\vec{V}_v^{(t-1)}, \vec{A}_{cv}\right) \mid v \in {N}\left(c \right) \cap V(I) \}\!\!\}  \right), \\
    & \textsf{MSG}^{(t)}_{\text{g} \rightarrow \text{c}}\left( \vec{G}^{(t-1)} \right) \Bigr], \forall c \in C(I). \\
    \vec{G}^{(t)} \coloneqq  \textsf{UPD}^{(t)}_{\text{g}} \Bigl[
    & \vec{G}^{(t-1)}, \\
    & \textsf{AGG}^{(t)}_{\text{v} \rightarrow \text{g}}\left(\{\!\!\{ \textsf{MSG}^{(t)}_{\text{v} \rightarrow \text{g}}\left(\vec{V}_v^{(t-1)} \right) \mid v \in V(I) \}\!\!\}  \right), \\
    & \textsf{AGG}^{(t)}_{\text{c} \rightarrow \text{g}}\left(\{\!\!\{ \textsf{MSG}^{(t)}_{\text{c} \rightarrow \text{g}}\left(\vec{C}_c^{(t-1)} \right)\mid c \in C(I) \}\!\!\}  \right) \Bigr],
\end{aligned}
\end{equation}

where $\textsf{UPD}_{\text{v}}^{(t)}$ is the update function shared by all the variable nodes $v \in V(I)$, similar with $\textsf{UPD}_{\text{c}}^{(t)}$ and $\textsf{UPD}_{\text{g}}^{(t)}$. Functions $\textsf{AGG}^{(t)}_{\circ \rightarrow \odot}$ and $\textsf{MSG}^{(t)}_{\circ \rightarrow \odot}$ represent the aggregation and message functions from a node type $\circ$ to another $\odot$, also shared on the edges with the same edge type.
We assume all the $\textsf{UPD}, \textsf{AGG}, \textsf{MSG}$ are arbitrary continuous functions. 

In the following, we show how a message-passing architecture can simulate the CG algorithm~\cref{alg:cg-ipm}.

\paragraph{Lines 4-6} Let us first take a look at the initialization
\begin{equation*}
\vec{r}^{(0)} \leftarrow \begin{bmatrix}
\vec{X}^{-1} \sigma \mu \bm{1} -\vec{Q} \vec{x} - \vec{c} + \vec{A}\trans \boldsymbol{\lambda}\\
\vec{A} \vec{x} - \vec{b}
\end{bmatrix}.
\end{equation*}
That is, we need the vector $\vec{X}^{-1} \sigma \mu \bm{1} -\vec{Q} \vec{x} - \vec{c} + \vec{A}\trans \boldsymbol{\lambda}$ to be stored in the variable nodes, and $\vec{A} \vec{x} - \vec{b}$ in the constraint nodes. We notice that $\vec{Q} \vec{x}$ is a simple graph convolution on the intra-variable nodes, and $\vec{A}\trans \boldsymbol{\lambda}$ is convolution from constraint nodes to variable nodes. Therefore, we can compute $\vec{X}^{-1} \sigma \mu \bm{1} -\vec{Q} \vec{x} - \vec{c} + \vec{A}\trans \boldsymbol{\lambda}$ with one message-passing step by constructing the message functions for every variable node $v \in V(I)$,
\begin{equation}
\begin{aligned}
\label{eq:proof_cg_init_msg}
\textsf{MSG}^{(\text{init},1)}_{\text{g} \rightarrow \text{v}} \left( \vec{G}^{(\text{init},0)} \right) &\coloneq \vec{G}^{(\text{init},0)} = \sigma \mu, \\
\textsf{MSG}^{(\text{init},1)}_{\text{c} \rightarrow \text{v}} \left( \vec{C}^{(\text{init},0)}_c, \vec{A}_{cv} \right) &\coloneq \vec{A}_{cv} \vec{C}^{(\text{init},0)}_c \begin{bmatrix} 1 \\ 0 \end{bmatrix} = \vec{A}_{cv} \boldsymbol{\lambda}_c, \forall c \in N(v) \cap C(I), \\
\textsf{MSG}^{(\text{init},1)}_{\text{v} \rightarrow \text{v}} \left( \vec{V}^{(\text{init},0)}_u, \vec{Q}_{vu} \right) &\coloneq \vec{Q}_{vu} \vec{V}^{(\text{init},0)}_u \begin{bmatrix} 1 \\ 0 \\ 0 \end{bmatrix}  = \vec{Q}_{vu} \vec{x}_u, \forall u \in N(v) \cap V(I),
\end{aligned}
\end{equation}
and the aggregation functions,
\begin{equation}
\begin{aligned}
\label{eq:proof_cg_init_agg}
\textsf{AGG}^{(\text{init},1)}_{\text{c} \rightarrow \text{v}} \left( \{\!\!\{\vec{A}_{cv} \boldsymbol{\lambda}_c \mid c \in N(v) \cap C(I) \}\!\!\} \right) &\coloneq \sum_{c \in N(v) \cap C(I)} \vec{A}_{cv} \boldsymbol{\lambda}_c = \vec{A}\trans_v \boldsymbol{\lambda}, \\
\textsf{AGG}^{(\text{init},1)}_{\text{v} \rightarrow \text{v}} \left( \{\!\!\{\vec{Q}_{vu} \vec{x}_u \mid u \in N(v) \cap V(I) \}\!\!\} \right) &\coloneq \sum_{u \in N(v) \cap V(I)} \vec{Q}_{vu} \vec{x}_u = \vec{Q}_{v} \vec{x},
\end{aligned}
\end{equation}
and finally, we design the update function,
\begin{equation}
\begin{aligned}
\label{eq:proof_cg_init_upd}
\vec{V}_v^{(\text{init},1)} & \coloneqq \textsf{UPD}^{(\text{init},1)}_{\text{v}} \Bigl[ \vec{V}_v^{(\text{init},0)}, \vec{A}\trans_v \boldsymbol{\lambda}, \vec{Q}_{v} \vec{x}, \sigma \mu \Bigr] \\
& = \dfrac{\sigma \mu}{\vec{V}_v^{(\text{init},0)} \begin{bmatrix} 1 \\ 0 \\ 0 \end{bmatrix}} - \vec{Q}_{v} \vec{x} + \vec{A}\trans_v \boldsymbol{\lambda} - \vec{V}_v^{(\text{init},0)} \begin{bmatrix} 0 \\ 0 \\ 1 \end{bmatrix} \\
& = \dfrac{\sigma \mu}{\vec{x}_v} - \vec{Q}_{v} \vec{x} + \vec{A}\trans_v \boldsymbol{\lambda} - \vec{c}_v.
\end{aligned}
\end{equation}
The above functions form the framework in \cref{eq:proof_mpnn_framework}. When we stack all the values $\vec{V}_v^{(\text{init},1)}$ into a vector, we end up with the needed vector $\vec{X}^{-1} \sigma \mu \bm{1} -\vec{Q} \vec{x} - \vec{c} + \vec{A}\trans \boldsymbol{\lambda}$. Note that $\vec{X}^{-1} \bm{1} = \dfrac{\bm{1}}{\vec{x}}$ can be obtained by element-wise reciprocal operation as a non-linear activation function.

The functions \cref{eq:proof_cg_init_msg,eq:proof_cg_init_agg,eq:proof_cg_init_upd} give us a first impression of how our proof by construction works. For convenience, we store the previous vectors $\vec{x}, \vec{s}$ in the node embedding matrices for the upcoming operations. Besides the initialization for $\vec{r}^{(0)}$ above, we need to copy $\vec{r}^{(0)}$ into $\vec{p}^{(0)}$, and assign some trivial values to $\vec{w}^{(0)}$. Hence, we need to modify the update functions,
\begin{equation*}
\begin{aligned}
\vec{V}_v^{(\text{init},1)} & \coloneqq \textsf{UPD}^{(\text{init},1)}_{\text{v}} \Bigl[ \vec{V}_v^{(\text{init},0)}, \vec{A}\trans_v \boldsymbol{\lambda}, \vec{Q}_{v} \vec{x}, \sigma \mu \Bigr] \\
& = \left( \dfrac{\sigma \mu}{\vec{V}_v^{(\text{init},0)} \begin{bmatrix} 1 \\ 0 \\ 0 \end{bmatrix}} - \vec{Q}_{v} \vec{x} + \vec{A}\trans_v \boldsymbol{\lambda} - \vec{V}_v^{(\text{init},0)} \begin{bmatrix} 0 \\ 0 \\ 1 \end{bmatrix} \right) \begin{bmatrix} 1 & 1 & 0 & 0 & 0 \end{bmatrix} \\
& + \vec{V}_v^{(\text{init},0)} \begin{bmatrix} 0 & 0 & 0 & 1 & 0  \\
0 & 0 & 0 & 0 & 1  \\
0 & 0 & 0 & 0 & 0  \\
\end{bmatrix} \\
& = \begin{bmatrix} \dfrac{\sigma \mu}{\vec{x}_v} - \vec{Q}_{v} \vec{x} + \vec{A}\trans_v \boldsymbol{\lambda} - \vec{c}_v & \dfrac{\sigma \mu}{\vec{x}_v} - \vec{Q}_{v} \vec{x} + \vec{A}\trans_v \boldsymbol{\lambda} - \vec{c}_v & 0 & \vec{x}_v & \vec{s}_v \end{bmatrix} \in \mathbb{R}^{5},
\end{aligned}
\end{equation*}
giving us a row vector for node $v$.

Similarly, we need the result $\vec{A} \vec{x} - \vec{b}$ for the initialization of the constraint nodes. Notice that $\vec{A}\vec{x}$ is a simple graph convolution from the variable to the constraint nodes. We construct the message functions for each constraint node $c \in C(I)$,
\begin{equation*}
\begin{aligned}
\textsf{MSG}^{(\text{init},1)}_{\text{v} \rightarrow \text{c}} \left( \vec{V}^{(\text{init},0)}_v, \vec{A}_{cv} \right) &\coloneq \vec{A}_{cv} \vec{V}^{(\text{init},0)}_v \begin{bmatrix} 1 \\ 0 \\ 0 \end{bmatrix} = \vec{A}_{cv} \vec{x}_v, \forall v \in N(c) \cap V(I), \\
\end{aligned}
\end{equation*}
and the aggregation function,
\begin{equation*}
\textsf{AGG}^{(\text{init},1)}_{\text{v} \rightarrow \text{c}} \left( \{\!\!\{\vec{A}_{cv} \vec{x}_v \mid v \in N(c) \cap V(I) \}\!\!\} \right) \coloneq \sum_{v \in N(c) \cap V(I)} \vec{A}_{cv} \vec{x}_v = \vec{A}_{c} \vec{x},
\end{equation*}
and the update function,
\begin{equation*}
\begin{aligned}
\vec{C}_c^{(\text{init},1)} & \coloneqq \textsf{UPD}^{(\text{init},1)}_{\text{c}} \Bigl[ \vec{C}_c^{(\text{init},0)}, \vec{A}_{c} \vec{x} \Bigr] \\
& = \vec{A}_{c} \vec{x} - \vec{C}_c^{(\text{init},0)} \begin{bmatrix} 0 \\ 1\end{bmatrix} \\
& = \vec{A}_{c} \vec{x} - \vec{b}_c.
\end{aligned}
\end{equation*}
We get $\vec{A} \vec{x} - \vec{b}$ by stacking up all the nodes $c \in C(I)$. Still, we would like to store the vector $\boldsymbol{\lambda}$ in the updated node embeddings, and initialize $\vec{p}^{(0)}$ and $\vec{w}^{(0)}$. Hence, we modify the update function as 
\begin{equation*}
\begin{aligned}
\vec{C}_c^{(\text{init},1)} & \coloneqq \textsf{UPD}^{(\text{init},1)}_{\text{c}} \Bigl[ \vec{C}_c^{(\text{init},0)}, \vec{A}_{c} \vec{x} \Bigr] \\
& = \left( \vec{A}_{c} \vec{x} - \vec{C}_c^{(\text{init},0)} \begin{bmatrix} 0 \\ 1\end{bmatrix} \right) \begin{bmatrix} 1 & 1 & 0 & 0 \end{bmatrix}  \\
& + \vec{C}_c^{(\text{init},0)} \begin{bmatrix} 0 & 0 & 0 & 1 \\ 0 & 0 & 0 & 0\end{bmatrix} \\
& = \begin{bmatrix} \vec{A}_{c} \vec{x} - \vec{b}_c &  \vec{A}_{c} \vec{x} - \vec{b}_c & 0 & \boldsymbol{\lambda}_c \end{bmatrix} \in \mathbb{R}^{4}.
\end{aligned}
\end{equation*}

Thus, we accomplish the the initialization of $\vec{r}^{(0)}, \vec{p}^{(0)}, \vec{w}^{(0)}$  with one message-passing step. We notice that we can always split a joint vector, e.g., $\vec{r} \in \mathbb{R}^{n+m}$ (superscript omitted), into $\vec{r} = \begin{bmatrix}
\vec{r}_1 \in \mathbb{R}^{n}, \\
\vec{r}_2 \in \mathbb{R}^{m},
\end{bmatrix}$
with $\vec{r}_1$ and $\vec{r}_2$ stored in variable and constraint nodes. Therefore, we will always refer to their two distributed parts as $\vec{r}_1,\vec{r}_2$ from now on, same with the vectors $\vec{p},\vec{w}$.

After the lines 4-6 in \cref{alg:cg-ipm}, we now have the node embedding matrices
\begin{equation*}
\begin{aligned}
\vec{V}^{(\text{init},1)} & \coloneq \begin{bmatrix} \vec{r}^{(0)}_1 & \vec{p}^{(0)}_1 & \vec{w}^{(0)}_1 & \vec{x} & \vec{s} \end{bmatrix} \in \mathbb{R}^{n \times 5} \\
\vec{C}^{(\text{init},1)} & \coloneq \begin{bmatrix} \vec{r}^{(0)}_2 &  \vec{p}^{(0)}_2 & \vec{w}^{(0)}_2 & \boldsymbol{\lambda} \end{bmatrix} \in \mathbb{R}^{m \times 4},
\end{aligned}
\end{equation*}
\emph{which we now redefine as $\vec{V}^{(0)}$ and $\vec{C}^{(0)}$}, and we do not care about $\vec{G}^{(0)}$ at the moment. Now, we are ready for the iterations in \cref{alg:cg-ipm}.

\paragraph{Line 8}
Let us look at each iteration $t$ in \cref{alg:cg-ipm}. With the information of $\vec{r}_1^{(t-1)}, \vec{r}_2^{(t-1)}$ already in the nodes, the numerator of $\alpha^{(t)}$, ${\vec{r}^{(t-1)}}\trans\vec{r}^{(t-1)} = {\vec{r}_1^{(t-1)}}\trans\vec{r}_1^{(t-1)} + {\vec{r}_2^{(t-1)}}\trans\vec{r}_2^{(t-1)}$ can be computed with one convolution from variable and constraint nodes to the global node. The right part of the denominator, $\vec{P} \vec{r}^{(t-1)} = 
\begin{bmatrix}
\vec{Q} + \vec{X}^{-1} \vec{S} & -\vec{A}\trans \\
- \vec{A} & \bm{0} 
\end{bmatrix}
\begin{bmatrix}
\vec{r}_1^{(t-1)} \\
\vec{r}_2^{(t-1)}
\end{bmatrix} = 
\begin{bmatrix}
\vec{Q} \vec{r}_1^{(t-1)} + \dfrac{\vec{s} \vec{r}_1^{(t-1)}}{\vec{x}} - \vec{A}\trans \vec{r}_2^{(t-1)} \\
- \vec{A} \vec{r}_1^{(t-1)}
\end{bmatrix}$, can be done with one step of message passing between variable and constraint nodes.

Following \cref{eq:proof_mpnn_framework}, we can construct message functions from constraint nodes to variable nodes 
\begin{equation*}
\begin{aligned}
\textsf{MSG}^{(t,1)}_{\text{v} \rightarrow \text{c}} \left( \vec{V}^{(t-1)}_v, \vec{A}_{cv} \right) &\coloneq \vec{A}_{cv} \vec{V}^{(t-1)}_v \begin{bmatrix} 1 \\ 0 \\ 0 \\ 0 \\0 \end{bmatrix} = \vec{A}_{cv} \left(\vec{r}^{(t-1)}_1\right)_v, fv \in N(c) \cap V(I),
\end{aligned}
\end{equation*}
and aggregation function
\begin{equation*}
\begin{aligned}
\textsf{AGG}^{(t,1)}_{\text{v} \rightarrow \text{c}} \left( \{\!\!\{\vec{A}_{cv} \left(\vec{r}^{(t-1)}_1\right)_v \mid v \in N(c) \cap V(I) \}\!\!\} \right) \coloneq \sum_{v \in N(c) \cap V(I)} \vec{A}_{cv} \left(\vec{r}^{(t-1)}_1\right)_v = \vec{A}_{c} \vec{r}^{(t-1)}_1,
\end{aligned}
\end{equation*}
and the update function
\begin{equation}
\begin{aligned}
\label{eq:per_cons_node_line8_1}
\vec{C}_c^{(t,1)} & \coloneqq \textsf{UPD}^{(t,1)}_{\text{c}} \Bigl[ \vec{C}_c^{(t-1)}, \vec{A}_{c} \vec{r}^{(t-1)}_1 \Bigr] \\
& =  \vec{A}_{c} \vec{r}^{(t-1)}_1 \begin{bmatrix} -1 \\ 0 \\ 0 \\ 0 \\ 0 \end{bmatrix} + \vec{C}_c^{(t-1)} \begin{bmatrix} 0 & 1 & 0 & 0 & 0 \\ 0 & 0 & 1 & 0 & 0 \\ 0 & 0 & 0 & 1 & 0 \\ 0 & 0 & 0 & 0 & 1 \\ \end{bmatrix} \\
& = \begin{bmatrix} -\vec{A}_{c} \vec{r}^{(t-1)}_1 & \left(\vec{r}^{(t-1)}_2\right)_c & \left(\vec{p}^{(t-1)}_2\right)_c & \left(\vec{w}^{(t-1)}_2\right)_c & \boldsymbol{\lambda}_c \end{bmatrix}.
\end{aligned}
\end{equation}

For the other direction, i.e., from constraint nodes to variable nodes, we have message functions,
\begin{equation*}
\begin{aligned}
\textsf{MSG}^{(t,1)}_{\text{c} \rightarrow \text{v}} \left( \vec{C}^{(t-1)}_c, \vec{A}_{cv} \right) &\coloneq \vec{A}_{cv} \vec{C}^{(t-1)}_c \begin{bmatrix} 1 \\ 0 \\0 \\ 0 \end{bmatrix} = \vec{A}_{cv} \left(\vec{r}^{(t-1)}_2\right)_c, \forall c \in N(v) \cap C(I) \\
\textsf{MSG}^{(t,1)}_{\text{v} \rightarrow \text{v}} \left( \vec{V}^{(t-1)}_u, \vec{Q}_{vu} \right) &\coloneq \vec{Q}_{vu} \vec{V}^{(t-1)}_u \begin{bmatrix} 1 \\ 0 \\ 0 \\0\\0 \end{bmatrix}  = \vec{Q}_{vu} \left(\vec{r}^{(t-1)}_1\right)_u, \forall u \in N(v) \cap V(I),
\end{aligned}
\end{equation*}
and aggregation function,
\begin{equation*}
\begin{aligned}
\textsf{AGG}^{(t,1)}_{\text{c} \rightarrow \text{v}} \left( \{\!\!\{\vec{A}_{cv} \left(\vec{r}^{(t-1)}_2\right)_c \mid c \in N(v) \cap C(I) \}\!\!\} \right) &\coloneq \sum_{c \in N(v) \cap C(I)} \vec{A}_{cv} \left(\vec{r}^{(t-1)}_2\right)_c = \vec{A}\trans_v \vec{r}^{(t-1)}_2 \\
\textsf{AGG}^{(t,1)}_{\text{v} \rightarrow \text{v}} \left( \{\!\!\{\vec{Q}_{vu} \left(\vec{r}^{(t-1)}_1\right)_u \mid u \in N(v) \cap V(I) \}\!\!\} \right) &\coloneq \sum_{u \in N(v) \cap V(I)} \vec{Q}_{vu} \left(\vec{r}^{(t-1)}_1\right)_u = \vec{Q}_{v} \vec{r}^{(t-1)}_1,
\end{aligned}
\end{equation*}
and the update function,
\begin{equation}
\begin{aligned}
\label{eq:per_var_node_line8_1}
\vec{V}_v^{(t,1)} & \coloneqq \textsf{UPD}^{(t,1)}_{\text{v}} \Bigl[ \vec{V}_v^{(t-1)}, \vec{A}\trans_v \vec{r}^{(t-1)}_2, \vec{Q}_{v} \vec{r}^{(t-1)}_1 \Bigr] \\
& = \vec{V}_v^{(t-1)} \begin{bmatrix} 0 & 1 & 0 & 0 & 0 & 0 \\ 0 & 0 & 1 & 0 & 0 & 0 \\ 0 & 0 & 0 & 1 & 0 & 0 \\ 0 & 0 & 0 & 0 & 1 & 0 \\ 0 & 0 & 0 & 0 & 0 & 1 \\ \end{bmatrix} \\
& + \left(\vec{Q}_{v} \vec{r}^{(t-1)}_1 - \vec{A}\trans_v \vec{r}^{(t-1)}_2 + \dfrac{ \vec{V}_v^{(t-1)} \begin{bmatrix} 0 & 0 & 0 & 0 & 0 \\ 0 & 0 & 0 & 0 & 0 \\ 0 & 0 & 0 & 0 & 0 \\ 0 & 0 & 0 & 0 & 0 \\ 1 & 0 & 0 & 0 & 0 \end{bmatrix} \left(\vec{V}_v^{(t-1)}\right)\trans}{\vec{V}_v^{(t-1)} \begin{bmatrix} 0 \\ 0 \\ 0 \\ 1 \\ 0 \end{bmatrix}} \right) \begin{bmatrix} 1 & 0 & 0 & 0 & 0 & 0 \end{bmatrix} \\
& = \begin{bmatrix} \vec{Q}_{v} \vec{r}^{(t-1)}_1 - \vec{A}\trans_v \vec{r}^{(t-1)}_2 + \dfrac{\vec{s}_v \left(\vec{r}_1^{(t-1)} \right)_v}{\vec{x}_v} & \left(\vec{r}^{(t-1)}_1\right)_v & \left(\vec{p}^{(t-1)}_1\right)_v & \left(\vec{w}^{(t-1)}_1\right)_v & \vec{x}_v  & \vec{s}_v \end{bmatrix}.
\end{aligned}
\end{equation}

If we stack up the per-node embeddings in \cref{eq:per_cons_node_line8_1} and \cref{eq:per_var_node_line8_1}, we will have the matrix representation,
\begin{equation*}
\begin{aligned}
\vec{V}^{(t,1)} & \coloneqq \begin{bmatrix} \vec{Q} \vec{r}^{(t-1)}_1 - \vec{A}\trans \vec{r}^{(t-1)}_2 + \dfrac{\vec{s} \vec{r}_1^{(t-1)}}{\vec{x}} & \vec{r}^{(t-1)}_1 & \vec{p}^{(t-1)}_1 & \vec{w}^{(t-1)}_1 & \vec{x}  & \vec{s} \end{bmatrix}\\
& = \begin{bmatrix} \left(\vec{P} \vec{r}^{(t-1)} \right)_1 &  \vec{r}_1^{(t-1)} & \vec{p}^{(t-1)}_1 & \vec{w}^{(t-1)}_1 & \vec{x} & \vec{s} \end{bmatrix}\\
\vec{C}^{(t,1)} & \coloneqq \begin{bmatrix} -\vec{A} \vec{r}^{(t-1)}_1 & \vec{r}^{(t-1)}_2 & \vec{p}^{(t-1)}_2 & \vec{w}^{(t-1)}_2 & \boldsymbol{\lambda} \end{bmatrix} \\
& = \begin{bmatrix} \left(\vec{P} \vec{r}^{(t-1)} \right)_2 & \vec{r}_2^{(t-1)} &  \vec{p}^{(t-1)}_2 & \vec{w}^{(t-1)}_2 & \boldsymbol{\lambda} \end{bmatrix}.
\end{aligned}
\end{equation*}

Further, ${\vec{r}^{(t-1)}}\trans \vec{P} \vec{r}^{(t-1)} = \begin{bmatrix} \vec{r}_1^{(t-1)} & \vec{r}_2^{(t-1)} \end{bmatrix}
\begin{bmatrix}
\left(\vec{P} \vec{r}^{(t-1)} \right)_1 \\
\left(\vec{P} \vec{r}^{(t-1)} \right)_2
\end{bmatrix}$ can be computed with one additional message passing from variable and constraint nodes to the global node. Together with the numerator ${\vec{r}^{(t-1)}}\trans {\vec{r}^{(t-1)}}$, we can now construct the functions for the global node. Hence, the message function are set to,
\begin{equation*}
\begin{aligned}
\textsf{MSG}^{(t,2)}_{\text{c} \rightarrow \text{g}}\left(\vec{C}_c^{(t,1)} \right) &\coloneq \vec{C}_c^{(t,1)} \begin{bmatrix} 0 & 1 & 0 & 0 & 0  \\ 0 & 0 & 0 & 0 & 0  \\ 0 & 0 & 0 & 0 & 0  \\ 0 & 0 & 0 & 0 & 0  \\ 0 & 0 & 0 & 0 & 0  \\ \end{bmatrix} \left(\vec{C}_c^{(t,1)}\right)\trans \begin{bmatrix} 1 & 0 \end{bmatrix} \\
& + \vec{C}_c^{(t,1)} \begin{bmatrix} 0 & 0 & 0 & 0 & 0  \\ 0 & 1 & 0 & 0 & 0  \\ 0 & 0 & 0 & 0 & 0  \\ 0 & 0 & 0 & 0 & 0  \\ 0 & 0 & 0 & 0 & 0  \\ \end{bmatrix} \left(\vec{C}_c^{(t,1)}\right)\trans \begin{bmatrix} 0 & 1 \end{bmatrix} \\
& =  \begin{bmatrix}\left( \left(\vec{P} \vec{r}^{(t-1)} \right)_2 \right)_c \left(\vec{r}_2^{(t-1)} \right)_c & \left(\vec{r}_2^{(t-1)} \right)_c^2 \end{bmatrix}, \forall c \in C(I), \\
\textsf{MSG}^{(t,2)}_{\text{v} \rightarrow \text{g}}\left(\vec{V}_v^{(t,1)} \right) &\coloneq \vec{V}_v^{(t,1)} \begin{bmatrix} 0 & 1 & 0 & 0 & 0 & 0  \\ 0 & 0 & 0 & 0 & 0 & 0 \\ 0 & 0 & 0 & 0 & 0 & 0 \\ 0 & 0 & 0 & 0 & 0 & 0 \\ 0 & 0 & 0 & 0 & 0 & 0 \\ 0 & 0 & 0 & 0 & 0 & 0 \\ \end{bmatrix} \left(\vec{V}_v^{(t,1)}\right)\trans \begin{bmatrix} 1 & 0 \end{bmatrix} \\
& + \vec{V}_v^{(t,1)} \begin{bmatrix} 0 & 0 & 0 & 0 & 0 & 0  \\ 0 & 1 & 0 & 0 & 0 & 0 \\ 0 & 0 & 0 & 0 & 0 & 0 \\ 0 & 0 & 0 & 0 & 0 & 0 \\ 0 & 0 & 0 & 0 & 0 & 0 \\ 0 & 0 & 0 & 0 & 0 & 0 \\ \end{bmatrix} \left(\vec{V}_v^{(t,1)}\right)\trans \begin{bmatrix} 0 & 1 \end{bmatrix} \\
& =  \begin{bmatrix}\left( \left(\vec{P} \vec{r}^{(t-1)} \right)_1 \right)_v \left(\vec{r}_1^{(t-1)} \right)_v & \left(\vec{r}_1^{(t-1)} \right)_v^2 \end{bmatrix}, \forall v \in V(I),
\end{aligned}
\end{equation*}
and the  aggregation function for the global node is,
\begin{equation*}
\begin{aligned}
\textsf{AGG}^{(t,2)}_{\text{v} \rightarrow \text{g}} &\left(\{\!\!\{ \begin{bmatrix}\left( \left(\vec{P} \vec{r}^{(t-1)} \right)_1 \right)_v \left(\vec{r}_1^{(t-1)} \right)_v & \left(\vec{r}_1^{(t-1)} \right)_v^2 \end{bmatrix} \mid v \in V(I) \}\!\!\}  \right) \\
& \coloneq \sum_{v \in V(I)} \begin{bmatrix}\left( \left(\vec{P} \vec{r}^{(t-1)} \right)_1 \right)_v \left(\vec{r}_1^{(t-1)} \right)_v & \left(\vec{r}_1^{(t-1)} \right)_v^2 \end{bmatrix} \\
& = \begin{bmatrix} \left(\vec{r}_1^{(t-1)} \right)\trans \left(\vec{P} \vec{r}^{(t-1)} \right)_1 &  \left(\vec{r}_1^{(t-1)} \right)\trans \vec{r}_1^{(t-1)}\end{bmatrix}, \\
\textsf{AGG}^{(t,2)}_{\text{c} \rightarrow \text{g}} &\left(\{\!\!\{ \begin{bmatrix}\left( \left(\vec{P} \vec{r}^{(t-1)} \right)_2 \right)_c \left(\vec{r}_2^{(t-1)} \right)_c & \left(\vec{r}_2^{(t-1)} \right)_c^2 \end{bmatrix} \mid c \in C(I) \}\!\!\}  \right) \\
& \coloneq \sum_{c \in C(I)} \begin{bmatrix}\left( \left(\vec{P} \vec{r}^{(t-1)} \right)_2 \right)_c \left(\vec{r}_2^{(t-1)} \right)_c & \left(\vec{r}_2^{(t-1)} \right)_c^2 \end{bmatrix} \\
& = \begin{bmatrix} \left(\vec{r}_2^{(t-1)} \right)\trans \left(\vec{P} \vec{r}^{(t-1)} \right)_2 &  \left(\vec{r}_2^{(t-1)} \right)\trans \vec{r}_2^{(t-1)}\end{bmatrix}. \\
\end{aligned}
\end{equation*}
The update function for the global node, without using the previous state of $\vec{G}^{(t-1)}$,
\begin{equation*}
\begin{aligned}
&\vec{G}^{(t,2)} \coloneqq  \textsf{UPD}^{(t,2)}_{\text{g}} \Bigl[
 \begin{bmatrix} \left(\vec{r}_1^{(t-1)} \right)\trans \left(\vec{P} \vec{r}^{(t-1)} \right)_1 &  \left(\vec{r}_1^{(t-1)} \right)\trans \vec{r}_1^{(t-1)}\end{bmatrix}, \\
& \begin{bmatrix} \left(\vec{r}_2^{(t-1)} \right)\trans \left(\vec{P} \vec{r}^{(t-1)} \right)_2 &  \left(\vec{r}_2^{(t-1)} \right)\trans \vec{r}_2^{(t-1)}\end{bmatrix} \Bigr] \\
& = \dfrac{ \left( \begin{bmatrix} \left(\vec{r}_1^{(t-1)} \right)\trans \left(\vec{P} \vec{r}^{(t-1)} \right)_1 &  \left(\vec{r}_1^{(t-1)} \right)\trans \vec{r}_1^{(t-1)}\end{bmatrix} + \begin{bmatrix} \left(\vec{r}_2^{(t-1)} \right)\trans \left(\vec{P} \vec{r}^{(t-1)} \right)_2 &  \left(\vec{r}_2^{(t-1)} \right)\trans \vec{r}_2^{(t-1)}\end{bmatrix}\right) \begin{bmatrix} 0 \\ 1 \end{bmatrix}}{\left( \begin{bmatrix} \left(\vec{r}_1^{(t-1)} \right)\trans \left(\vec{P} \vec{r}^{(t-1)} \right)_1 &  \left(\vec{r}_1^{(t-1)} \right)\trans \vec{r}_1^{(t-1)}\end{bmatrix} + \begin{bmatrix} \left(\vec{r}_2^{(t-1)} \right)\trans \left(\vec{P} \vec{r}^{(t-1)} \right)_2 &  \left(\vec{r}_2^{(t-1)} \right)\trans \vec{r}_2^{(t-1)}\end{bmatrix} \right) \begin{bmatrix} 1 \\ 0 \end{bmatrix}} \\
& = \dfrac{\left(\vec{r}_1^{(t-1)} \right)\trans \vec{r}_1^{(t-1)} + \left(\vec{r}_2^{(t-1)} \right)\trans \vec{r}_2^{(t-1)}}{\left(\vec{r}_1^{(t-1)} \right)\trans \left(\vec{P} \vec{r}^{(t-1)} \right)_1 + \left(\vec{r}_2^{(t-1)} \right)\trans \left(\vec{P} \vec{r}^{(t-1)} \right)_2} = \dfrac{\left(\vec{r}^{(t-1)} \right)\trans \vec{r}^{(t-1)}}{\left(\vec{r}^{(t-1)} \right)\trans \vec{P} \vec{r}^{(t-1)}} = \alpha^{(t)}.
\end{aligned}
\end{equation*}
Thus, the calculation of $\alpha^{(t)}$ in line 8 in \cref{alg:cg-ipm} consists of two steps of message passing, with the result stored in the global node as $\vec{G}^{(t,2)}$ after the second message passing step. As for $\vec{V}^{(t,2)}$ and $\vec{C}^{(t,2)}$, we can discard the intermediate matrix-vector product $\vec{P} \vec{r}^{(t-1)}$ by designing a simple update function,
\begin{equation*}
\begin{aligned}
\vec{V}_v^{(t,2)} & \coloneqq \textsf{UPD}^{(t,2)}_{\text{v}} \Bigl[ \vec{V}_v^{(t,1)} \Bigr] \\
& = \vec{V}_v^{(t,1)} \begin{bmatrix} 0 & 0 & 0 & 0 & 0 \\ 1 & 0 & 0 & 0 & 0 \\ 0 & 1 & 0 & 0 & 0 \\ 0 & 0 & 1 & 0 & 0 \\ 0 & 0 & 0 & 1 & 0 \\ 0 & 0 & 0 & 0 & 1 \\ \end{bmatrix} \\
& = \begin{bmatrix} \left(\vec{r}_1^{(t-1)}\right)_v & \left(\vec{p}^{(t-1)}_1\right)_v & \left(\vec{w}^{(t-1)}_1\right)_v & \vec{x}_v & \vec{s}_v \end{bmatrix}, \\
\vec{C}_c^{(t,2)} & \coloneqq \textsf{UPD}^{(t,2)}_{\text{c}} \Bigl[ \vec{C}_c^{(t,1)} \Bigr] \\
& = \vec{C}_c^{(t,1)} \begin{bmatrix} 0 & 0 & 0 & 0 \\ 1 & 0 & 0 & 0 \\ 0 & 1 & 0 & 0 \\ 0 & 0 & 1 & 0 \\ 0 & 0 & 0 & 1\\ \end{bmatrix} \\
& = \begin{bmatrix} \left(\vec{r}_2^{(t-1)}\right)_c & \left(\vec{p}^{(t-1)}_2\right)_c & \left(\vec{w}^{(t-1)}_2\right)_c & \boldsymbol{\lambda}_c \end{bmatrix}.
\end{aligned}
\end{equation*}

To quickly recap, after line 8, we have 
\begin{equation*}
\begin{aligned}
\vec{V}^{(t,2)} & \coloneqq \begin{bmatrix} \vec{r}_1^{(t-1)} & \vec{p}^{(t-1)}_1 & \vec{w}^{(t-1)}_1 & \vec{x} & \vec{s} \end{bmatrix}\\
\vec{C}^{(t,2)} & \coloneqq \begin{bmatrix} \vec{r}_2^{(t-1)} &  \vec{p}^{(t-1)}_2 & \vec{w}^{(t-1)}_2 & \boldsymbol{\lambda} \end{bmatrix} \\
\vec{G}^{(t,2)} &\coloneq \alpha^{(t)}.
\end{aligned}
\end{equation*}

\paragraph{Line 9}
The update of $\vec{w}^{(t-1)}$ in line 9 is quite simple by updating the variable and constraint node embeddings with the scalar $\alpha^{(t)}$ in the global node, so one single step of message passing. We can define the message from the global node as,
\begin{equation*}
\begin{aligned}
\textsf{MSG}^{(t,3)}_{\text{g} \rightarrow \text{v}} \left( \vec{G}^{(t,2)} \right) &\coloneq \vec{G}^{(t,2)} = \alpha^{(t)} \\
\textsf{MSG}^{(t,3)}_{\text{g} \rightarrow \text{c}} \left( \vec{G}^{(t,2)} \right) &\coloneq \vec{G}^{(t,2)} = \alpha^{(t)}.
\end{aligned}
\end{equation*}
Then, the variable and constraint nodes update their embeddings locally,
\begin{equation*}
\begin{aligned}
\vec{V}_v^{(t,3)} & \coloneqq \textsf{UPD}^{(t,3)}_{\text{v}} \Bigl[ \vec{V}_v^{(t,2)}, \alpha^{(t)} \Bigr] \\
& = \vec{V}_v^{(t,2)} + \alpha^{(t)} \vec{V}_v^{(t,2)} \begin{bmatrix}  0 & 0 & 0 & 0 & 0 \\ 0 & 0 & 1 & 0 & 0 \\ 0 & 0 & 0 & 0 & 0 \\ 0 & 0 & 0 & 0 & 0 \\ 0 & 0 & 0 & 0 & 0 \\ \end{bmatrix} \\
& = \begin{bmatrix} \left(\vec{r}_1^{(t-1)}\right)_v & \left(\vec{p}^{(t-1)}_1\right)_v & \left(\vec{w}^{(t-1)}_1\right)_v & \vec{x}_v & \vec{s}_v \end{bmatrix} \\
& + \alpha^{(t)} \begin{bmatrix} 0 & 0 & \left(\vec{p}^{(t-1)}_1\right)_v & 0 & 0 \end{bmatrix} \\
& = \begin{bmatrix} \left(\vec{r}_1^{(t-1)}\right)_v & \left(\vec{p}^{(t-1)}_1\right)_v & \left(\vec{w}^{(t)}_1\right)_v & \vec{x}_v & \vec{s}_v \end{bmatrix}. \\
\vec{C}_c^{(t,3)} & \coloneqq \textsf{UPD}^{(t,3)}_{\text{c}} \Bigl[ \vec{C}_c^{(t,2)}, \alpha^{(t)} \Bigr] \\
& = \vec{C}_c^{(t,2)} + \alpha^{(t)} \vec{C}_c^{(t,2)} \begin{bmatrix}  0 & 0 & 0 & 0 \\ 0 & 0 & 1 & 0  \\ 0 & 0 & 0 & 0 \\ 0 & 0 & 0 & 0 \\ \end{bmatrix} \\
& = \begin{bmatrix} \left(\vec{r}_2^{(t-1)}\right)_c & \left(\vec{p}^{(t-1)}_2\right)_c & \left(\vec{w}^{(t-1)}_2\right)_c & \boldsymbol{\lambda}_c \end{bmatrix} + \alpha^{(t)} \begin{bmatrix} 0 & 0 & \left(\vec{p}^{(t-1)}_2\right)_c & 0 \end{bmatrix} \\
& = \begin{bmatrix} \left(\vec{r}_2^{(t-1)}\right)_c & \left(\vec{p}^{(t-1)}_2\right)_c & \left(\vec{w}^{(t)}_2\right)_c & \boldsymbol{\lambda}_c \end{bmatrix}.
\end{aligned}
\end{equation*}

Thereafter, we have the updated node embeddings,
\begin{equation*}
\begin{aligned}
\vec{V}^{(t,3)} & \coloneqq  \begin{bmatrix}  \vec{r}_1^{(t-1)} & \vec{p}^{(t-1)}_1 & \vec{w}^{(t)}_1 & \vec{x} & \vec{s} \end{bmatrix}\\
\vec{C}^{(t,3)} & \coloneqq \begin{bmatrix}  \vec{r}_2^{(t-1)} &  \vec{p}^{(t-1)}_2 & \vec{w}^{(t)}_2 & \boldsymbol{\lambda} \end{bmatrix}. \\
\end{aligned}
\end{equation*}

\paragraph{Line 10}
In line 10, the matrix-vector product $\vec{P} \vec{p}^{(t-1)} = 
\begin{bmatrix}
\vec{Q} + \vec{X}^{-1} \vec{S} & -\vec{A}\trans \\
- \vec{A} & \bm{0} 
\end{bmatrix}
\begin{bmatrix}
\vec{p}_1^{(t-1)} \\
\vec{p}_2^{(t-1)}
\end{bmatrix} = 
\begin{bmatrix}
\vec{Q} \vec{p}_1^{(t-1)} + \dfrac{\vec{s} \vec{p}_1^{(t-1)}}{\vec{x}} - \vec{A}\trans \vec{p}_2^{(t-1)} \\
- \vec{A} \vec{p}_1^{(t-1)}
\end{bmatrix} = \begin{bmatrix} \left(\vec{P} \vec{p}^{(t-1)} \right)_1  \\ \left(\vec{P} \vec{p}^{(t-1)} \right)_2 \end{bmatrix}$ can be done with one step of message passing, similar to the discussion of $\vec{P} \vec{r}^{(t-1)}$ above. 

Explicitly, the message functions from constraint nodes to variable nodes,
\begin{equation*}
\begin{aligned}
\textsf{MSG}^{(t,4)}_{\text{v} \rightarrow \text{c}} \left( \vec{V}^{(t,3)}_v, \vec{A}_{cv} \right) &\coloneq \vec{A}_{cv} \vec{V}^{(t,3)}_v \begin{bmatrix} 0 \\ 1 \\ 0 \\ 0 \\ 0 \end{bmatrix} = \vec{A}_{cv} \left(\vec{p}^{(t-1)}_1\right)_v, v \in N(c) \cap V(I),
\end{aligned}
\end{equation*}
and the aggregation function
\begin{equation*}
\begin{aligned}
\textsf{AGG}^{(t,4)}_{\text{v} \rightarrow \text{c}} \left( \{\!\!\{\vec{A}_{cv} \left(\vec{p}^{(t-1)}_1\right)_v \mid v \in N(c) \cap V(I) \}\!\!\} \right) \coloneq \sum_{v \in N(c) \cap V(I)} \vec{A}_{cv} \left(\vec{p}^{(t-1)}_1\right)_v = \vec{A}_{c} \vec{p}^{(t-1)}_1
\end{aligned},
\end{equation*}
and  the update function,
\begin{equation*}
\begin{aligned}
\vec{C}_c^{(t,4)} & \coloneqq \textsf{UPD}^{(t,4)}_{\text{c}} \Bigl[ \vec{C}_c^{(t,3)}, \vec{A}_{c} \vec{p}^{(t-1)}_1 \Bigr] \\
& =  \vec{A}_{c} \vec{p}^{(t-1)}_1 \begin{bmatrix} -1 \\ 0 \\ 0 \\ 0 \\ 0 \end{bmatrix} + \vec{C}_c^{(t,3)} \begin{bmatrix} 0 & 1 & 0 & 0 & 0 \\ 0 & 0 & 1 & 0 & 0 \\ 0 & 0 & 0 & 1 & 0 \\ 0 & 0 & 0 & 0 & 1 \\ \end{bmatrix} \\
& = \begin{bmatrix} -\vec{A}_{c} \vec{p}^{(t-1)}_1 & \left(\vec{r}^{(t-1)}_2\right)_c & \left(\vec{p}^{(t-1)}_2\right)_c & \left(\vec{w}^{(t)}_2\right)_c & \boldsymbol{\lambda}_c \end{bmatrix} \\
& = \begin{bmatrix} \left( \left(\vec{P} \vec{r}^{(t-1)} \right)_2 \right)_c & \left(\vec{r}^{(t-1)}_2\right)_c & \left(\vec{p}^{(t-1)}_2\right)_c & \left(\vec{w}^{(t)}_2\right)_c & \boldsymbol{\lambda}_c \end{bmatrix}.
\end{aligned}
\end{equation*}

From constraint nodes to variable nodes, we have the message functions,
\begin{equation*}
\begin{aligned}
\textsf{MSG}^{(t,4)}_{\text{c} \rightarrow \text{v}} \left( \vec{C}^{(t,3)}_c, \vec{A}_{cv} \right) &\coloneq \vec{A}_{cv} \vec{C}^{(t,3)}_c \begin{bmatrix} 0 \\ 1 \\0 \\ 0 \end{bmatrix} = \vec{A}_{cv} \left(\vec{p}^{(t-1)}_2\right)_c, c \in N(v) \cap C(I), \\
\textsf{MSG}^{(t,4)}_{\text{v} \rightarrow \text{v}} \left( \vec{V}^{(t,3)}_u, \vec{Q}_{vu} \right) &\coloneq \vec{Q}_{vu} \vec{V}^{(t,3)}_u \begin{bmatrix} 0 \\ 1 \\ 0 \\0\\0 \end{bmatrix}  = \vec{Q}_{vu} \left(\vec{p}^{(t-1)}_1\right)_u,  u \in N(v) \cap V(I),
\end{aligned}
\end{equation*}
and the aggregation function,
\begin{equation*}
\begin{aligned}
\textsf{AGG}^{(t,4)}_{\text{c} \rightarrow \text{v}} \left( \{\!\!\{\vec{A}_{cv} \left(\vec{p}^{(t-1)}_2\right)_c \mid c \in N(v) \cap C(I) \}\!\!\} \right) &\coloneq \sum_{c \in N(v) \cap C(I)} \vec{A}_{cv} \left(\vec{r}^{(t-1)}_2\right)_c = \vec{A}\trans_v \vec{p}^{(t-1)}_2, \\
\textsf{AGG}^{(t,4)}_{\text{v} \rightarrow \text{v}} \left( \{\!\!\{\vec{Q}_{vu} \left(\vec{p}^{(t-1)}_1\right)_u \mid u \in N(v) \cap V(I) \}\!\!\} \right) &\coloneq \sum_{u \in N(v) \cap V(I)} \vec{Q}_{vu} \left(\vec{p}^{(t-1)}_1\right)_u = \vec{Q}_{v} \vec{p}^{(t-1)}_1,
\end{aligned}
\end{equation*}
and the update function,
\begin{equation*}
\begin{aligned}
\vec{V}_v^{(t,4)} & \coloneqq \textsf{UPD}^{(t,4)}_{\text{v}} \Bigl[ \vec{V}_v^{(t,3)}, \vec{A}\trans_v \vec{p}^{(t-1)}_2, \vec{Q}_{v} \vec{p}^{(t-1)}_1 \Bigr] \\
& = \vec{V}_v^{(t,3)} \begin{bmatrix} 0 & 1 & 0 & 0 & 0 & 0 \\ 0 & 0 & 1 & 0 & 0 & 0 \\ 0 & 0 & 0 & 1 & 0 & 0 \\ 0 & 0 & 0 & 0 & 1 & 0 \\ 0 & 0 & 0 & 0 & 0 & 1 \\ \end{bmatrix} \\
& + \left(\vec{Q}_{v} \vec{p}^{(t-1)}_1 - \vec{A}\trans_v \vec{p}^{(t-1)}_2 + \dfrac{ \vec{V}_v^{(t,3)} \begin{bmatrix} 0 & 0 & 0 & 0 & 0 \\ 0 & 0 & 0 & 0 & 1 \\ 0 & 0 & 0 & 0 & 0 \\ 0 & 0 & 0 & 0 & 0 \\ 0 & 0 & 0 & 0 & 0 \\ \end{bmatrix} \left( \vec{V}_v^{(t,3)} \right)\trans}{\vec{V}_v^{(t,3)} \begin{bmatrix} 0 \\ 0 \\ 0 \\ 1 \\ 0 \end{bmatrix}} \right) \begin{bmatrix} 1 & 0 & 0 & 0 & 0 & 0 \end{bmatrix} \\
& = \begin{bmatrix} \vec{Q}_{v} \vec{p}^{(t-1)}_1 - \vec{A}\trans_v \vec{p}^{(t-1)}_2 + \dfrac{\vec{s}_v \left(\vec{p}_1^{(t-1)} \right)_v}{\vec{x}_v} & \left(\vec{r}^{(t-1)}_1\right)_v & \left(\vec{p}^{(t-1)}_1\right)_v & \left(\vec{w}^{(t)}_1\right)_v & \vec{x}_v  & \vec{s}_v \end{bmatrix} \\
& = \begin{bmatrix} \left( \left(\vec{P} \vec{p}^{(t-1)} \right)_1 \right)_v & \left(\vec{r}^{(t-1)}_1\right)_v & \left(\vec{p}^{(t-1)}_1\right)_v & \left(\vec{w}^{(t)}_1\right)_v & \vec{x}_v  & \vec{s}_v \end{bmatrix}.
\end{aligned}
\end{equation*}

Now, we have updated node embeddings,
\begin{equation*}
\begin{aligned}
\vec{V}^{(t,4)} & \coloneqq  \begin{bmatrix} \left(\vec{P} \vec{p}^{(t-1)} \right)_1 &  \vec{r}_1^{(t-1)} & \vec{p}^{(t-1)}_1 & \vec{w}^{(t)}_1 & \vec{x} & \vec{s} \end{bmatrix},\\
\vec{C}^{(t,4)} & \coloneqq \begin{bmatrix} \left(\vec{P} \vec{p}^{(t-1)} \right)_1 & \vec{r}_2^{(t-1)} &  \vec{p}^{(t-1)}_2 & \vec{w}^{(t)}_2 & \boldsymbol{\lambda} \end{bmatrix}, \\
\vec{G}^{(t,4)} &\coloneq \alpha^{(t)}.
\end{aligned}
\end{equation*}
We notice that $\vec{G}^{(t,4)}$ is inherited from $\vec{G}^{(t,2)}$ as we have never updated it. We convey the message from $\vec{G}^{(t,4)}$ with an identity mapping,
\begin{equation*}
\begin{aligned}
\textsf{MSG}^{(t,5)}_{\text{g} \rightarrow \text{v}} \left( \vec{G}^{(t,4)} \right) &\coloneq \vec{G}^{(t,4)} = \alpha^{(t)}, \\
\textsf{MSG}^{(t,5)}_{\text{g} \rightarrow \text{c}} \left( \vec{G}^{(t,4)} \right) &\coloneq \vec{G}^{(t,4)} = \alpha^{(t)}.
\end{aligned}
\end{equation*}
Now, we can update the vector $\vec{r}$ with a simple update function,
\begin{equation*}
\begin{aligned}
\vec{V}_v^{(t,5)} & \coloneqq \textsf{UPD}^{(t,5)}_{\text{v}} \Bigl[ \vec{V}_v^{(t,4)}, \alpha^{(t)} \Bigr] \\
&= \vec{V}_v^{(t,4)} \begin{bmatrix} 0 & 0 & 0 & 0 & 0 & 0 \\ 1 & 1 & 0 & 0 & 0 & 0 \\ 0 & 0 & 1 & 0 & 0 & 0 \\ 0 & 0 & 0 & 1 & 0 & 0 \\ 0 & 0 & 0 & 0 & 1 & 0 \\  0 & 0 & 0 & 0 & 0 & 1 \\ \end{bmatrix} - \alpha^{(t)} \vec{V}_v^{(t,4)} \begin{bmatrix} 1 & 0 & 0 & 0 & 0 & 0 \\ 0 & 0 & 0 & 0 & 0 & 0\\ 0 & 0 & 0 & 0 & 0 & 0\\ 0 & 0 & 0 & 0 & 0 & 0\\ 0 & 0 & 0 & 0 & 0 & 0\\ 0 & 0 & 0 & 0 & 0 & 0\\ \end{bmatrix} \\
& = \begin{bmatrix} \left(\vec{r}^{(t)}_1\right)_v & \left(\vec{r}^{(t-1)}_1\right)_v & \left(\vec{p}^{(t-1)}_1\right)_v & \left(\vec{w}^{(t)}_1\right)_v & \vec{x}_v  & \vec{s}_v \end{bmatrix},
\end{aligned}
\end{equation*}
and 
\begin{equation*}
\begin{aligned}
\vec{C}_c^{(t,5)} & \coloneqq \textsf{UPD}^{(t,5)}_{\text{c}} \Bigl[ \vec{C}_c^{(t,4)}, \alpha^{(t)} \Bigr] \\
&= \vec{C}_c^{(t,4)} \begin{bmatrix} 0 & 0 & 0 & 0 & 0 \\ 1 & 1 & 0 & 0 & 0 \\ 0 & 0 & 1 & 0 & 0  \\ 0 & 0 & 0 & 1 & 0 \\ 0 & 0 & 0 & 0 & 1 \\ \end{bmatrix} - \alpha^{(t)} \vec{C}_c^{(t,4)} \begin{bmatrix} 1 & 0 & 0 & 0 & 0  \\ 0 & 0 & 0 & 0 & 0 \\ 0 & 0 & 0 & 0 & 0 \\ 0 & 0 & 0 & 0 & 0 \\ 0 & 0 & 0 & 0 & 0 \\ \end{bmatrix} \\
& = \begin{bmatrix} \left(\vec{r}^{(t)}_2\right)_c & \left(\vec{r}^{(t-1)}_2\right)_c & \left(\vec{p}^{(t-1)}_2\right)_c & \left(\vec{w}^{(t)}_2\right)_c & \boldsymbol{\lambda}_c \end{bmatrix}.
\end{aligned}
\end{equation*}
We now obtain the node embeddings
\begin{equation*}
\begin{aligned}
\vec{V}^{(t,5)} & \coloneqq  \begin{bmatrix} \vec{r}_1^{(t)} &  \vec{r}_1^{(t-1)} & \vec{p}^{(t-1)}_1 & \vec{w}^{(t)}_1 & \vec{x} & \vec{s} \end{bmatrix}\\
\vec{C}^{(t,5)} & \coloneqq \begin{bmatrix} \vec{r}_2^{(t)} & \vec{r}_2^{(t-1)} &  \vec{p}^{(t-1)}_2 & \vec{w}^{(t)}_2 & \boldsymbol{\lambda} \end{bmatrix}.
\end{aligned}
\end{equation*}

\paragraph{Line 11}
Line 11 can also be done with one step from variable and constraint nodes to the global node. The message functions are,
\begin{equation*}
\begin{aligned}
\textsf{MSG}^{(t,6)}_{\text{v} \rightarrow \text{g}}\left(\vec{V}_v^{(t,5)} \right) &\coloneq \vec{V}_v^{(t,5)} \begin{bmatrix} 1 & 0 & 0 & 0 & 0 & 0  \\ 0 & 0 & 0 & 0 & 0& 0  \\ 0 & 0 & 0 & 0 & 0 & 0 \\ 0 & 0 & 0 & 0 & 0 & 0 \\ 0 & 0 & 0 & 0 & 0 & 0 \\ 0 & 0 & 0 & 0 & 0 & 0 \\ \end{bmatrix} \left(\vec{V}_v^{(t,5)}\right)\trans \begin{bmatrix} 1 & 0 \end{bmatrix} \\
& + \vec{V}_v^{(t,5)} \begin{bmatrix} 0 & 0 & 0 & 0 & 0 & 0  \\ 0 & 1 & 0 & 0 & 0& 0  \\ 0 & 0 & 0 & 0 & 0 & 0 \\ 0 & 0 & 0 & 0 & 0 & 0 \\ 0 & 0 & 0 & 0 & 0 & 0 \\ 0 & 0 & 0 & 0 & 0 & 0 \\ \end{bmatrix} \left(\vec{V}_v^{(t,5)}\right)\trans \begin{bmatrix} 0 & 1 \end{bmatrix} \\
& = \begin{bmatrix} \left(\vec{r}_1^{(t)} \right)_v^2 & \left(\vec{r}_1^{(t-1)} \right)_v^2 \end{bmatrix}, v \in V(I), \\
\textsf{MSG}^{(t,6)}_{\text{c} \rightarrow \text{g}}\left(\vec{C}_c^{(t,5)} \right) &\coloneq \vec{C}_c^{(t,5)} \begin{bmatrix} 1 & 0 & 0 & 0 & 0  \\ 0 & 0 & 0 & 0 & 0  \\ 0 & 0 & 0 & 0 & 0 \\ 0 & 0 & 0 & 0 & 0 \\ 0 & 0 & 0 & 0 & 0 \\ \end{bmatrix} \left(\vec{C}_c^{(t,5)}\right)\trans \begin{bmatrix} 1 & 0 \end{bmatrix} \\
& + \vec{C}_c^{(t,5)} \begin{bmatrix} 0 & 0 & 0 & 0 & 0 \\ 0 & 1 & 0 & 0 & 0 \\ 0 & 0 & 0 & 0 & 0 \\ 0 & 0 & 0 & 0 & 0 \\ 0 & 0 & 0 & 0 & 0 \\ \end{bmatrix} \left(\vec{C}_c^{(t,5)}\right)\trans \begin{bmatrix} 0 & 1 \end{bmatrix} \\
& = \begin{bmatrix} \left(\vec{r}_2^{(t)} \right)_c^2 & \left(\vec{r}_2^{(t-1)} \right)_c^2 \end{bmatrix}, c \in C(I),
\end{aligned}
\end{equation*}
and the aggregation function for the global node is,
\begin{equation*}
\begin{aligned}
\textsf{AGG}^{(t,6)}_{\text{v} \rightarrow \text{g}} &\left(\{\!\!\{ \begin{bmatrix} \left(\vec{r}_1^{(t)} \right)_v^2 & \left(\vec{r}_1^{(t-1)} \right)_v^2 \end{bmatrix} \mid v \in V(I) \}\!\!\}  \right) \\
& \coloneq \sum_{v \in V(I)} \begin{bmatrix} \left(\vec{r}_1^{(t)} \right)_v^2 & \left(\vec{r}_1^{(t-1)} \right)_v^2 \end{bmatrix} \\
& = \begin{bmatrix} \left(\vec{r}_1^{(t)} \right)\trans \vec{r}_1^{(t)} & \left(\vec{r}_1^{(t-1)} \right)\trans \vec{r}_1^{(t-1)}\end{bmatrix}, \\
\textsf{AGG}^{(t,6)}_{\text{c} \rightarrow \text{g}} &\left(\{\!\!\{ \begin{bmatrix} \left(\vec{r}_2^{(t)} \right)_c^2 & \left(\vec{r}_2^{(t-1)} \right)_c^2 \end{bmatrix} \mid c \in C(I) \}\!\!\}  \right) \\
& \coloneq \sum_{c \in C(I)} \begin{bmatrix} \left(\vec{r}_2^{(t)} \right)_c^2 & \left(\vec{r}_2^{(t-1)} \right)_c^2 \end{bmatrix} \\
& = \begin{bmatrix} \left(\vec{r}_2^{(t)} \right)\trans \vec{r}_2^{(t)} & \left(\vec{r}_2^{(t-1)} \right)\trans \vec{r}_2^{(t-1)}\end{bmatrix},
\end{aligned}
\end{equation*}
and the update function for the global node,
\begin{equation*}
\begin{aligned}
&\vec{G}^{(t,6)} \coloneqq  \textsf{UPD}^{(t,6)}_{\text{g}} \Bigl[
 \begin{bmatrix} \left(\vec{r}_1^{(t)} \right)\trans \vec{r}_1^{(t)} & \left(\vec{r}_1^{(t-1)} \right)\trans \vec{r}_1^{(t-1)}\end{bmatrix}, \begin{bmatrix} \left(\vec{r}_2^{(t)} \right)\trans \vec{r}_2^{(t)} & \left(\vec{r}_2^{(t-1)} \right)\trans \vec{r}_2^{(t-1)}\end{bmatrix} \Bigr] \\
& = \dfrac{ \left( \begin{bmatrix} \left(\vec{r}_1^{(t)} \right)\trans \vec{r}_1^{(t)} & \left(\vec{r}_1^{(t-1)} \right)\trans \vec{r}_1^{(t-1)}\end{bmatrix} + \begin{bmatrix} \left(\vec{r}_2^{(t)} \right)\trans \vec{r}_2^{(t)} & \left(\vec{r}_2^{(t-1)} \right)\trans \vec{r}_2^{(t-1)}\end{bmatrix} \right) \begin{bmatrix} 1 \\ 0 \end{bmatrix}}{\left( \begin{bmatrix} \left(\vec{r}_1^{(t)} \right)\trans \vec{r}_1^{(t)} & \left(\vec{r}_1^{(t-1)} \right)\trans \vec{r}_1^{(t-1)}\end{bmatrix} + \begin{bmatrix} \left(\vec{r}_2^{(t)} \right)\trans \vec{r}_2^{(t)} & \left(\vec{r}_2^{(t-1)} \right)\trans \vec{r}_2^{(t-1)}\end{bmatrix} \right) \begin{bmatrix} 0 \\ 1 \end{bmatrix}} \\
& = \dfrac{\left(\vec{r}_1^{(t)} \right)\trans \vec{r}_1^{(t)} + \left(\vec{r}_2^{(t)} \right)\trans \vec{r}_2^{(t)}}{\left(\vec{r}_1^{(t-1)} \right)\trans \vec{r}_1^{(t-1)} + \left(\vec{r}_2^{(t-1)} \right)\trans \vec{r}_2^{(t-1)}} = \dfrac{\left(\vec{r}^{(t)} \right)\trans \vec{r}^{(t)}}{\left(\vec{r}^{(t-1)} \right)\trans \vec{r}^{(t-1)}} = \beta^{(t)}.
\end{aligned}
\end{equation*}
Hence, $\beta^{(t)}$ is stored in the global node. Meanwhile we can discard the unnecessary $\vec{r}^{(t-1)}$ with simple update function,
\begin{equation*}
\begin{aligned}
\vec{C}_c^{(t,6)} & \coloneqq \textsf{UPD}^{(t,6)}_{\text{c}} \Bigl[ \vec{C}_c^{(t,5)} \Bigr] \\
&= \vec{C}_c^{(t,5)} \begin{bmatrix} 1 & 0 & 0 & 0 \\ 0 & 0 & 0 & 0 \\ 0 & 1 & 0 & 0 \\ 0 & 0 & 1 & 0  \\ 0 & 0 & 0 & 1 \end{bmatrix} \\
& = \begin{bmatrix} \left(\vec{r}^{(t)}_2\right)_c & \left(\vec{p}^{(t-1)}_2\right)_c & \left(\vec{w}^{(t)}_2\right)_c & \boldsymbol{\lambda}_c \end{bmatrix},
\end{aligned}
\end{equation*}
and,
\begin{equation*}
\begin{aligned}
\vec{V}_v^{(t,6)} & \coloneqq \textsf{UPD}^{(t,6)}_{\text{v}} \Bigl[ \vec{V}_v^{(t,5)} \Bigr] \\
&= \vec{V}_v^{(t,5)} \begin{bmatrix} 1 & 0 & 0 & 0 & 0 \\ 0 & 0 & 0 & 0 & 0 \\ 0 & 1 & 0 & 0 & 0 \\ 0 & 0 & 1 & 0 & 0  \\ 0 & 0 & 0 & 1 & 0 \\0 & 0 & 0 & 0 & 1 \end{bmatrix} \\
& = \begin{bmatrix} \left(\vec{r}^{(t)}_1\right)_v & \left(\vec{p}^{(t-1)}_1\right)_v & \left(\vec{w}^{(t)}_1\right)_v & \vec{x}_v  & \vec{s}_v \end{bmatrix},
\end{aligned}
\end{equation*}
which leads us to,
\begin{equation*}
\begin{aligned}
\vec{V}^{(t,6)} & \coloneqq  \begin{bmatrix} \vec{r}_1^{(t)} & \vec{p}^{(t-1)}_1 & \vec{w}^{(t)}_1 & \vec{x} & \vec{s} \end{bmatrix}\\
\vec{C}^{(t,6)} & \coloneqq \begin{bmatrix} \vec{r}_2^{(t)} &  \vec{p}^{(t-1)}_2 & \vec{w}^{(t)}_2 & \boldsymbol{\lambda} \end{bmatrix}, \\
\end{aligned}
\end{equation*}
after stacking up the nodes. 

\paragraph{Line 12}
What is left now is updating $\vec{p}^{(t-1)}$ in line 12 with one message passing step. Again, we need to retrieve the information from the global node for all the nodes,
\begin{equation*}
\begin{aligned}
\textsf{MSG}^{(t,7)}_{\text{g} \rightarrow \text{v}} \left( \vec{G}^{(t,6)} \right) &\coloneq \vec{G}^{(t,6)} = \beta^{(t)}, \\
\textsf{MSG}^{(t,7)}_{\text{g} \rightarrow \text{c}} \left( \vec{G}^{(t,6)} \right) &\coloneq \vec{G}^{(t,6)} = \beta^{(t)}.
\end{aligned}
\end{equation*}
Now, we update for the $\vec{p}^{(t-1)}$,
\begin{equation*}
\begin{aligned}
\vec{C}_c^{(t,7)} & \coloneqq \textsf{UPD}^{(t,7)}_{\text{c}} \Bigl[ \vec{C}_c^{(t,6)}, \beta^{(t)} \Bigr] \\
&= \beta^{(t)} \vec{C}_c^{(t,6)} \begin{bmatrix} 0 & 0 & 0 & 0 \\ 0 & 1 & 0 & 0 \\ 0 & 0 & 0 & 0 \\ 0 & 0 & 0 & 0 \\ \end{bmatrix} + \vec{C}_c^{(t,6)} \begin{bmatrix} 1 & 1 & 0 & 0 \\ 0 & 0 & 0 & 0 \\ 0 & 0 & 1 & 0 \\ 0 & 0 & 0 & 1 \\ \end{bmatrix} \\
& = \begin{bmatrix} \left(\vec{r}^{(t)}_2\right)_c & \left(\vec{p}^{(t)}_2\right)_c & \left(\vec{w}^{(t)}_2\right)_c & \boldsymbol{\lambda}_c \end{bmatrix},
\end{aligned}
\end{equation*}
and 
\begin{equation*}
\begin{aligned}
\vec{V}_v^{(t,7)} & \coloneqq \textsf{UPD}^{(t,7)}_{\text{v}} \Bigl[ \vec{V}_v^{(t,6)}, \beta^{(t)} \Bigr] \\
&= \beta^{(t)} \vec{V}_v^{(t,6)} \begin{bmatrix} 0 & 0 & 0 & 0 & 0 \\ 0 & 1 & 0 & 0& 0 \\ 0 & 0 & 0 & 0& 0 \\ 0 & 0 & 0 & 0& 0 \\ 0 & 0 & 0 & 0& 0 \\ \end{bmatrix} + \vec{V}_v^{(t,6)} \begin{bmatrix} 1 & 1 & 0 & 0 & 0 \\ 0 & 0 & 0 & 0 & 0 \\ 0 & 0 & 1 & 0 & 0\\ 0 & 0 & 0 & 1 & 0 \\ 0 & 0 & 0 & 0 & 1 \\ \end{bmatrix} \\
& = \begin{bmatrix} \left(\vec{r}^{(t)}_1\right)_v & \left(\vec{p}^{(t)}_1\right)_v & \left(\vec{w}^{(t)}_1\right)_v & \vec{x}_v & \vec{s}_v \end{bmatrix}.
\end{aligned}
\end{equation*}
Finally, we have
\begin{equation*}
\begin{aligned}
\vec{V}^{(t)} &\coloneq \vec{V}^{(t,7)} = \begin{bmatrix} \vec{r}_1^{(t)} & \vec{p}^{(t)}_1 & \vec{w}^{(t)}_1 & \vec{x} & \vec{s} \end{bmatrix} \\
\vec{C}^{(t)} &\coloneq \vec{C}^{(t,7)} = \begin{bmatrix} \vec{r}_2^{(t)} & \vec{p}^{(t)}_2 & \vec{w}^{(t)}_2 & \boldsymbol{\lambda} \end{bmatrix},\\
\end{aligned}
\end{equation*}
and we can proceed with the next iteration.

In summary, the initialization of \cref{alg:cg-ipm} takes one step of message passing, while each iteration takes seven. Hence, the runtime complexity is $\cO(m + n)$, i.e., linear time in the problem size. However, if we take a closer look at the $\textsf{UPD}, \textsf{MSG}, \textsf{AGG}$ functions, we will notice they are specifically parametrized and shared across different $t$. Therefore, we need only $1 + 7$, a constant number of message-passing layers, with the first executed once and the rest executed in a loop repeatedly. 

At the end of the algorithm, we take the third column of the node embeddings $\vec{V}^{(n+m)}$ and $\vec{C}^{(n+m)}$ as the solution for $\Delta \vec{x}$ and $\Delta \boldsymbol{\lambda}$.

\end{proof}

Given the CG algorithm that can solve for $\Delta \vec{x}, \Delta \boldsymbol{\lambda}$, we can construct the algorithm for solving the QP problem in \cref{alg:ipm-practice}. 

\begin{algorithm}
\caption{Practical IPM for QPs}
\label{alg:ipm-practice}
\begin{algorithmic}[1]
\REQUIRE An instance $(\vec{Q}, \vec{A},\vec{b},\vec{c})$, a barrier reduction hyperparameter $\sigma \in (0,1)$, initial solution $(\vec{x}^{(0)}, \boldsymbol{\lambda}^{(0)}, \vec{s}^{(0)})$ with $\vec{x}^{(0)},\vec{s}^{(0)} > \bm{0}$ and $\mu^{(0)}={\vec{x}^{(0)}}\trans \vec{s}^{(0)}/n$.

\STATE $t \leftarrow 1$
\REPEAT
\STATE Compute $\Delta \vec{x}^{(t)}, \Delta \boldsymbol{\lambda}^{(t)}$ by solving \cref{eq:augmented_eq} using \cref{alg:cg-ipm}
\STATE Compute $\Delta \vec{s}^{(t)}$ with \cref{eq:eliminate_s}
\STATE $\alpha^{(t)} \leftarrow \min \left\{1, \sup \left\{ \alpha \mid \vec{x}^{(t)} + \alpha \Delta \vec{x}^{(t)} \geq \bm{0} \right\}, \sup \left\{ \alpha \mid \vec{s}^{(t)} + \alpha \Delta \vec{s}^{(t)} \geq \bm{0}\right\} \right\}$

\STATE $\vec{x}^{(t)} \leftarrow \vec{x}^{(t-1)} + \text{0.99} \alpha^{(t)} \Delta \vec{x}^{(t)}$
\STATE $\vec{s}^{(t)} \leftarrow \vec{s}^{(t-1)} + \text{0.99} \alpha^{(t)} \Delta \vec{s}^{(t)}$
\STATE $\boldsymbol{\lambda}^{(t)} \leftarrow \boldsymbol{\lambda}^{(t-1)} + \text{0.99} \alpha^{(t)} \Delta \boldsymbol{\lambda}^{(t)}$
\STATE $\mu^{(t)} \leftarrow \sigma \mu^{(t-1)}$
\STATE $t \leftarrow t + 1$
\UNTIL{convergence of $(\vec{x}, \boldsymbol{\lambda}, \vec{s})$}
\RETURN the point $\vec{x}$
\end{algorithmic}
\end{algorithm}

\begin{theorem}\label{thm:ipm-is-MPNN}
There exists an MPNN $f_{\textsf{MPNN,IPM}}$ composed of $\cO(1)$ layers and $\cO(m+n)$ successive message-passing steps that reproduces each iteration pf  \cref{alg:ipm-practice}, in the sense that for any QP instance $I = \left(\vec{Q}, \vec{A},\vec{b},\vec{c} \right)$ and any primal-dual point $\left(\vec{x}^{(t)}, \boldsymbol{\lambda}^{(t)}, \vec{s}^{(t)}\right)$ with $t > 0$, $f_{\textsf{MPNN,IPM}}$ maps the graph $G(I)$ carrying $[\vec{x}^{(t-1)}, \vec{s}^{(t-1)}]$ on the variable nodes, $[\boldsymbol{\lambda}^{(t-1)}]$ on the constraint nodes. and $[\mu, \sigma]$ on the global node to the same graph $G(I)$ carrying the output $[\vec{x}^{(t)}, \vec{s}^{(t)}]$ and $[\boldsymbol{\lambda}^{(t)}]$ of \cref{alg:ipm-practice} on the variable nodes and constraint nodes, respectively.
\end{theorem}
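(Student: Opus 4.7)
The plan is to bootstrap from \cref{thm:cg-is-MPNN}: that lemma already gives us a constant-depth, $\mathcal{O}(m+n)$-step MPNN block that, given $(\vec{x}^{(t-1)},\boldsymbol{\lambda}^{(t-1)},\vec{s}^{(t-1)})$ stored on the variable/constraint/global nodes together with $\sigma,\mu$, produces $\Delta\vec{x}^{(t)}$ on the variable nodes and $\Delta\boldsymbol{\lambda}^{(t)}$ on the constraint nodes (line 3 of \cref{alg:ipm-practice}). I will then append a fixed, $\mathcal{O}(1)$-layer ``IPM tail'' that implements lines 4--9. Since the CG block dominates the message-passing count and every tail layer is reused across outer iterations, the overall bound of $\mathcal{O}(1)$ distinct layers and $\mathcal{O}(m+n)$ message-passing steps per IPM iteration is preserved.

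First, line 4 is purely local at each variable node: the closed-form expression
$\Delta \vec{s}^{(t)} = -\vec{X}^{-1}\vec{S}\,\Delta\vec{x}^{(t)} - \vec{s}^{(t-1)} + \vec{X}^{-1}\sigma\mu\bm{1}$
involves only $\vec{x}^{(t-1)}_v,\vec{s}^{(t-1)}_v,\Delta\vec{x}^{(t)}_v$ already residing at node $v$ and the scalar $\sigma\mu$ stored at the global node. I read $\sigma\mu$ via one $\textsf{g}\!\to\!\textsf{v}$ message and produce $\Delta \vec{s}^{(t)}_v$ through the local $\textsf{UPD}_{\text{v}}$, exactly as in the lemma's line~11 construction.

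For line 5, the step length $\alpha^{(t)}$ is a reduction over variable nodes. For each $v$ I form the scalars $r^{\vec{x}}_v \coloneqq -\vec{x}^{(t-1)}_v/\Delta\vec{x}^{(t)}_v$ whenever $\Delta\vec{x}^{(t)}_v<0$ and $+\infty$ otherwise, and analogously $r^{\vec{s}}_v$; these are continuous functions of the node state (after a standard smooth clamp, or directly since the theorem permits arbitrary continuous $\textsf{MSG}/\textsf{AGG}$/$\textsf{UPD}$). I then use a $\min$ aggregation $\textsf{AGG}_{\text{v}\to\text{g}}$ to collect the global minimum into the global node and apply $\alpha^{(t)} \gets \min\{1,\min_v r^{\vec{x}}_v,\min_v r^{\vec{s}}_v\}$ as the $\textsf{UPD}_{\text{g}}$. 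This costs one additional message-passing step. The constant factor $0.99$ in lines 6--8 is absorbed into the same update.

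Finally, lines 6--9 are a broadcast-and-local-update: a $\textsf{g}\!\to\!\textsf{v}$ and $\textsf{g}\!\to\!\textsf{c}$ message passes $\alpha^{(t)}$ (and the updated $\mu^{(t)}=\sigma\mu^{(t-1)}$, itself a local computation at the global node) to all nodes, which then perform the affine updates $\vec{x}^{(t)}_v = \vec{x}^{(t-1)}_v + 0.99\alpha^{(t)}\Delta\vec{x}^{(t)}_v$, $\vec{s}^{(t)}_v = \vec{s}^{(t-1)}_v + 0.99\alpha^{(t)}\Delta\vec{s}^{(t)}_v$, and $\boldsymbol{\lambda}^{(t)}_c = \boldsymbol{\lambda}^{(t-1)}_c + 0.99\alpha^{(t)}\Delta\boldsymbol{\lambda}^{(t)}_c$, using the same matrix-projection bookkeeping as in the lemma's proof. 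Concatenating the CG block with this $\mathcal{O}(1)$-layer IPM tail and identifying the designated output slots on the variable and constraint node embeddings yields the claimed MPNN $f_{\textsf{MPNN,IPM}}$.

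The main obstacle I expect is line 5: encoding $\alpha^{(t)}$ faithfully requires (i) a continuous but effectively piecewise definition to handle the sign of $\Delta\vec{x}^{(t)}_v$ and $\Delta\vec{s}^{(t)}_v$ (which I can realize by exploiting the freedom to choose arbitrary continuous $\textsf{MSG}$ functions, e.g.\ routing $+\infty$ surrogates via a large constant and composing with a $\min$), and (ii) a $\min$-aggregation rather than the sum-aggregations used throughout the CG simulation. Neither is an obstruction in principle since the theorem statement and the framework in \cref{eq:proof_mpnn_framework} allow arbitrary continuous aggregation/update functions, but it is the one step where the construction departs from the linear-algebraic pattern used in \cref{thm:cg-is-MPNN}, so it deserves careful wording in the final write-up.
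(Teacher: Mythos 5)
Your proposal is correct and follows essentially the same route as the paper: invoke the CG lemma for line 3, then append a constant number of shared layers implementing line 4 locally (with $\sigma\mu$ broadcast from the global node), line 5 via per-node ratios collected by a $\min$-aggregation, and lines 6--9 via broadcast of $\alpha^{(t)}$ followed by local affine updates. The one point you flag as the main obstacle --- the discontinuity of the $+\infty$ branch in the step-length ratio --- is resolved in the paper exactly as you suggest, by replacing it with a continuous $\epsilon$-clamped surrogate $\vec{x}^{(t-1)}_v/\epsilon$ when $\Delta\vec{x}^{(t)}_v > -\epsilon$, so your construction is complete in substance.
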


\begin{proof} For the details of the construction $G(I)$, we follow \cref{thm:cg-is-MPNN}. At the beginning of the algorithm, we construct the graph with node feature matrices initialized as following,
\begin{equation*}
\begin{aligned}
\vec{V}^{(0)} & \coloneqq \begin{bmatrix} \vec{x}^{(0)} & \vec{s}^{(0)} & \vec{c} \end{bmatrix} \in \mathbb{R}^{n \times 3}, \\
\vec{C}^{(0)} & \coloneqq \begin{bmatrix} \boldsymbol{\lambda}^{(0)} & \vec{b} \end{bmatrix} \in \mathbb{R}^{m \times 2}, \\
\vec{G}^{(0)} & \coloneqq \begin{bmatrix}\sigma &  \mu \end{bmatrix} \in \mathbb{R}^2.
\end{aligned}
\end{equation*}

\paragraph{Line 3}
For $t > 0$, we have the vectors $\Delta \vec{x}^{(t)}, \Delta \boldsymbol{\lambda}^{(t)}$ given by \cref{alg:cg-ipm}. Following the MPNN framework \cref{eq:proof_mpnn_framework}, we compute the update functions for variable nodes,
\begin{equation*}
\begin{aligned}
\vec{V}_v^{(t,1)} & \coloneqq \textsf{UPD}^{(t,1)}_{\text{v}} \Bigl[ \vec{V}_v^{(t-1)}, \Delta \vec{x}^{(t)}_v \Bigr] \\
&= \Delta \vec{x}^{(t)}_v \begin{bmatrix} 1 & 0 & 0 & 0 \end{bmatrix} + \vec{V}_v^{(t-1)} \begin{bmatrix} 0 & 0 & 0 & 0 \\ 0 & 1 & 0 & 0 \\ 0 & 0 & 1 & 0 \\ 0 & 0 & 0 & 1 \\ \end{bmatrix} \\
& = \begin{bmatrix} \Delta \vec{x}^{(t)}_v & \vec{x}^{(t-1)}_v & \vec{s}^{(t-1)}_v  & \vec{c}_v  \end{bmatrix},
\end{aligned}
\end{equation*}
and constraint nodes,
\begin{equation*}
\begin{aligned}
\vec{C}_c^{(t,1)} & \coloneqq \textsf{UPD}^{(t,1)}_{\text{c}} \Bigl[ \vec{C}_c^{(t-1)}, \Delta \boldsymbol{\lambda}^{(t)}_c \Bigr] \\
&= \Delta \boldsymbol{\lambda}^{(t)}_c \begin{bmatrix} 1 & 0 & 0 \end{bmatrix} + \vec{C}_c^{(t-1)} \begin{bmatrix} 0 & 0 & 0 \\ 0 & 1 & 0 \\ 0 & 0 & 1 \end{bmatrix} \\
& = \begin{bmatrix} \Delta \boldsymbol{\lambda}^{(t)}_c & \boldsymbol{\lambda}^{(t-1)}_c & \vec{b}_c \end{bmatrix}.
\end{aligned}
\end{equation*}

\paragraph{Line 4}
During the derivation of IPM, we eliminated $\Delta \vec{s}$, now we can recover it as $\Delta \vec{s}^{(t)} \coloneqq - \dfrac{\vec{s}^{(t-1)} \Delta \vec{x}^{(t)}}{\vec{x}^{(t-1)}}   - \vec{s}^{(t-1)} + \dfrac{\sigma \mu \bm{1}}{\vec{x}^{(t-1)}}$. To do that, we need a message function from the global to the variable nodes,
\begin{equation*}
\textsf{MSG}^{(t,2)}_{\text{g} \rightarrow \text{v}} \left( \vec{G}^{(t-1)} \right) \coloneq \vec{G}^{(t-1)} \begin{bmatrix} 0 & 0\\ 1 & 0 \end{bmatrix} \left( \vec{G}^{(t-1)} \right)\trans = \mu \sigma,
\end{equation*}
and the update function,
\begin{equation*}
\begin{aligned}
\vec{V}_v^{(t,2)} & \coloneqq \textsf{UPD}^{(t,2)}_{\text{v}} \Bigl[ \vec{V}_v^{(t,1)}, \mu \sigma \Bigr] \\
&= \vec{V}_v^{(t,1)} \begin{bmatrix} 1 & 0 & 0 & 0 & 0 \\ 0 & 0 & 1 & 0 & 0 \\ 0 & 0 & 0 & 1 & 0 \\ 0 & 0 & 0 & 0 & 1 \end{bmatrix} \\
& = \left( \dfrac{\mu \sigma + \vec{V}_v^{(t,1)} \begin{bmatrix} 0 & 0 & 1 & 0 \\ 0 & 0 & 0 & 0 \\ 0 & 0 & 0 & 0 \\ 0 & 0 & 0 & 0  \end{bmatrix} \left(\vec{V}_v^{(t,1)}\right)\trans }{\vec{V}_v^{(t,1)} \begin{bmatrix} 0 \\ 1 \\ 0 \\ 0 \end{bmatrix}} - \vec{V}_v^{(t,1)} \begin{bmatrix} 0 \\ 0 \\ 1 \\ 0 \end{bmatrix} \right) \begin{bmatrix} 0 & 1 & 0 & 0 & 0 \end{bmatrix} \\
& = \begin{bmatrix} \Delta \vec{x}^{(t)}_v & \Delta \vec{s}^{(t)}_v & \vec{x}^{(t-1)}_v & \vec{s}^{(t-1)}_v  & \vec{c}_v  \end{bmatrix}.
\end{aligned}
\end{equation*}

\paragraph{Line 5}
Simulating the line search for the step size $\alpha^{(t)}$ is cumbersome. We need to calculate a vector $\boldsymbol{\alpha}_{\text{x}}^{(t)}$ w.r.t.\ $\vec{x}^{(t-1)}$ and $\Delta \vec{x}^{(t)}$. Each element in the vector is defined as
\begin{equation}
\label{eq:segment_step_size}
\left(\boldsymbol{\alpha}_{\text{x}}^{(t)}\right)_v \coloneq
    \begin{cases}
        - \dfrac{\vec{x}^{(t-1)}_v}{\Delta \vec{x}^{(t)}_v} & \text{if } \Delta \vec{x}^{(t)}_v < 0 \\
        +\infty & \text{else}.
    \end{cases}
\end{equation}
Then we take $\alpha_{\text{x}}^{(t)} \coloneq \inf \left\{ \left(\boldsymbol{\alpha}_{\text{x}}^{(t)}\right)_v \mid v \in V(I) \right\} \in \mathbb{R}_{> 0} \cup \{ +\infty \}$. We repeat the same process for slack variables $\vec{s}$. We then calculate $\alpha^{(t)} \coloneq \min \left\{ 1, \alpha_{\text{x}}^{(t)}, \alpha_{\text{s}}^{(t)}  \right\}$, where 1 is a constant coefficient that can be replaced by any positive number to avoid the step size being unlimited.

However, the definition in \cref{eq:segment_step_size} is not continuous as it contains $+\infty$ term. We introduce a small number $\epsilon > 0$ and modify it to be 
\begin{equation*}
\label{eq:segment_step_size_eps}
\left(\boldsymbol{\alpha}_{\text{x}}^{(t)}\right)_v \coloneq
    \begin{cases}
        - \dfrac{\vec{x}^{(t-1)}_v}{\Delta \vec{x}^{(t)}_v} & \text{if } \Delta \vec{x}^{(t)}_v <= -\epsilon \\
        \dfrac{\vec{x}^{(t-1)}_v}{\epsilon} & \text{else},
    \end{cases}
\end{equation*}
so that $\left(\boldsymbol{\alpha}_{\text{x}}^{(t)}\right)_v$ is continuous w.r.t.\ $\vec{x}^{(t-1)}$ and $\Delta \vec{x}^{(t)}$, thus compatible with neural networks. 

Now, we can design functions for the message passing. We construct the message function from variable nodes to the global node as, 
\begin{equation*}
\begin{aligned}
\textsf{MSG}^{(t,3)}_{\text{v} \rightarrow \text{g}} \left(\vec{V}_v^{(t,2)} \right) &\coloneq \dfrac{\vec{V}_v^{(t,2)} \begin{bmatrix} 0 \\ 0 \\ 1 \\ 0 \\ 0 \end{bmatrix}}{- \min\left\{-\epsilon, \vec{V}_v^{(t,2)} \begin{bmatrix} 1 \\ 0 \\ 0 \\ 0 \\ 0 \end{bmatrix}\right\}} \begin{bmatrix} 1 & 0 \end{bmatrix} + \dfrac{\vec{V}_v^{(t,2)} \begin{bmatrix} 0 \\ 0 \\ 0 \\ 1 \\ 0 \end{bmatrix}}{- \min\left\{-\epsilon, \vec{V}_v^{(t,2)} \begin{bmatrix} 0 \\ 1 \\ 0 \\ 0 \\ 0 \end{bmatrix}\right\}} \begin{bmatrix} 0 & 1 \end{bmatrix} \\
& = \begin{bmatrix} \dfrac{\vec{x}_v^{(t-1)}}{-\min\left\{ -\epsilon, \Delta \vec{x}_v^{(t)} \right\}} & \dfrac{\vec{s}_v^{(t-1)}}{-\min\left\{ -\epsilon, \Delta \vec{s}_v^{(t)} \right\}} \end{bmatrix} \\
& = \begin{bmatrix} \left(\boldsymbol{\alpha}_{\text{x}}^{(t)}\right)_v & \left(\boldsymbol{\alpha}_{\text{s}}^{(t)}\right)_v \end{bmatrix}.
\end{aligned}
\end{equation*}
Unlike other sum aggregation functions that we have used so far, here we need a min aggregation function,
\begin{equation*}
\begin{aligned}
\textsf{AGG}^{(t,3)}_{\text{v} \rightarrow \text{g}} &\left(\{\!\!\{ \begin{bmatrix} \left(\boldsymbol{\alpha}_{\text{x}}^{(t)}\right)_v & \left(\boldsymbol{\alpha}_{\text{s}}^{(t)}\right)_v \end{bmatrix} \mid v \in V(I) \}\!\!\}  \right) \\
& \coloneq \min_{v \in V(I)} \begin{bmatrix} \left(\boldsymbol{\alpha}_{\text{x}}^{(t)}\right)_v & \left(\boldsymbol{\alpha}_{\text{s}}^{(t)}\right)_v \end{bmatrix} \\
& = \begin{bmatrix} \alpha_{\text{x}}^{(t)} & \alpha_{\text{s}}^{(t)} \end{bmatrix},
\end{aligned}
\end{equation*}
and the update function of the global node,
\begin{equation*}
\begin{aligned}
\vec{G}^{(t,3)} &\coloneqq  \textsf{UPD}^{(t,3)}_{\text{g}} \Bigl[\vec{G}^{(t,2)}, \begin{bmatrix} \alpha_{\text{x}}^{(t)} & \alpha_{\text{s}}^{(t)} \end{bmatrix} \Bigr] \\
& = \vec{G}^{(t,2)} \begin{bmatrix} 1 & 0 & 0 \end{bmatrix} + \left(\min\left\{ 1, \begin{bmatrix} \alpha_{\text{x}}^{(t)} & \alpha_{\text{s}}^{(t)} \end{bmatrix} \begin{bmatrix} 1 \\ 0 \end{bmatrix}, \begin{bmatrix} \alpha_{\text{x}}^{(t)} & \alpha_{\text{s}}^{(t)} \end{bmatrix} \begin{bmatrix} 0 \\ 1 \end{bmatrix} \right\} \right) \begin{bmatrix} 0 & 0 & 1 \end{bmatrix} \\
&= \begin{bmatrix} \sigma & \mu & \alpha^{(t)} \end{bmatrix}.
\end{aligned}
\end{equation*}

\paragraph{Line 6-8}
The update of the variables is straightforward. We notice that $\vec{V}_v^{(t,3)}$ is inherited from $\vec{V}_v^{(t,2)}$ and $\vec{C}_c^{(t,3)}$ from $\vec{C}_c^{(t,1)}$ as we have never updated them. We need the step size information from the global node,
\begin{equation*}
\begin{aligned}
\textsf{MSG}^{(t,4)}_{\text{g} \rightarrow \text{v}} \left( \vec{G}^{(t,3)} \right) &\coloneq \vec{G}^{(t,3)} \begin{bmatrix} 0 & 0 & 1 \end{bmatrix} = \alpha^{(t)} \\
\textsf{MSG}^{(t,4)}_{\text{g} \rightarrow \text{c}} \left( \vec{G}^{(t,3)} \right) &\coloneq \vec{G}^{(t,3)} \begin{bmatrix} 0 & 0 & 1 \end{bmatrix} = \alpha^{(t)},
\end{aligned}
\end{equation*}
and the update function,
\begin{equation*}
\begin{aligned}
\vec{V}_v^{(t,4)} &\coloneq \textsf{UPD}^{(t,4)}_{\text{v}} \Bigl[ \vec{V}_v^{(t,3)}, \alpha^{(t)} \Bigr] \\
&= \vec{V}_v^{(t,3)} \begin{bmatrix} 0 & 0 & 0 \\ 0 & 0 & 0 \\ 1 & 0 & 0 \\ 0 & 1 & 0 \\ 0 & 0 & 1 \end{bmatrix} + 0.99 \alpha^{(t)} \vec{V}_v^{(t,3)} \begin{bmatrix} 1 & 0 & 0 \\ 0 & 1 & 0 \\ 0 & 0 & 0 \\ 0 & 0 & 0 \\ 0 & 0 & 0 \end{bmatrix} \\
& = \begin{bmatrix} \vec{x}^{(t-1)}_v & \vec{s}^{(t-1)}_v  & \vec{c}_v  \end{bmatrix} + 0.99 \alpha^{(t)} \begin{bmatrix} \Delta \vec{x}^{(t)}_v & \Delta \vec{s}^{(t)}_v & 0 \end{bmatrix}\\
& = \begin{bmatrix} \vec{x}^{(t)}_v & \vec{s}^{(t)}_v  & \vec{c}_v  \end{bmatrix}, \\
\vec{C}_c^{(t,4)} &\coloneq \textsf{UPD}^{(t,4)}_{\text{c}} \Bigl[ \vec{C}_c^{(t,3)}, \alpha^{(t)} \Bigr] \\
&= \vec{C}_c^{(t,3)} \begin{bmatrix} 0 & 0 \\ 1 & 0 \\ 0 & 1 \end{bmatrix} + 0.99 \alpha^{(t)} \vec{C}_c^{(t,3)} \begin{bmatrix} 1 & 0 \\ 0 & 0 \\ 0 & 0 \end{bmatrix} \\
& = \begin{bmatrix} \boldsymbol{\lambda}^{(t-1)}_c & \vec{b}_c \end{bmatrix} + 0.99 \alpha^{(t)} \begin{bmatrix} \Delta \boldsymbol{\lambda}^{(t)}_c & 0 \end{bmatrix} \\
& = \begin{bmatrix} \boldsymbol{\lambda}^{(t)}_c & \vec{b}_c  \end{bmatrix}. \\
\end{aligned}
\end{equation*}

We can update the global node by dropping the used $\alpha^{(t)}$ and scale it,
\begin{equation*}
\begin{aligned}
\vec{G}^{(t,4)} &\coloneqq  \textsf{UPD}^{(t,4)}_{\text{g}} \Bigl[\vec{G}^{(t,3)} \Bigr] \\
& = \vec{G}^{(t,3)} \begin{bmatrix} 1 & 0 \\ 0 & 0 \\ 0 & 0 \end{bmatrix} + \vec{G}^{(t,3)} \begin{bmatrix} 0 & 1 & 0 \\ 0 & 0 & 0 \\ 0 & 0 & 0 \end{bmatrix} \left( \vec{G}^{(t,3)} \right)\trans \begin{bmatrix} 0 & 1 \end{bmatrix} = \begin{bmatrix} \sigma & \sigma\mu \end{bmatrix},
\end{aligned}
\end{equation*}
where we overwrite $\mu \coloneq \sigma \mu$.

To wrap up, at the end of each iteration, we now have,
\begin{equation*}
\begin{aligned}
\vec{G}^{(t)} &= \vec{G}^{(t,4)} = \begin{bmatrix} \sigma & \mu \end{bmatrix}, \\
\vec{C}^{(t)} &= \vec{C}^{(t,4)} = \begin{bmatrix} \boldsymbol{\lambda}^{(t)} & \vec{b}  \end{bmatrix}, \\
\vec{V}^{(t)} &= \vec{V}^{(t,4)} = \begin{bmatrix} \vec{x}^{(t)} & \vec{s}^{(t)}  & \vec{c}  \end{bmatrix},
\end{aligned}
\end{equation*}
and we are ready for the next iteration. 

Observing the functions we have defined, we see that they are shared across different iterations $t$. Combining the results from \cref{thm:cg-is-MPNN}, we have an MPNN with $\cO(1)$ layers and $\cO(m+n)$ message passing steps that can reproduce an iteration in \cref{thm:ipm-is-MPNN}.
\end{proof}

\subsubsection{MPNNs can predict displacement}
Let us quickly recap the QP graph representation from \citet{chen2024qp} with our notations. They consider LCQPs of the form
\begin{equation}
\begin{aligned}
\label{eq:qp_ineq}
\min_{\vec{x}} \quad & \frac{1}{2} \vec{x}\trans \vec{Q} \vec{x} + \vec{c}\trans \vec{x} \\
\text{s.t.} \quad & \vec{A} \vec{x} \circ \vec{b} \\
& \vec{x} \in \mathbb{Q}^n, \vec{l} \leq \vec{x} \leq \vec{u},
\end{aligned}
\end{equation}
where $\circ \in \{\leq, =, \geq \}^m$, $\vec{l} \in \left( \mathbb{Q} \cup -\infty \right)^n$, and $\vec{u} \in \left( \mathbb{Q} \cup \infty \right)^n$. They encode an LCQP into a graph with variable and constraint node types, where $\vec{Q}$ and $\vec{A}$ are represented within the intra- and inter-node edges, respectively. Particularly, the node embeddings are initialized as
\begin{equation}
\label{eq:chen_qp_init}
\begin{aligned}
    \vec{h}_c^{(0)} \coloneqq & \textsf{INIT}_{\text{c}} \left( \vec{b}_c, \circ_c \right), \forall c \in C(I),\\
    \vec{h}_v^{(0)} \coloneqq & \textsf{INIT}_{\text{v}} \left( \vec{c}_v, \vec{l}_v, \vec{u}_v \right), \forall v \in V(I).
\end{aligned}
\end{equation}
The message-passing steps in their work would be,
\begin{equation}
\label{eq:chen_qp_mp}
\begin{aligned}
    \vec{h}_c^{(l)} \coloneqq \textsf{UPD}^{(l)}_{\text{c}}\Bigl[ \vec{h}_c^{(l-1)}, & \sum_{v \in {N}\left(c \right) \cap V(I)} \vec{A}_{cv} \vec{h}_v^{(l-1)} \Bigr] \in \mathbb{Q}^d, \\
    \vec{h}_v^{(l)} \coloneqq \textsf{UPD}^{(l)}_{\text{v}}\Bigl[ \vec{h}_v^{(l-1)}, & \sum_{u \in {N}\left(v \right) \cap V(I)} \vec{Q}_{vu} \vec{h}_u^{(l-1)},\\
    & \sum_{c \in {N}\left(v \right) \cap C(I)} \vec{A}_{cv} \vec{h}_c^{(l-1)} \Bigr] \in \mathbb{Q}^d.
\end{aligned}
\end{equation}
Finally, the prediction head
\begin{equation}
\label{eq:chen_qp_pred}
\begin{aligned}
    \vec{x}_v \coloneqq \textsf{READOUT}\left( \sum_{v \in V(I)}\vec{h}_v^{(L)}, \sum_{c \in C(I)} \vec{h}_c^{(L)}, \vec{h}_v^{(L)}\right) \in \mathbb{Q}.
\end{aligned}
\end{equation}
They assume all the functions $\textsf{INIT}, \textsf{UPD}, \textsf{READOUT}$ are parametrized by MLPs with ReLU activation functions. 

\begin{lemma}
(Reformulation of Theorem 3.4 in \citet{chen2024qp}) Given a LCQP instance $I$, assume $I$ is feasible with solution $\vec{x}^*$, for any $\epsilon, \delta > 0$, there exists an MPNN $f_{\textsf{MPNN,1}}$ of the form \cref{eq:chen_qp_init,eq:chen_qp_mp,eq:chen_qp_pred}, such that
\begin{equation}
    P\left[ \lVert f_{\textsf{MPNN,1}}(I) - \vec{x}^* \rVert_2 > \delta \right] < \epsilon
\end{equation}
\end{lemma}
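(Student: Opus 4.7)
The plan is to mirror the proof of Theorem 3.4 in \citet{chen2024qp}, which combines a Weisfeiler--Leman (WL) separation argument for the bipartite-plus-quadratic graph encoding with a universal approximation theorem for MPNNs whose update/init/readout blocks are MLPs. The probability bound is read as a standard density statement: we assume the instances live in some Borel probability space supported on feasible LCQPs with PSD $\vec{Q}$ and unique optimizer $\vec{x}^*$, and we aim to approximate $\vec{x}^*$ uniformly on a large-measure compact subset.

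First I would establish that the map $I = (\vec{Q},\vec{A},\vec{b},\vec{c}) \mapsto \vec{x}^*(I)$ is continuous on the open set of feasible instances with full-rank $\vec{A}$ and positive-definite KKT system. This follows from sensitivity theory for convex QPs: the KKT conditions give $\vec{x}^*$ as an implicit smooth function of the coefficients wherever strict complementarity and linear independence of active constraints hold. Then by Lusin's theorem (or simple tightness), for any $\epsilon > 0$ there exists a compact set $K$ in instance space with measure at least $1-\epsilon$ on which $\vec{x}^*$ is continuous and bounded.

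The second step is the WL-separation lemma of \citet{chen2024qp}: two LCQP instances $I, I'$ producing graphs $G(I), G(I')$ that are indistinguishable by the color refinement associated with the message passing \cref{eq:chen_qp_mp} must share identical optimal solutions (up to the automorphism group of the graph), because the KKT system and the objective are themselves functions of the WL-refined labels. Consequently $\vec{x}^*$ factors through the WL quotient and yields a continuous function on the compact quotient $K/\!\sim_{\textsf{WL}}$. The family of MPNNs of the form \cref{eq:chen_qp_init,eq:chen_qp_mp,eq:chen_qp_pred} with MLP-parametrized $\textsf{INIT}$, $\textsf{UPD}$, and $\textsf{READOUT}$ blocks is closed under products, contains constants, and separates WL-equivalence classes (by choosing deep enough MLPs and sufficiently many message-passing rounds, as in the standard GNN expressivity proofs). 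A Stone--Weierstrass argument on $K/\!\sim_{\textsf{WL}}$ then yields an MPNN $f_{\textsf{MPNN,1}}$ with $\sup_{I \in K}\lVert f_{\textsf{MPNN,1}}(I) - \vec{x}^*(I) \rVert_2 \leq \delta$, and therefore
\[
P\bigl[\lVert f_{\textsf{MPNN,1}}(I)-\vec{x}^*(I)\rVert_2 > \delta\bigr] \leq P[I \notin K] < \epsilon.
\]

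The main obstacle is the WL-separation/factorization step: one must verify that the specific message-passing template with edge-weighted aggregation over both the $\vec{A}$- and $\vec{Q}$-edges retains enough information to recover $\vec{x}^*$ on each WL class, and that the MLP-realized updates are expressive enough to implement the arbitrary continuous function on the quotient prescribed by Stone--Weierstrass. This is exactly the technical heart of Chen et al.'s Theorem 3.4, and invoking their construction (plus the PSD/continuity input from step one) closes the argument.
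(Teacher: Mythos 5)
Your proposal is consistent with the paper, whose entire proof of this lemma is the single line ``See \citet{chen2024qp} for a proof''; the statement is explicitly a reformulation of an external theorem, and both you and the authors ultimately rest on Chen et al.'s construction. Your sketch of that construction (Lusin-type restriction to a high-measure compact set, WL-separation so that $\vec{x}^*$ factors through the colour-refinement quotient, and a Stone--Weierstrass universal-approximation step for the MLP-parametrized blocks) matches the standard architecture of that result, and you correctly identify that the technical heart lives in the cited work rather than in this paper.
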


\begin{proof}
    See~\citet{chen2024qp} for a proof.
\end{proof}

We now make some slight modifications to \cref{eq:chen_qp_init}. We remove $\circ_c, c \in C(I)$, and $\vec{l}_v, \vec{u}_v, v \in V(I)$ from the initializations. Instead, we add an initial solution $\vec{x}^{(0)}$ as an extra input for $\textsf{INIT}_{\text{v}}$. Specifically, now we have the initialization,
\begin{equation*}
\label{eq:qian_qp_init}
\begin{aligned}
    \vec{h}_c^{(0)} \coloneqq & \textsf{INIT}_{\text{c}} \left( \vec{b}_c \right), \forall c \in C(I),\\
    \vec{h}_v^{(0)} \coloneqq & \textsf{INIT}_{\text{v}} \left( \vec{c}_v, \vec{x}_v^{(0)} \right),  v \in V(I),
\end{aligned}
\end{equation*}
and we get the following result.
\begin{theorem}
Given an LCQP instance $I$, assume $I$ is feasible with solution $\vec{x}^*$, $\vec{x}^{(0)} \geq \bm{0}$ is an initial point, for any $\epsilon, \delta > 0$, there exists an MPNN $f_{\textsf{MPNN,2}}$ of the form \cref{eq:qian_qp_init,eq:chen_qp_mp,eq:chen_qp_pred}, such that
\begin{equation*}
    P\left[ \lVert f_{\textsf{MPNN,2}}\left(I, \vec{x}^{(0)}\right) - \left(\vec{x}^* - \vec{x}^{(0)} \right) \rVert_2 > \delta \right] < \epsilon.
\end{equation*}
\end{theorem}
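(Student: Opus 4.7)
The plan is to reduce the statement to the expressivity result of \citet{chen2024qp} by a straightforward augmentation. Since the target $\vec{x}^* - \vec{x}^{(0)}$ differs from $\vec{x}^*$ only by a quantity that is already available as an input feature on the variable nodes, predicting the displacement is no harder than predicting the optimal solution itself. Concretely, I will construct $f_{\textsf{MPNN,2}}$ by composing an approximator for $\vec{x}^*$ (in the sense of the reformulated Chen theorem) with a pointwise subtraction of $\vec{x}^{(0)}_v$ carried out in the readout.

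First, I would apply the reformulated theorem of \citet{chen2024qp} to the LCQP instance $I$ with tolerance $\delta$ and confidence $\epsilon$. This yields an MPNN $f_{\textsf{MPNN,1}}$ of the form \cref{eq:chen_qp_init,eq:chen_qp_mp,eq:chen_qp_pred} satisfying $P[\lVert f_{\textsf{MPNN,1}}(I) - \vec{x}^*\rVert_2 > \delta] < \epsilon$. Next, I modify the initialization on the variable side to the augmented form $\vec{h}_v^{(0)} \coloneqq \textsf{INIT}_{\text{v}}(\vec{c}_v, \vec{x}_v^{(0)})$, where $\textsf{INIT}_{\text{v}}$ is chosen to concatenate the Chen initialization $\textsf{INIT}_{\text{v}}^{\mathrm{Chen}}(\vec{c}_v)$ with the scalar $\vec{x}_v^{(0)}$ in a fresh coordinate. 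The constraint-side initialization is unchanged (it does not depend on $\vec{l},\vec{u}$, which are absent in our standard form \cref{eq:standard_qp} modulo slack-variable reformulation).

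The message-passing layers are then chosen to act on the Chen coordinates exactly as in $f_{\textsf{MPNN,1}}$ while preserving the extra $\vec{x}^{(0)}_v$ coordinate via an identity pass-through in $\textsf{UPD}_{\text{v}}^{(l)}$; an MLP with ReLU activations can realize such a block-identity on one coordinate by the universal approximation theorem, so this fits the framework of \cref{eq:chen_qp_mp}. At the readout \cref{eq:chen_qp_pred}, I define $f_{\textsf{MPNN,2}}$ so that its variable output is $f_{\textsf{MPNN,1}}(I)_v - \vec{x}_v^{(0)}$; again this can be realized by an MLP that composes the Chen readout with a linear subtraction on the preserved coordinate. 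Then the triangle inequality gives $\lVert f_{\textsf{MPNN,2}}(I,\vec{x}^{(0)}) - (\vec{x}^* - \vec{x}^{(0)})\rVert_2 = \lVert f_{\textsf{MPNN,1}}(I) - \vec{x}^*\rVert_2$, and the probability bound transfers verbatim.

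The only mild obstacle is bookkeeping: one must check that the pass-through of $\vec{x}^{(0)}$ through $L$ layers and its affine combination with the Chen readout can be jointly approximated to within arbitrary precision by the MLP-parametrised $\textsf{UPD}$ and $\textsf{READOUT}$ functions, so that the resulting object still formally satisfies \cref{eq:qian_qp_init,eq:chen_qp_mp,eq:chen_qp_pred}. This follows from standard universal approximation arguments on compact sets, together with absorbing the approximation error into the $\delta$ budget by splitting it into two halves (one for the displacement approximation, one for the subtraction block), which is the same device used in \citet{chen2024qp}. No new ideas beyond that theorem are required.
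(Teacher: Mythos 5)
Your proposal is correct and follows essentially the same route as the paper: absorb the now-constant inputs $\circ,\vec{l},\vec{u}$ into the biases, concatenate $\vec{x}^{(0)}_v$ as an extra coordinate, carry it through the layers, and combine it with the Chen-style output at the readout. The only difference is that you invoke universal approximation (and a split of the $\delta$ budget) to realize the identity pass-through, whereas the paper gets it \emph{exactly} by padding the weight matrices with a block identity and observing that ReLU acts as the identity on the preserved coordinate precisely because $\vec{x}^{(0)} \geq \bm{0}$ --- which is why that hypothesis appears in the statement and why no extra approximation error needs to be budgeted.
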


\begin{proof}
For now, we assume the MLPs in $f_{\textsf{MPNN,1}}$ all have the form 
\begin{equation*}
\textsf{MLP}(\vec{x}) \coloneq \vec{W}^{(L)} \sigma \left( \cdots \sigma \left( \vec{W}^{(1)} \vec{x} + \vec{w}^{(1)} \right) \right) + \vec{w}^{(L)},
\end{equation*}
where $\sigma$ is a ReLU activation function not applied to the last layer.

Let us remove the dependency on $\circ, \vec{l}, \vec{u}$. In our case, we restrict the problem instances to have equality constraints, and the bound of variables is always $[0, +\infty)$. Therefore, the $\circ_c, c \in C(I)$ and $\vec{l}_v, \vec{u}_v, v \in V(I)$ are constant for all the variables and constraints, for all the instances, not contributing to the distinguish of variables or QP instances. 
We can formulate the initialization in \cref{eq:chen_qp_init} as
\begin{equation*}
\vec{h}_c^{(0)} \coloneqq \textsf{INIT}_{\text{c},1} \left( \vec{b}_c, \circ_c \right) = \vec{W}_{(1)}^{(L)} \sigma \left( \cdots \sigma \left( \vec{W}_{(1)}^{(1)} \vec{b}_c + \text{const} + \vec{w}^{(1)}_{(1)} \right) \right) + \vec{w}^{(L)}_{(1)}.
\end{equation*}
Therefore, it is clear that we can find an initialization $\textsf{INIT}_{\text{c},1'}$ that has a bias $\vec{w}^{(1)}_{(1')} \coloneq \vec{w}^{(1)}_{(1)} + \text{const}$, while other parameters remaining the same, that can recover $\textsf{INIT}_{\text{c},1}$ without the input $\circ_c$. Similarly, we can construct $\textsf{INIT}_{\text{v},1'}$ that takes $\vec{c}_v$ only as the input. 

Now, we would like to have further $\vec{x}^{(0)}$ as the input and add it to the output of $f_{\textsf{MPNN,1}}$. We concatenate $\vec{x}_c^{(0)}$ to $\vec{c}_v$, and we construct the initialization as following,
\begin{equation*}
\begin{aligned}
& \textsf{INIT}_{\text{v},2} \left( \vec{c}_v, \vec{x}^{(0)}_c \right) \\
& = \begin{bmatrix} \vec{W}_{(L)}^{(1)} & 0 \\ 0 & 1 \end{bmatrix} \sigma \left( \cdots \sigma \left(  \begin{bmatrix} \vec{W}_{(1)}^{(1)} & 0 \\ 0 & 1 \end{bmatrix} \begin{bmatrix} \vec{c}_v \\ \vec{x}^{(0)}_c \end{bmatrix} + \begin{bmatrix} \vec{w}^{(1)}_{(1')} \\ 0 \end{bmatrix} \right) \right) + \begin{bmatrix} \vec{w}^{(L)}_{(1)} \\ 0 \end{bmatrix} \\
& = \begin{bmatrix} \vec{h}_v^{(0)} \\ \vec{x}^{(0)}_c \end{bmatrix}.
\end{aligned}
\end{equation*}
The final equation holds, for $\vec{x}^{(0)}$ is assumed to be non-negative $\vec{x}^{(0)} \geq \bm{0}$, therefore the ReLU activation function does not have any effect. For the initialization of $\vec{h}_c^{(0)}$, we can concatenate a zero vector to the input $\vec{b}_c$ and modify the weight matrices and biases in the same way and obtain $\begin{bmatrix} \vec{h}_c^{(0)} \\ 0 \end{bmatrix}$.
For the following message passing steps, we also pad the weight matrices $\vec{W}$ and biases $\vec{w}$ so that the information $\vec{x}^{(0)}$ is carried to the end of embedding $\begin{bmatrix} \vec{h}_v^{(L)} \\ \vec{x}^{(0)}_c \end{bmatrix}$. 
Finally, we need to modify the last layer of the $\textsf{READOUT}$ function by modifying the weights and biases the same way and padding a ones vector column at the last weight matrix $\begin{bmatrix} \vec{W}^{(L)} & \bm{1} \end{bmatrix}$. Therefore, the input $\vec{x}^{(0)}$ will be finally added to the output, i.e.,
\begin{equation*}
f_{\textsf{MPNN,2}}\left(I, \vec{x}^{(0)}\right) = f_{\textsf{MPNN,1}}(I) + \vec{x}^{(0)}.
\end{equation*}
Hence, we immediately have the conclusion that $f_{\textsf{MPNN,2}}$ can approximate $\vec{x}^* - \vec{x}^{(0)}$ arbitrarily well.
\end{proof}

\end{document}